\newcites{bench}{Benchmark Survey}
\newcommand{\vx}{\bm{x}}
\newcommand{\vy}{\bm{y}}
\newcommand{\vtheta}{\bm{\theta}}
\theoremstyle{plain}
\newtheorem{theorem}{Theorem}[section]
\newtheorem{lemma}[theorem]{Lemma}
\newtheorem{corollary}[theorem]{Corollary}
\theoremstyle{definition}
\newtheorem{definition}[theorem]{Definition}
\theoremstyle{remark}
\icmltitlerunning{Out-of-Domain Generalization in Dynamical Systems Reconstruction}
\begin{document}

\twocolumn[
\icmltitle{Out-of-Domain Generalization in Dynamical Systems Reconstruction}

\begin{icmlauthorlist}
\icmlauthor{Niclas Göring}{zi,hd}
\icmlauthor{Florian Hess}{zi,hd}
\icmlauthor{Manuel Brenner}{zi,hd}
\icmlauthor{Zahra Monfared}{zi}
\icmlauthor{Daniel Durstewitz}{zi,hd,iwr}
\end{icmlauthorlist}

\icmlaffiliation{zi}{Department of Theoretical Neuroscience, Central Institute of Mental Health, Medical Faculty Mannheim, Heidelberg University, Mannheim, Germany}
\icmlaffiliation{hd}{Faculty of Physics and Astronomy, Heidelberg University, Heidelberg, Germany}
\icmlaffiliation{iwr}{Interdisciplinary Center for Scientific Computing, Heidelberg University, Heidelberg, Germany}

\icmlcorrespondingauthor{Niclas Göring}{niclas.goering@gmail.com}
\icmlcorrespondingauthor{Daniel Durstewitz}{Daniel.Durstewitz@zi-mannheim.de}

% You may provide any keywords that you
% find helpful for describing your paper; these are used to populate
% the "keywords" metadata in the PDF but will not be shown in the document
\icmlkeywords{Out-of-distribution generalization, dynamical systems, multistability, chaos, statistical learning theory, Recurrent Neural Networks, reservoir computing, Neural ODE, SINDy}

\vskip 0.3in
]

\printAffiliationsAndNotice{}

\begin{abstract}
In science we are interested in finding the governing equations, the dynamical rules, underlying empirical phenomena. While traditionally scientific models are derived through cycles of human insight and experimentation, recently deep learning (DL) techniques have been advanced to reconstruct dynamical systems (DS) directly from time series data. State-of-the-art dynamical systems reconstruction (DSR) methods show promise in capturing invariant and long-term properties of observed DS, but their ability to generalize to unobserved domains remains an open challenge. Yet, this is a crucial property we would expect from any viable scientific theory. In this work, we provide a formal framework that addresses generalization in DSR. We explain why and how out-of-domain (OOD) generalization (OODG) in DSR profoundly differs from OODG considered elsewhere in machine learning. We introduce mathematical notions based on topological concepts and ergodic theory to formalize the idea of learnability of a DSR model. We formally prove that black-box DL techniques, without adequate structural priors, generally will not be able to learn a generalizing DSR model. We also show this empirically, considering major classes of DSR algorithms proposed so far, and illustrate where and why they fail to generalize across the whole state space. Our study provides the first comprehensive mathematical treatment of OODG in DSR, and gives a deeper conceptual understanding of where the fundamental problems in OODG lie and how they could possibly be addressed in practice.
\end{abstract}

\section{Introduction}
The majority of complex systems we encounter in physics, biology, the social sciences, and beyond, can mathematically be described as systems of differential equations, whose behavior is the subject of dynamical systems theory (DST). Deriving accurate mathematical models of natural (or engineered) systems from observations for mechanistic insight, scientific understanding, and prediction, is the core of any scientific discipline. Recent years have seen a plethora of advances in the field of DS reconstruction (DSR) mostly based on deep learning (DL) approaches for inferring DS models directly from time series data and thus partly automatizing the scientific model building process \cite{brunton_discovering_2016, raissi_multistep_2018, vlachas_data-driven_2018, platt2021robust, brenner_tractable_2022, vlachas2022multiscale, hess_generalized_2023}. Like any good scientific theory, a proper DS model inferred from data should be able to generalize to novel domains (dynamical regimes) not observed during training. Here we develop a principled mathematical framework for out-of-domain (OOD) generalization (OODG) in DSR. We mathematically and numerically demonstrate that  
current data-driven SOTA methods for DSR hit fundamental limits regarding OODG, and provide some directions of how these could potentially be addressed. 
\begin{figure*}[!htb]
\begin{center}
\includegraphics[width=0.99\textwidth]{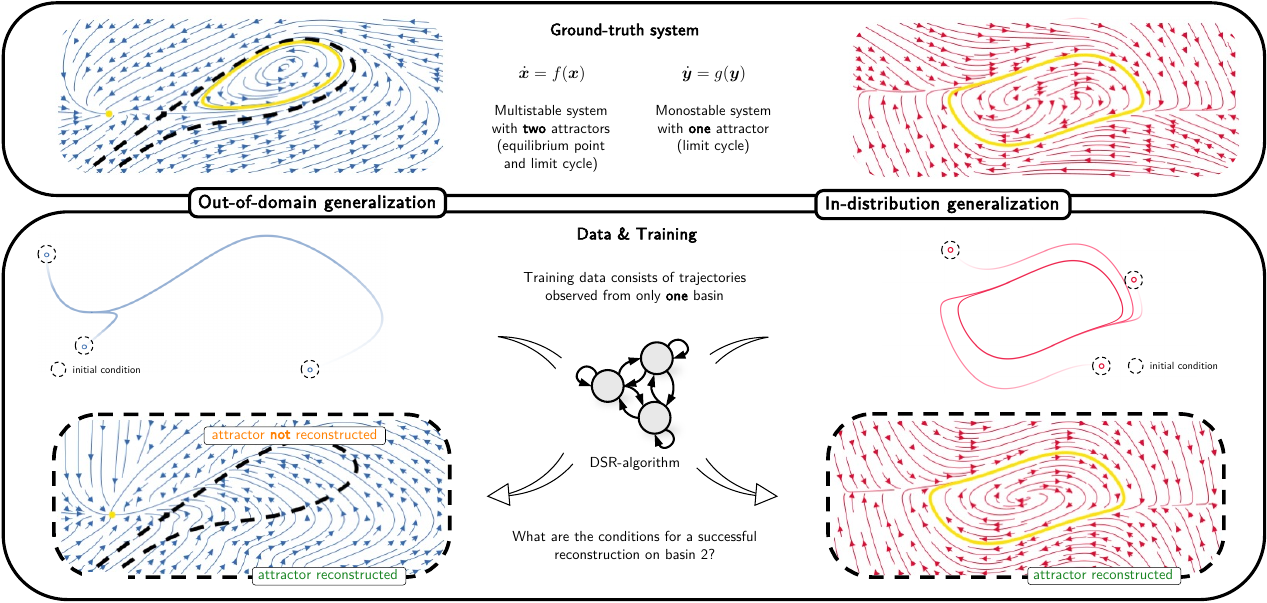}
\end{center}
\caption{In-distribution generalization within one basin (right; van-der-Pol oscillator) vs. OODG across basins (left; neuron model with a limit cycle corresponding to spiking activity and an equilibrium point corresponding to the resting potential).}
\label{fig1_true}
\normalsize
\end{figure*}
\paragraph{Current state of DSR}
Current DSR models attempt to either approximate the underlying system's vector field \cite{brunton_discovering_2016}, or try to directly learn the flow (solution) operator of the data-generating DS \cite{lu2019deeponet, vlachas_backpropagation_2020, li2020fourier, brenner_tractable_2022,hess_generalized_2023, chen2023deep}. More specifically, DSR methods have been developed based on symbolic regression \cite{brunton_discovering_2016, d2023odeformer}, on various forms of recurrent neural networks (RNNs) equipped with special training algorithms \cite{vlachas_data-driven_2018, brenner_tractable_2022,hess_generalized_2023}, on ordinary or partial differential equation (ODE/PDE)-based DL models such as Neural ODEs (N-ODE; \citet{chen2018neural,ko_homotopy-based_2023}), on operator theory \citep{lu2019deeponet, li2020fourier, chen2023deep} or on reservoir computing (RC; \citet{pathak_using_2017, verzelli2021learn, platt_systematic_2022,platt2023constraining}). State-of-the-art (SOTA) methods \cite{brenner_tractable_2022,platt2023constraining,hess_generalized_2023,jiang2023training} can generalize beyond the observed time horizon and capture an underlying system's dynamical invariants and long-term properties, like the geometry of an attractor trajectories are converging to (Fig. \ref{fig:fig_1_panel}), while providing accurate short-term forecasts on in-distribution 
test data. However, the current field of DSR has overwhelmingly focused on synthetic benchmark systems, e.g. given by low-order polynomial ODE and PDE systems, mostly in regimes where either only one (globally) attracting object exists in state space, as in the chaotic Lorenz system for common parameter settings \citep{lorenz1963deterministic}, or at least reconstruction within just one dynamical regime was sought. Only a small number of studies considered experimental data, and even less consider systems which may harbor multiple attractor objects simultaneously, so-called \textit{multistability}  (Fig. \ref{fig1_true}, left; for a detailed overview of current benchmarks in use, see Appx. \ref{appx:benchmarks}).

\paragraph{An unresolved challenge in DSR: generalization to unobserved dynamical regimes}
While current SOTA methods for DSR may generalize to nearby initial conditions close to the domain covered by the training data, \textit{which ultimately converge into the same limit set}, at current their ability to generalize to unobserved regions of state space is either not given or remains unexplored (Fig. \ref{fig1_true}). Generalization across the whole state space of the DS, or scientifically relevant portions of it, is, however, a feature any sound scientific theory should possess. The OODG problem is exacerbated in the presence of multistability, i.e., if multiple dynamical objects coexist in the same DS. In fact, even the simplest examples of low-dimensional nonlinear DS, like a damped-driven pendulum, often have multistable regimes. The problem is also of high practical relevance, as most complex DS encountered in nature and society are likely extensively multistable, with examples ranging from neuroscience \cite{schiff_controlling_1994, durstewitz_neurocomputational_2000,izhikevich_dynamical_2007, khona_attractor_2022}, optics \cite{optic1}, chemistry \cite{chemistry1}, biology \cite{biology1}, ecology \cite{ecology1}, to financial markets \cite{finance1} and climate science \cite{climate2}. For such DS, crossing the boundaries between different dynamical regimes, as induced by noise or external inputs, may lead to qualitatively completely different behavior (Fig. \ref{fig1_true}, left; Fig. \ref{fig_1_old_multistability}). 

\paragraph{Current approaches toward OODG in DSR}
Several recent studies at least partially or implicitly address the question of OODG and multistability in DSR. For instance, in \citet{ghadami_forecasting_2018, patel_using_2022, bury_predicting_2023}, the authors attempt to anticipate tipping points in non-autonomous DS and predict the post-tipping point dynamics. A related topic is the forecasting of extreme events \cite{farazmand_extreme_2018,guth_machine_2019, qi_using_2020}, which are events that are not, or only very sparsely, represented in the training data, thus constituting a form of generalization. Others consider learning DS across multiple environments defined by different parameter settings \cite{yin2022leads, bereska_continual_2022}. However, essentially all this work implicitly or explicitly assumed some observations from the domain on which generalization is sought to be available (i.e., reflecting more a form of transfer learning rather than true OODG; \citet{kirchmeyer_generalizing_2022, yin2022leads}). Another strategy to enable better generalization is to include physical domain knowledge into the model formulation or loss function, as in physics-informed neural networks (PINNs) \cite{raissi_physics-informed_2019, mehta_neural_2021, subramanian_probabilistic_2023, mouli_metaphysica_2023}, or directly ground-truth parameters of the DS studied \cite{fotiadis_disentangled_2023}. This, again, assumes we already have prior knowledge about the domain which we would like to generalize to. 

Promoting data-driven DSR models to viable scientific theories of complex systems requires a thorough understanding of whether, how, and when reconstructions generalize to the entire state space in the common empirical scenario where the measurements sample only a limited portion of that space (Fig. \ref{fig1_true}, left). This leads us to our central research question: what are the precise mathematical conditions that allow for successful reconstruction of a DS on the whole state space? 

\section{Dynamical Systems Background}
\subsection{Dynamical Systems} \label{subsec_DSconcepts}
A DS is generally comprised of a state space $M \subseteq \mathbb{R}^n$, a set of times $\mathcal{T} \subseteq \mathbb{R}$, and an evolution law. Here we focus on continuous-time systems described by ODEs 
\begin{align}\label{eq_DS}
 \dot{\bm{x}} \, = \, f(\bm{x}), \hspace{1cm} \bm{x} \in M \subseteq \mathbb{R}^n , 
\end{align}
where $f \in \mathcal{X}^1 (M)$ is a vector field (VF) from the set of functions with continuous first derivative on the (compact, metric, measurable) state space $M$. 
The VF gives rise to the evolution operator $\Phi:\mathcal{T} \times M \rightarrow M$ that maps some initial condition $\bm{x}_{0}$ to the state $\bm{x}_t$ at time $t$ 
\cite{Kuznetsov}:
\begin{equation}\label{eq:flow_operator}
\bm{x}_t = \Phi(t, \bm{x}_{0}) .
\end{equation}

\subsection{Dynamical Systems Reconstruction (DSR)} \label{subsec_DSRecons}
In data-driven DSR, the aim is to infer from time series observations a \textit{generative model} of the true underlying system, approximating either its vector field $f \in \mathcal{X}^1 (M)$ or evolution operator $\Phi(t,\bm{x})$ (hence its governing equations) given an inference algorithm from a hypothesis class $\mathcal{H}$. This goes beyond mere time series forecasting, in that we require the model to also capture dynamical invariants, that is long-term statistics and topological properties, of the underlying system. Thus, after training a DSR model should ideally be topologically conjugate to the true system and capable of producing trajectories with the same temporal and geometrical structure as those of the true system (Fig. \ref{fig:fig_1_panel}; \citet{platt_systematic_2022, platt2023constraining, hess_generalized_2023}). Note that this subsumes various more specific goals one may have in time series modeling and DS analysis. For instance, a proper DSR model should also provide excellent time series predictions, while, vice versa, a model optimized for time series prediction would not necessarily reproduce invariant statistics and the geometry and topology of a system's attractors. 

\subsection{Measure Theoretic Aspects}
\label{subsec_measure}
Measure theoretical approaches investigate the long-term statistical properties of DS, the subject of ergodic theory \cite{eckmann_ergodic_1985}. Specifically, there is a stable statistical property called the \textit{(average) occupation measure}, defined as
%-------------------
\begin{align}\label{}
    \mu_{\bm{x}_{0}, T}(B)  
     \, = \, \frac{1}{T} \int_{0}^{T} \mathbbm{1}_{B}(\bm{x}(s)) \, d s
\end{align}
%--------------------
where $\bm{x}(t)$ is a trajectory with $t \in [0, T]$, starting from $\bm{x}_0$, $B \subset \mathbb{R}^n$ is some Borel measurable set, and $\mathbbm{1}_{B} \,$ denotes an indicator function that maps elements of the set $B$ to one, and all other elements to zero. Intuitively, $ \mu_{\bm{x}_{0}, T}(B)$ measures the amount of time the trajectory $\bm{x}(t)$ spends in the set $B$.

\subsection{Topological Aspects and Attractors} 
\label{subsec_attract}
Another way to capture the long-term behavior of a DS is by studying special sets, so called invariant sets, which describe the `anatomy' of a DS. A set $U \subseteq M $ is called invariant under the flow, if $\Phi(t, U) \subseteq U \quad 
\forall t$. We can associate any point in state space with a special invariant set capturing its long-term behavior, the so-called $\omega$-limit set
\begin{align}
   \omega(\bm{x},\Phi)= \bigcap_{s \in \mathbb{R}} \overline{\{\Phi(t, \bm{x})| t>s\}}, 
\end{align}
where the overline denotes closure. If there is a set of points in state space with non-zero measure that have the same $\omega$-limit set, it is called an attractor. 
More formally: 
\begin{definition}
       An \textit{attractor} \cite{milnor_concept_1985, perko1991differential} is a closed invariant set $A \subseteq M$ such that there exists an open and forward-invariant set $B(A)= \{\bm{x} \in M| \omega(\bm{x},\Phi) \subseteq A\}$, called the \textit{basin of attraction}, with $\omega(B(A))=A$. We further require that $A$ is minimal (i.e., there is no proper subset with that same property). 
\end{definition} 
Stable equilibrium points, limit cycles and chaotic (`strange') sets are examples of attractors with increasingly complicated topology (see Fig. \ref{app_sec_attract}).  

\section{A General Framework for OODG in DSR} \label{sec_3}
\subsection{Multistability Induces Distribution Shifts}\label{sec:multistability_implies}
In statistical learning theory, out-of-sample generalization, and -- more importantly here -- OODG, is already quite well-studied (for a detailed treatment, see Appx. \ref{appx:oodg_statistical}, \citet{ben-david_theory_2010}). Generally, one assumes to have observed a set of training domains $E$ on which the data is distributed according to some domain specific distribution $p^e, \ e \in E$. The goal is to learn a function $\hat{f} \in \mathcal{F}$ in hypothesis class $\mathcal{F}$ that yields minimum risk $R^{e_{\mathrm{test}}} (\hat{f} )  :=  \mathbb{E}_{(\vx, \vy) \sim p^{e_{\mathrm{test}}}} [\ell(\hat{f} (\vx), \vy)]$ on an unseen test domain $e_{\mathrm{test}} \notin E$, where the data is distributed according to $p^{e_{\mathrm{test}}}$. Since $p^{e_{\mathrm{test}}}$ is unknown, one estimates the respective prediction error by taking the average loss across all empirically accessible domains as empirical risk:
\begin{align}
R^E_{\mathrm{emp}}(\hat{f} ) \, = \, \frac{1}{|E|}\sum_{e\in E} \frac{1}{n_e}\sum_{i=1}^{n_{e}} \ell( \hat{f} (\bm{x}^e_{i}), \bm{y}^e_{i}).    
\end{align}
\citet{yin_leads_2022} extend this definition to time-series data: 
\begin{align} \label{eq:test_risk}
    R^E_{\mathrm{emp}}(\Phi_R)= \frac{1}{|E|}\sum_{e\in E} \frac{1}{T_e}\sum_{t=1}^{T_{e}} \ell (\Phi_R(t,\vx_0^e), \bm{x}_{t}^e) ,
\end{align}
where $|E|$ is the number of domains and $T_e$ time series length.\footnote{For chaotic systems, both stationary and non-stationary probability distributions are possible \cite{parthasarathy_probability_1998}.} \citet{yin_leads_2022} mainly associate these domains with different parameter settings of the ground-truth system. For multistable DS, however, the different domains obtain a very natural interpretation.

A DS is called \textit{multistable} if it has at least two attractors coexisting in its state space (Fig. \ref{fig1_true}, left). In this study, we generally assume that multistable systems allow for a decomposition of their state space into $n$ disjoint basins of attraction \cite{milnor_concept_1985}:
\begin{align}
\label{eq_statespacedecomp}
M = \sqcup_{e=1}^n B(A_e) \sqcup \Tilde{M} \hspace{.5cm} 
\textnormal{such that} \hspace{.5cm} \mu\big(\Tilde{M}\big) = 0,
\end{align}
where $\mu$ is the Lebesgue measure. It is thus natural to define the domains in Eq. \eqref{eq:test_risk} as the different basins of attraction, each of which belongs to a different attractor. Different attractors generally give rise to different long-term dynamics with different topology (Fig. \ref{fig1_true}, left; Fig. \ref{fig_1_old_multistability}), governed by different physical measures (see Appx \ref{pg:physical_measure}). 
Hence, in each basin the trajectories follow a different statistical law (regarding their long-term evolution), implying that the challenge of reconstructing multistable DS is ultimately the same as that of understanding OODG in DSR. %\\
In monostable DS, in contrast, each trajectory in the state space is governed by the same long-term statistics (Fig. \ref{fig1_true}, right). Hence, one (sufficiently long) trajectory is already enough to specify the dynamics on the attractor. Accordingly, generalization for monostable systems essentially comes down to classical in-distribution (out-of-sample) generalization. Note that this is also true for chaotic attractors. Indeed, as illustrated in Fig. \ref{fig:fig_1_panel} (and amply demonstrated in the literature, e.g. \citet{mikhaeil_difficulty_2022, brenner_tractable_2022, hess_generalized_2023}), current SOTA methods fare very well on even short trajectories from complex chaotic systems, as long as these are monostable (or sufficient information from all basins of attraction is available, see Fig. \ref{fig_bench_lorenz_full}).
Yet, the DSR field overwhelmingly so far focused on just monostable hyperbolic attractor systems as benchmarks (Appx. \ref{appx:benchmarks}), making OODG for DSR an essentially unstudied problem.

\subsection{OODG Error}
\label{subsec_3_2}
The most commonly employed loss function $\ell$ is the mean-squared-error (MSE), derived from the maximum likelihood principle assuming i.i.d. Gaussian model residuals. While this is still the default when it comes to training RNNs, N-ODEs or RCs, it cannot be used to \textit{assess} the reconstruction for three reasons: 1) The MSE breaks down as a suitable test loss in Eq. \eqref{eq:test_risk} because of exponential trajectory divergence in chaotic DS \cite{wood_statistical_2010,koppe_identifying_2019}. Even for a perfectly reconstructed DS, numerical uncertainties in the initial conditions or small amounts of noise quickly lead to large prediction errors. This is accounted for in training methods like generalized teacher forcing \cite{hess_generalized_2023}; 2) The MSE does not capture any long-term, invariant or topological properties of the DS and its reconstruction. As discussed in Sect. \ref{subsec_measure} \& \ref{subsec_attract}, these are the central mathematical tools to study DS; 3) The MSE is not guaranteed to be sensitive to multistability, yet this property of a measure is much needed in light of Sect. \ref{sec:multistability_implies}. We thus propose a novel way of assessing generalization across state space by defining a statistical and a topological error that are provably sensitive to multistability (Theorem \ref{th_sensmulti}).

\paragraph{Statistical error}
As the MSE is not a useful quantity for comparing (chaotic) trajectories, we define the statistical error through the sliced Wasserstein-1 distance ($\textrm{SW}_1$; \citet{bonneel2015sliced}) between the occupation measures $\mu_{\bm{x}, T}^{\Phi}$ of the ground-truth DS $\Phi$ and $\mu_{\bm{x}, T}^{\Phi_R}$ of the DSR model $\Phi_R$:
\begin{align}\label{eq:swd}
\textrm{SW}_1(\mu_{\bm{x}, T}^{\Phi}, \mu_{\bm{x}, T}^{\Phi_R})
    = \mathbb{E}_{\xi \sim \mathcal{U}(\mathbb{S}^{n-1})}\left[W_1(g_{\bm{\xi}}\sharp\mu_{\bm{x}, T}^{\Phi}, g_{\bm{\xi}}\sharp\mu_{\bm{x}, T}^{\Phi_R})\right],
\end{align}
where $\mathbb{S}^{n-1} := \{\bm{\xi} \in \mathbb{R}^n \mid \lVert\bm{\xi}\rVert^2_2 = 1\}$ is the unit hyper-sphere, $g_{\bm{\xi}}\sharp\mu$ denotes the pushforward of $\mu$, $g_{\bm{\xi}}(\bm{x}) = \bm{\xi}^T\bm{x}$ is the one-dimensional slice projection, and $W_1$ the Wasserstein-1 distance \citep{villani2009optimal}. Since the expectation in Eq. \eqref{eq:swd} is intractable, it is commonly approximated by Monte Carlo sampling (see Appx. \ref{appx:dstat_details} for computational details). 
\begin{definition}
The statistical error $\mathcal{E}_{\mathrm{stat}}$ is defined as 
\begin{equation}\label{eq:dstat}
\mathcal{E}_{\mathrm{stat}}^U \big( \Phi_R  \big) := \int_{U \subseteq M}{\textrm{SW}_1(\mu_{\bm{x}, T}^{\Phi}, \mu_{\bm{x}, T}^{\Phi_R}) \ d \bm{x}},
\end{equation}
which integrates across initial conditions from a subset $U$ of state space.
\end{definition}

\paragraph{Topological error}\label{para_top}
An important concept to assess if two DS agree in their topology is \textit{topological equivalence}. Two DS are called topologically equivalent if there exists a homeomorphism between the two system's orbits preserving the direction of flow (see Appx. \ref{def_topo_equivalence} for more details). 
However, this homeomorphism is usually not known a priori and hard to access numerically. Hence, we will replace the condition of topological equivalence with three weaker conditions based on the Lyapunov spectrum, which contains topological and stability information about limit sets of a DS. Let us denote the ordered, largest $n$ Lyapunov exponents of limit sets $\omega(\vx, \Phi)$ and $\omega(\vx, \Phi_R)$ by $ \, \lambda_1 \leq  \lambda_2 \leq \cdots \leq  \lambda_n $ and $ \, \lambda^R_1 \leq  \lambda^R_2 \leq \cdots \leq  \lambda^R_n \, $, respectively. First, we require that all Lyapunov exponents agree in their signs, $sgn(\lambda_i) = sgn (\lambda^R_i) \, \forall i$. Second, we demand that the maximum Lyapunov exponent is close in relative error, $|\lambda_n - \lambda^R_n | \ / \ |\lambda_n| < \varepsilon_{\lambda_n}$, where $\varepsilon_{\lambda_n}$ is a tolerance. Lastly, the limit sets need to be close in state space, $d_H(\omega(\vx, \Phi_R ),  \, \omega(\vx, \Phi)) < \varepsilon_{d_H}$, 
assessed through the Hausdorff distance (see Appx. \ref{appx_toperror} for more details). We then define an indicator function on $M$, $\mathbbm{1}_{\Phi_R}(\vx)$, which is equal to $1$ for a given point $\vx \in U \subseteq M$ iff the associated limit sets fulfill all of the three conditions above.

\begin{definition}
We define the topological generalization error on $U \subseteq M$ as
\begin{equation}\label{eq:etop}
\mathcal{E}^U_{\text{top}}(\Phi_R) = 1 - \frac{1}{\text{vol}(U)} \int_{U \subseteq M} \mathbbm{1}_{\Phi_R}(\vx) \, d\vx.
\end{equation}
\end{definition}
In the following, we will use $\mathcal{E}_{\mathrm{gen}}$ as a placeholder for both $\mathcal{E}_{\mathrm{top}}$ and $\mathcal{E}_{\mathrm{stat}}$, and statements involving $\mathcal{E}_{\mathrm{gen}}$ must hold for \textit{both} errors.\footnote{Note that $\mathcal{E}_{\mathrm{top}}$ and $\mathcal{E}_{\mathrm{stat}}$ are solely theoretical constructs we introduce to formalize the OODG in DSR problem, not loss functions to be used in training.} These errors are highly sensitive to a failure to reconstruct multistable systems: 

\begin{theorem}\label{th_sensmulti}
Assume $\Phi$ is multistable with decomposition as in Eq. \eqref{eq_statespacedecomp} and connected basins, and there exists one attractor $A_k, \ k \leq n$,  not reconstructed by $\Phi_R$. Then, the generalization error of $\Phi_R$ is proportional to the volume of the basin of this non-reconstructed attractor:
\begin{align}
\mathcal{E}^{M_{\mathrm{test}}}_{\mathrm{gen}}(\Phi_R) \propto \operatorname{vol}(B(A_k)).
\end{align}
This statement naturally generalizes to the case of multiple non-reconstructed attractors (with different proportionality constants).
\end{theorem}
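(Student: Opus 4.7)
The plan is to exploit the basin decomposition from Eq.~\eqref{eq_statespacedecomp}, splitting the integral defining $\mathcal{E}^{M_{\mathrm{test}}}_{\mathrm{gen}}(\Phi_R)$ into contributions from the individual basins $B(A_e)$ (the remainder $\tilde M$ contributes nothing since $\mu(\tilde M) = 0$). Within each connected basin, every point shares the same $\omega$-limit set under $\Phi$, so the integrand is effectively determined by whether that basin's attractor is reconstructed. The key observation is that if $\Phi_R$ fails to reconstruct $A_k$, this failure propagates uniformly to every initial condition in $B(A_k)$, since every such trajectory of $\Phi$ asymptotes to $A_k$.

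For the topological error, I would argue that the indicator $\mathbbm{1}_{\Phi_R}(\vx)$ vanishes on $B(A_k)$: since $A_k$ is not among the reconstructed limit sets, at least one of the three defining conditions (matching signs of Lyapunov exponents, closeness of the maximal exponent, Hausdorff closeness of $\omega$-limit sets) must fail for every $\vx \in B(A_k)$. On the remaining basins, assumed to be correctly reconstructed, $\mathbbm{1}_{\Phi_R}(\vx) = 1$. Combining and dividing by $\mathrm{vol}(M_{\mathrm{test}})$ gives
\begin{equation*}
\mathcal{E}^{M_{\mathrm{test}}}_{\mathrm{top}}(\Phi_R) \;=\; \frac{\mathrm{vol}(B(A_k))}{\mathrm{vol}(M_{\mathrm{test}})},
\end{equation*}
which is proportional to $\mathrm{vol}(B(A_k))$ as claimed.

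For the statistical error, I would use that for any $\vx \in B(A_k)$ the ground-truth occupation measure $\mu^{\Phi}_{\vx,T}$ concentrates (as $T$ grows) on the physical measure of $A_k$, whereas $\mu^{\Phi_R}_{\vx,T}$ must concentrate on some reconstructed attractor disjoint from $A_k$. Since $\mathrm{SW}_1$ metrizes transport between supports, it is bounded below by a positive constant $c$ reflecting the minimal distance between $A_k$ and the attractor set of $\Phi_R$. On the correctly reconstructed basins, the corresponding $\mathrm{SW}_1$ contribution is negligible, so integrating over $M_{\mathrm{test}}$ yields $\mathcal{E}^{M_{\mathrm{test}}}_{\mathrm{stat}}(\Phi_R) \approx c \cdot \mathrm{vol}(B(A_k))$. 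The \emph{main obstacle} is exactly this step: one has to control $\mathrm{SW}_1$ uniformly in $\vx \in B(A_k)$, handling finite-$T$ transient effects (especially for initial conditions near basin boundaries where escape times are large) and ensuring the reconstructed attractors remain bounded away from $A_k$ in state space. Once this per-basin lower bound is established, the extension to multiple non-reconstructed attractors follows by summing the resulting contributions over the corresponding basins, with the constants $c_k$ depending on each attractor's geometric separation from the reconstruction.
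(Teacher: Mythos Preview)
Your overall strategy---split the integral by basins and show the integrand behaves uniformly on $B(A_k)$---matches the paper's. The difference lies in how the statistical error is handled. You try to establish a \emph{uniform} lower bound $c$ on $\mathrm{SW}_1(\mu^{\Phi}_{\vx,T},\mu^{\Phi_R}_{\vx,T})$ over $\vx\in B(A_k)$ and correctly flag this as the hard step (transients, boundary effects). The paper sidesteps this entirely: it only shows $\mathrm{SW}_1>0$ pointwise on $B(A_k)$ (by exhibiting a Borel set---an open neighborhood of the wrong attractor $A_j$---on which the two occupation measures disagree, then invoking that $\mathrm{SW}_1$ is a metric), and then applies the \emph{mean value theorem for integrals} on the connected, compact set $\overline{B(A_k)}$ to obtain $\mathcal{E}_{\mathrm{stat}}^{B(A_k)}=\mathrm{SW}_1(\mu^{\Phi}_{\vx',T},\mu^{\Phi_R}_{\vx',T})\cdot\mathrm{vol}(\overline{B(A_k)})$ for some $\vx'$. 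This gives exact proportionality without any uniform estimate, which is considerably cleaner than the route you outlined.

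For the topological error, your argument that ``at least one of the three conditions must fail for every $\vx\in B(A_k)$'' is too quick. If $A_j$ happens to lie geometrically close to $A_k$ and shares Lyapunov signs, all three conditions could in principle hold on a small portion of $B(A_k)$. The paper treats this explicitly: it isolates the Hausdorff-distance criterion, defines the exceptional set $S=\{\vy\in A_j\mid d_H(\{\vy\},A_k)\le\varepsilon_{d_H}\}$, and shows $\mathbbm{1}_{\Phi_R}$ vanishes on $B(A_k)\setminus\bigcup_{\vy\in S}D^n_{\varepsilon_{d_H}}(\vy)$, so the proportionality holds up to a correction of order $\mathrm{vol}(\bigcup_{\vy\in S}D^n_{\varepsilon_{d_H}}(\vy))$, negligible for reasonable $\varepsilon_{d_H}$. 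You should carry this caveat rather than assert the indicator vanishes everywhere on the basin.
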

%\vspace{-.3cm}
\begin{proof}
    See Appx. \ref{app_proofthmulti}.
\end{proof}

\subsection{OOD Learnability in DSR} \label{subsec_learn}
\textit{Learnability} is a fundamental concept in statistical learning theory \citep{vapnik_nature_2000,shalev-shwartz_learnability_2010}, with many different definitions advanced \cite{valle-perez_generalization_2020}. In its simplest form, a hypothesis class is called learnable if, for any distribution of training data, the error between the learned and ground-truth function decreases monotonically with sample size and converges to zero in the limit of infinitely many data points. This concept has been extended to OODG settings in \citet{fang_is_2023}. To apply these definitions to DSR, assume the state space segregates into $n$ basins (domains), Eq. \eqref{eq_statespacedecomp}, $|E| < n$ of which form the training domains $M_{\mathrm{train}} = \cup_{e \in E} B(A_e)$ and all others the test domains $M_{\mathrm{test}}$. For simplicity, we assume we have access to the data generating process $\Phi$ on $M_{\mathrm{train}}$, such that the training data can be expressed as $\mathcal{D} \subseteq \cup_{\vx_0 \in M_{\mathrm{train}}} \Phi(T,\vx_0)$ where $[0, \ T]$ is the time interval in which the trajectories are observed. 
In line with statistical DL theory, we further assume that $\mathcal{H}$ includes hypotheses consistent with both the training and test data \cite{vallepérez2019deep, belkin_fit_2021}. In other words, there exist models within $\mathcal{H}$ that, in theory, achieve zero reconstruction error on both the training and test domains (but in practice will depend on uncertainties introduced by the DSR algorithm). Denote by $\Theta_0=\{\vtheta \in \Theta|\mathcal{E}_{\mathrm{\mathrm{gen}}}^{M_{\mathrm{train}}}(\Phi_{\theta})= 0 \}$ the set of parameters associated with models having (near) zero reconstruction error on the training domain, and by $\mathcal{H}_0$ the corresponding set of DS.   
Then, learnability in DSR boils down to:
\begin{definition}
The OODG problem $(\mathcal{H},\mathcal{D})$ defined by hypothesis class $\mathcal{H}$ and dataset $\mathcal{D}$ is \textit{strictly learnable} if
\begin{align}
    \forall \ \Phi_R \in \mathcal{H}_0: \quad \mathcal{E}_{\mathrm{\mathrm{gen}}}^{M_{\mathrm{test}}}(\Phi_R)=0.
\end{align}
\end{definition}
Hence, the OODG-problem is \textit{strictly} learnable, if zero reconstruction error on the training domain leads to zero reconstruction error on the test domain.

For highly expressive hypothesis classes there can be multiple, if not infinitely many, models in $\mathcal{H}_0$ with different generalization errors on $M_{\mathrm{test}}$. If we assume we are dealing with a \textit{parameterized} function class $\mathcal{H}_{\theta}=\{\Phi_{\theta} | \vtheta \in \Theta \subset \mathbb{R}^P \}$, as practically the case in all DL \& DSR settings, the quantity of interest becomes the distribution of generalization errors of models in $\mathcal{H}_0$: 
\begin{definition}
We define the \textit{learnability}-distribution of the OODG problem $(\mathcal{H}_{\theta},\mathcal{D})$ as
\begin{align}\label{eq_learnab}
p(\varepsilon_{\mathrm{gen}}|\mathcal{D})=\frac{1}{\mathrm{vol}(\Theta_0)}\int_{\Theta_0} \mathbbm{1}{[ \mathcal{E}^{M_{\mathrm{test}}}_{\mathrm{\mathrm{gen}}}(\Phi_{\vtheta}) = \varepsilon_{\mathrm{gen}}  ]} d \vtheta ,
\normalsize
\end{align}
\end{definition}
the probability of a model with zero reconstruction error on the training domain having a generalization error of $\varepsilon_{\mathrm{gen}}$ on $M_{\mathrm{test}}$, where $\mathbbm{1} [\cdot]$ returns $1$ if the condition in square brackets holds and $0$ otherwise. The more mass the distribution has at zero, the better the problem is learnable. In the limit of  $p(\varepsilon_{\mathrm{gen}}|\mathcal{D})$ being fully concentrated at $\varepsilon_{\mathrm{gen}}=0$, the OODG problem becomes strictly learnable. Table \ref{table_oodg_comparison} summarizes the most important differences between OODG from the perspectives of standard statistical learning theory vs. of DST as advanced here. 

\section{Results}\label{sec_4}
The learnability of an OODG problem depends both on the hypothesis class $\mathcal{H}$ as well as the chosen prior in $\mathcal{H}$ through the training algorithm. Therefore, we will examine the following two scenarios: 
\vspace{-.2cm}
\begin{itemize}
    \item \textbf{Strong prior} (Sect. \ref{sec_strong}): Library-based algorithms such as SINDy \cite{brunton_discovering_2016} introduce a strong inductive bias by explicitly providing a function class for the underlying VF.
    \vspace{-.1cm}
     \item \textbf{No prior} (Sect. \ref{sec_no}): Approaches based on universal approximators of DS, like RNNs \cite{funahashi_approximation_1993, kimura1998learning, pmlr-v120-hanson20a}, do not incorporate any explicit prior (but may still introduce implicit priors through the choice of training algorithm and parameter initialization).
\end{itemize}

\subsection{Strong Prior: Methods Based on Predefined Function Libraries}\label{sec_strong}

Following the classical statistical approach of basis expansions \cite{hastie_elements_2009, durstewitz_advanced_2017}, some popular DSR methods rest on a predefined library of basis functions in the observables \cite{brunton_discovering_2016, reinbold_using_2020}, most prominently SINDy and its further developments \cite{brunton_discovering_2016, loiseau_constrained_2018, kaiser_sparse_2018, cortiella_sparse_2021, messenger_weak_2021,kaheman_automatic_2022}. These models usually are linear in the parameters, thus easing statistical inference. Since the library of functions needs to be specified a priori, these methods induce a strong inductive bias. A strong sparsity prior on the parameters, and -- correspondingly -- sparse regression methods like LASSO or sequential thresholding \cite{brunton_discovering_2016}, ensure that only a small subset of functions from the library is selected for modeling the vector field (for details on SINDy, see Appx. \ref{appx:sindy}). More formally, this defines the class of finite-dimensional 
linearly parameterized functions with $m$ differentiable basis functions $\psi_{i}: \mathbb{R}^n \rightarrow \mathbb{R},
i=1,\dots,m$,  
  \begin{align}\label{eq_funcsp}
  \scriptsize
         \mathcal{B}_L  =\bigg\{f_{j}(\bm{x} ; \bm{\theta})=\sum_{i=1}^m \theta_{i,j}\psi_{i}(x) \ \bigg|  \ \forall j ,   \bm{\theta} \in \mathbb{R}^{m \times n} \bigg\},
    \end{align}
    \normalsize
    where these basis functions may be arbitrarily chosen. 
    In this hypothesis class, one trajectory is sufficient to fully specify the DS, unless a certain uniqueness condition is violated: 
\begin{theorem}\label{th_sc1}
Let $f \in \mathcal{B}_L$ be a multistable VF, and assume SINDy (or related) is used to learn $\Phi_R$, including the right terms from $\mathcal{B}_L$ in its library. If there exists a trajectory  $\Gamma_{\bm{x}_0} \subset \mathcal{D}$ not solving an algebraic equation in the parameters of Eq. \eqref{eq_funcsp},  
then the OODG problem given by $(\mathcal{B}_L,\mathcal{D})$ is strictly learnable.

\end{theorem}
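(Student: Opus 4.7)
The plan is to show that, under the stated non-degeneracy condition, SINDy recovers the true vector field $f$ exactly from the single trajectory $\Gamma_{\bm{x}_0}$, so that the reconstructed flow $\Phi_R$ coincides with $\Phi$ globally on $M$, not just on $M_{\mathrm{train}}$. First I would unfold the SINDy inference step: evaluating $\dot{\bm{x}}$ and the library $\{\psi_i\}_{i=1}^m$ along $\Gamma_{\bm{x}_0}$ yields, coordinate by coordinate, the linear relation $\dot{x}_j(t) = \sum_{i=1}^m \theta_{i,j}\,\psi_i(\bm{x}(t))$, which must hold at every point of the trajectory. Because $f \in \mathcal{B}_L$ and the library contains all of its active terms by assumption, the true parameter matrix $\bm{\theta}^{\ast}$ corresponding to $f$ is at least one solution of this system.

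Next I would reduce strict learnability to uniqueness of this linear system. Suppose $\bm{\theta}'$ is a second solution; then $\bm{c} := \bm{\theta}' - \bm{\theta}^{\ast} \neq 0$ satisfies, for some coordinate $j$, the identity $\sum_{i=1}^m c_{i,j}\,\psi_i(\bm{x}(t)) = 0$ along the entire trajectory. This is precisely the statement that $\Gamma_{\bm{x}_0}$ solves a non-trivial algebraic equation (linear in the parameters $c_{i,j}$) built from the library — exactly what the hypothesis forbids. Hence $\bm{\theta}^{\ast}$ is the unique SINDy solution, $\Phi_R = \Phi$ as flows on all of $M$, and both $\mathcal{E}_{\mathrm{stat}}^{M_{\mathrm{test}}}$ and $\mathcal{E}_{\mathrm{top}}^{M_{\mathrm{test}}}$ vanish because the occupation measures, $\omega$-limit sets, and Lyapunov spectra induced by $\Phi_R$ and $\Phi$ agree pointwise in the initial condition.

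The main technical hurdle, and where I would spend most care, is justifying that a vanishing linear combination on the discretely sampled design matrix actually lifts to a functional identity along the continuous trajectory. The cleanest route is to treat $\Gamma_{\bm{x}_0}$ as a continuous arc and use that a finite sum of smooth functions vanishing on a continuous arc must vanish identically on it; this is what turns the non-degeneracy hypothesis into the required algebraic-variety condition. A secondary concern is SINDy's sparsity thresholding together with possibly extraneous library entries: the argument above applies verbatim to the enlarged library, so the unique least-squares solution is already the sparse $\bm{\theta}^{\ast}$ (extraneous terms receive exact zero coefficient), and sequential thresholding or LASSO — which only further promote sparsity — cannot displace this solution. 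This closes the chain of implications and yields $\mathcal{E}_{\mathrm{gen}}^{M_{\mathrm{test}}}(\Phi_R) = 0$, i.e., strict learnability of $(\mathcal{B}_L, \mathcal{D})$.
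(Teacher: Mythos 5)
Your core uniqueness argument is sound and is essentially the paper's Lemma~\ref{lemma_3} in differential rather than integral form: two parameter matrices fitting $\dot{x}_j(t)=\sum_i\theta_{i,j}\psi_i(\vx(t))$ along the whole arc differ by a nonzero $\bm{c}$ with $\sum_i c_{i,j}\psi_i(\vx(t))\equiv 0$, which is exactly the forbidden algebraic relation; hence the interpolating parameter matrix is unique and equals the ground truth. (The paper reaches the same conclusion via a Gram matrix of time-integrated basis functions, $\bar{\Psi}\vtheta_j=X_j$, whose kernel is nontrivial iff the trajectory is algebraic; your pointwise version is cleaner for the proof, theirs doubles as a numerical identifiability check.)

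There is, however, a gap between what you prove and what \emph{strict learnability} demands. By the paper's definition, you must show that \emph{every} $g\in\mathcal{B}_{L,0}=\{g\in\mathcal{B}_L\mid \mathcal{E}_{\mathrm{gen}}^{M_{\mathrm{train}}}(\Phi_g)=0\}$ has zero test error --- and membership in $\mathcal{B}_{L,0}$ is defined through the occupation-measure distance $\mathrm{SW}_1$ and the Lyapunov/Hausdorff criteria, \emph{not} through interpolating the trajectory data. Your argument establishes only that the particular SINDy output recovers $f$. To close the quantifier you need the additional step the paper supplies as Lemma~\ref{lemma_4}: if the occupation measures of $\Phi_g$ and $\Phi_f$ agree for all $T$ and all Borel sets from the same initial condition, then the trajectories themselves coincide (via $\int_0^T\varphi(\vx(s))\,ds=\int_0^T\varphi(\vy(s))\,ds$ for all bounded continuous $\varphi$, differentiating in $T$, and using that such $\varphi$ separate points). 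Only then does every $g\in\mathcal{B}_{L,0}$ share the non-algebraic trajectory $\Gamma_{\vx_0}$ with $f$, so that your uniqueness argument applies and forces $g=f$, i.e.\ $\mathcal{B}_{L,0}=\{f\}$. Your secondary worries (lifting a vanishing combination from discrete samples to the arc, and thresholding/LASSO displacing the solution) are reasonable practical caveats but are not issues for the theorem as stated, since $\Gamma_{\vx_0}$ is treated as a continuous arc and the paper's argument is for the unregularized problem.
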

\begin{proof}
See Appx. \ref{app_proof_sindy}.
\end{proof}
 This implies that a single trajectory from one basin is enough for the DSR model to capture the dynamics on all other basins, as long as $\mathcal{D}$ contains a trajectory not solving an algebraic equation in the parameters of Eq. \eqref{eq_funcsp}\footnote{There are in fact particular systems where \textit{each} trajectory may solve an algebraic equation, e.g. vector fields with a rational first integral like algebraic Hamiltonians (see Appx. \ref{appx_specth1} for more details).} and the correct function library is  provided\footnote{If this is not the case, sometimes SINDy may still be able to find a good approximation, depending on the degree of mismatch and the expressiveness of the library}. These observations are illustrated in Fig. \ref{fig_sindy_cycles}a. Appx. \ref{appx_para_id} provides an efficient formal procedure for checking whether the conditions on a given trajectory are met, and hence a generalizing solution could be found. In situations where the trajectory solves an algebraic equation, we can further restrict the library to find a unique solution (Corollary \ref{th_sc1_2}). We remark that these conditions are usually established by LASSO. 

\begin{figure}[h!]
  \centering \includegraphics[width=0.47\textwidth]{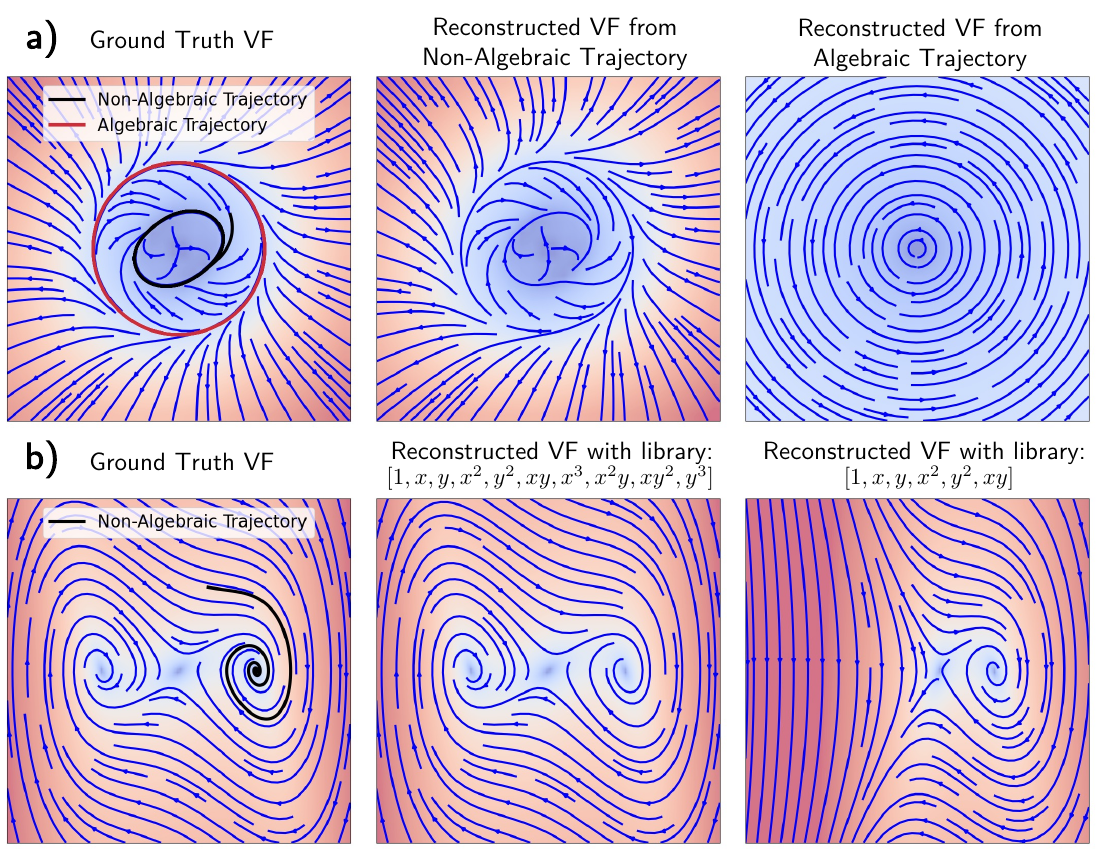}
  \caption{a) Example reconstructions using SINDy (details in Appx. \ref{appx:sindy}). The underlying VF has two cycle solutions. One solves an algebraic equation (red), while the other does not (black). The VF is only correctly identified from a trajectory containing the inner cycle (center), but not for the outer cycle (right). b) SINDy needs the proper function library to correctly infer a system across the whole state space (center). If the $3$rd order term present in the Duffing equations is lacking (right), the inferred VF may only be locally correct (or not at all for more complex systems).} \label{fig_sindy_cycles}
\end{figure}

As laid out in Appx. \ref{appx:benchmarks}, this has important implications, since many benchmark systems and scientific models in physics \cite{ramberg_description_1943}, chemistry \cite{fernandez-ramos_modeling_2006}, ecology \cite{goel_volterra_1971}, or epidemiology \cite{kermack_contribution_1927}, are expressed in terms of polynomials. However, for many complex real-world systems, like climate or the brain, which we observe through time series measurements, this assumption is likely to be violated, with polynomials at best a convenient simplification. As Figs. \ref{fig_sindy_cycles}b and \ref{fig_sindy_trigo} make clear, library approaches like SINDy will generally fail if the library does not contain the right terms describing the GT model.\footnote{In fact, SINDy fails on many empirical datasets from complex systems \cite{brenner_tractable_2022,hess_generalized_2023}.} Hence, in scientific ML we often turn to more flexible and expressive models.

\subsection{No Priors: Universal Approximators}\label{sec_no}
Next, we examine the most common data-driven approach of choosing a `black-box' model, like an RNN, to approximate the flow of the underlying system, i.e. without assuming any prior knowledge about the to-be-modeled system. If we assume that these models operate in the universal approximation limit, we can show that there is an infinity of models in the hypothesis class having zero reconstruction error on the training domain but a very high error on the whole state space or for OOD data from $M_{\mathrm{test}}$. This is in stark contrast to SINDy, where -- \textit{given the assumptions of Theorem \ref{th_sc1} are met} -- every model with zero reconstruction error on the training domain also has zero generalization error.
\begin{theorem}\label{th_1_main}
    Let $\Phi$ be a multitstable flow that is not topologically transitive (cf. Appx. \ref{def_topo_transitivity}) on $M_{\mathrm{test}}$, generated by a VF $f \in \mathcal{X}^1$. Then, the OODG problem $(\mathcal{X}^1,\mathcal{D})$ is not strictly learnable. In fact, there exists an infinite family of $f \in \mathcal{X}^1$ and an $ \varepsilon >0$ such that the corresponding flows fulfill
    \begin{align}
        \mathcal{E}^{M_{\mathrm{train}}}_{\mathrm{\mathrm{gen}}}(\Phi)=0 \quad \textnormal{ and } \quad \mathcal{E}^{M_{\mathrm{test}}}_{\mathrm{\mathrm{gen}}}(\Phi) \geq \varepsilon.
    \end{align}
\end{theorem}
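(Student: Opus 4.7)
The plan is to exploit the fact that the hypothesis class $\mathcal{X}^1$ imposes no restriction beyond $C^1$ regularity, so that the training data $\mathcal{D} \subset M_{\mathrm{train}}$ gives no information about the vector field on $M_{\mathrm{test}}$. I would construct an infinite family $\{f_t\}_{t > t_0} \subset \mathcal{X}^1$ each coinciding with $f$ on $M_{\mathrm{train}}$ (so the training error vanishes), yet each differing on $M_{\mathrm{test}}$ in a way that spawns a new attractor absent in $\Phi$; the test-error lower bound then follows by direct appeal to Theorem \ref{th_sensmulti}. The key enabling observation is that basins of attraction are forward-invariant and the basin boundary $\Tilde{M}$ from Eq.~\eqref{eq_statespacedecomp} has Lebesgue measure zero, so every trajectory in $\mathcal{D}$ stays in $M_{\mathrm{train}}$ for all time, and the training loss depends only on $f|_{M_{\mathrm{train}}}$.

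For the construction, pick an open ball $U$ compactly contained in the interior of some basin $B(A_k) \subset M_{\mathrm{test}}$ and a $C^1$ bump function $\phi \colon M \to [0,1]$ with $\operatorname{supp}(\phi) \subset U$ and $\phi \equiv 1$ on a smaller ball $U_0$. Fix $\vx_0 \in U_0$, let $g(\vx) := \vx_0 - \vx$, and set $f_t := f + t \, \phi \, g$. Each $f_t$ is $C^1$, equals $f$ on $M \setminus U \supset M_{\mathrm{train}}$, and hence the induced flow satisfies $\Phi_{f_t} = \Phi$ on $M_{\mathrm{train}}$, giving $\mathcal{E}_{\mathrm{gen}}^{M_{\mathrm{train}}}(\Phi_{f_t}) = 0$. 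Applying the implicit function theorem to $f_t(\vx) = 0$ at $\vx_0$, for $t$ beyond a threshold $t_0$ depending only on the local $C^1$-norm of $f$, the linear attraction dominates and $f_t$ acquires a hyperbolic stable equilibrium $\vx_t^* \in U_0$. Standard stable-manifold estimates then produce an open subbasin $B_t \subset U$ of Lebesgue volume uniformly bounded below by some $\beta > 0$, on which trajectories of $\Phi_{f_t}$ converge to $\vx_t^*$ while trajectories of $\Phi$ converge to $A_k \neq \{\vx_t^*\}$. Theorem \ref{th_sensmulti} then yields $\mathcal{E}_{\mathrm{gen}}^{M_{\mathrm{test}}}(\Phi_{f_t}) \geq c \, \beta =: \varepsilon > 0$, and varying $t \in (t_0, \infty)$ (or $\vx_0$ across an open subset of $U_0$) gives the required infinite family.

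The hard part is making the volume bound $\beta$ genuinely \emph{uniform}: $\operatorname{vol}(B_t)$ depends on the global shape of the stable manifold of $\vx_t^*$, which could in principle shrink as $t$ grows, so one needs a quantitative estimate comparing the local attraction rate of $\vx_t^*$ to $\|f\|_{C^1(U)}$ to guarantee that $B_t$ contains a fixed ball. The non-topological-transitivity hypothesis enters here to rule out the pathological case where $M_{\mathrm{test}}$ is a single transitive invariant set on which structural stability could absorb every $C^1$-small perturbation back into $A_k$; it provides the decomposition of $M_{\mathrm{test}}$ into disjoint forward-invariant open pieces that supplies the geometric room to localize the bump and to separate $B_t$ from the original attractor, which is what ultimately converts the analytic freedom in $\mathcal{X}^1$ into a genuine failure of OOD learnability.
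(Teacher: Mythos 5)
Your proposal is correct and matches the paper's proof in all essentials: a $C^1$ bump supported in an open set of the test basin away from the attractor (which is exactly what the non-transitivity hypothesis supplies), leaving $f$ untouched on $M_{\mathrm{train}}$ so the training error vanishes, planting a new attracting equilibrium to get an infinite family, and invoking Theorem \ref{th_sensmulti} (case (ii)) for the test-error lower bound. The only difference is cosmetic: the paper writes $g_{\alpha,i} = f_i + \Lambda\cdot(-f_i + s_{\alpha,i})$, i.e.\ it \emph{replaces} $f$ outright by a prescribed field $s_\alpha$ with an exact attracting equilibrium on the set where $\Lambda\equiv 1$ and parameterizes the family by the equilibrium location $\vx_\alpha$, which sidesteps your large-$t$ dominance threshold and the uniform-volume estimate you flag as the hard part.
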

\begin{proof}
See Appx. \ref{app_proof_th1}.  
\end{proof}
Note that this result is independent from the loss function used in training. Fig. \ref{fig_bench}, where data were just sampled from one basin of attraction of the multistable Duffing system (Eq. \eqref{eq:duffing_vf}), illustrates this idea for three of the most commonly used DSR models (in stark contrast to DSR performance on monostable systems, cf. Fig. \ref{fig:fig_1_panel}). We emphasize that this is not a sampling issue: Regardless of how much data are drawn from one basin, generalization fails, while increasing sample size quickly helps to identify the whole state space if data from both basins are available (Fig. \ref{fig_error_vs_sample_size}). SINDy on the other hand, \textit{provided the correct function library}, generalizes (Fig. \ref{fig_sindy_cycles}b).

\begin{figure*}[htb!]
  \centering 
  \includegraphics[width=1.0\textwidth]{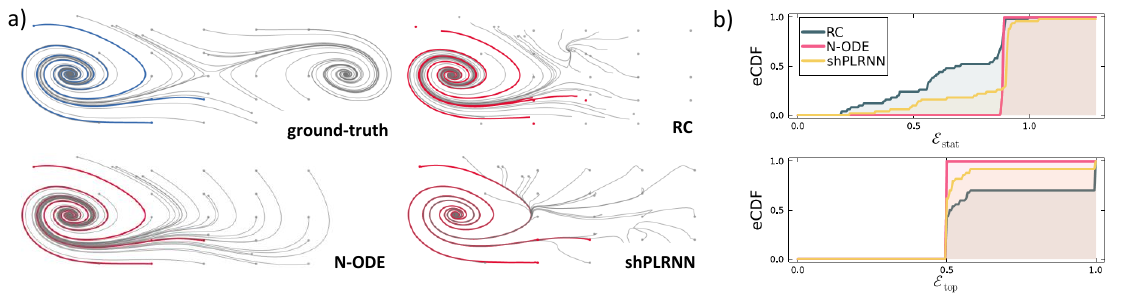 }
  \vspace{-.8cm}
  \caption{ Learnability of three SOTA DSR algorithms 
  evaluated on the Duffing system in a multistable regime. a) Reconstructions of DSR models trained on four ground-truth trajectories (blue) from one basin. Red trajectories are freely generated using initial conditions of the training data and the respective DSR model. Grey trajectories comprise example ground-truth test trajectories and generated ones from both the training basin and OOD basin. While training data trajectories align with the ground-truth, all models fail to properly generalize to the unobserved attractor/basin. b) Empirical cumulative distribution function (eCDF) of both $\mathcal{E}_{\mathrm{stat}}$ and $\mathcal{E}_{\mathrm{top}}$ based on $N=50$ independent trainings of each DSR model evaluated over a grid of initial conditions covering both basins (see Fig. \ref{fig:duffing_U_grid}).}
  \label{fig_bench}
  \normalsize
\end{figure*}
It is important to note that while on multistable settings like the one above, if trajectories are drawn from just one basin OODG will generally fail, the very same architectures can be trained to approximately zero training error on the full state space $M$ (see Fig. \ref{fig_bench_duffing_full}). This implies there are indeed regions in the loss landscape that would generalize, raising the question of why these are hardly ever discovered by the optimization algorithm. 

\subsection{Why OODG Fails}\label{OODF_failure}
We will shed light on this failure, focusing on RNNs trained with SGD. Given data $\mathcal{D}$, the probability that an RNN after training has a generalization error $\varepsilon_{\mathrm{\mathrm{gen}}}$ is formally given by
%------------------
\begin{equation}\label{eq_sgd}
 \begin{aligned}
    p_{\mathrm{SGD}}(\varepsilon_{\mathrm{stat}} \mid \mathcal{D})=\int_{\Theta} \mathbbm{1}{[ \mathcal{E}_{\mathrm{\mathrm{gen}}}^M(\Phi_{\vtheta_f}) = \varepsilon_{\mathrm{\mathrm{gen}}}  ]} \\ p_{\mathrm{opt}}\left(\vtheta_{f} \mid \vtheta_{i}, \mathcal{D}\right) p_{\mathrm{ini}}\left(\vtheta_{i}\right) d \vtheta_{i} d \vtheta_{f} ,
\end{aligned}
\end{equation}
where $p_{\mathrm{ini}}$ characterizes the initialization scheme and $p_{\mathrm{opt}}$ formalizes the training process, quantifying the probability of obtaining a final set of parameters $\vtheta_f$ given an initial set $\vtheta_i$. Under certain assumptions (cf. Appx. \ref{appx_psgd} for details), Eq. \eqref{eq_sgd} exactly aligns with the learnability distribution (Eq. \eqref{eq_learnab}). We now illustrate how the implicit biases in $p_{\mathrm{ini}}$ or $p_{\mathrm{opt}}$ will impede OODG. 

\paragraph{Simplicity bias in $p_{\mathrm{ini}}$}
In recent studies of standard NNs \cite{vallepérez2019deep,mingard2023deep} and transformers \cite{bhattamishra2023simplicity} it was shown that the parameter-function map $\mathcal{M}$ (Appx. \ref{appx_simpbias}) is biased towards `simple' functions, which in turn may explain the good generalization capability of these models on i.i.d. data (of course, time series data are not i.i.d. to begin with). Here we show that RNNs also exhibit a bias towards simplicity, which, in this case, unfortunately, manifests as a bias towards monostable DS. 

\begin{figure}[!htb]
\begin{center}
\includegraphics[width=0.42\textwidth]{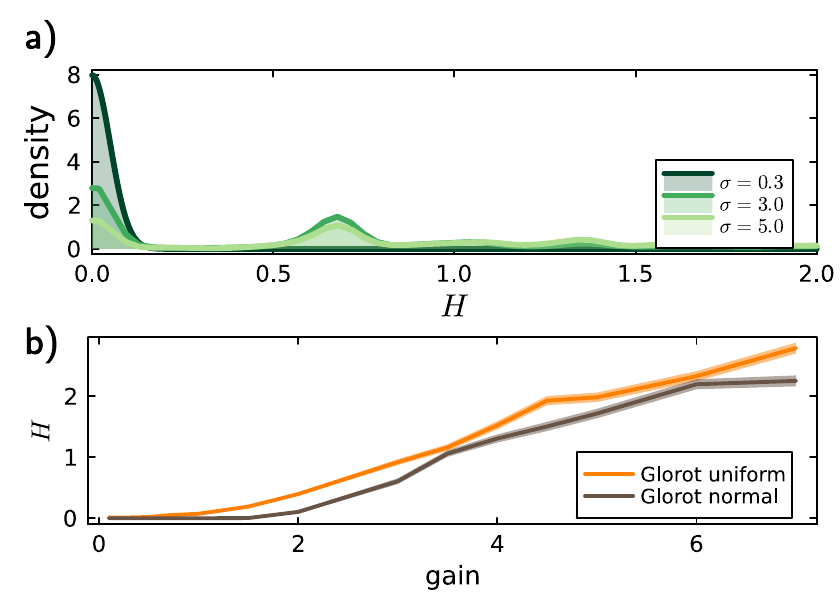}
\end{center}
 \vspace{-.4cm}
\caption{a)  Distribution of Shannon entropies (in Nat) for the limit sets of shPLRNNs ($M=2, H=100$) initialized with different gains (parameter variances) using the Glorot uniform scheme. For a low gain ($\sigma =0.3$), as predominantly used in DSR, the attractors of all models at initialization had $H=0$, which means that these consisted only of a single equilibrium point. For higher gains, further peaks at $H>0$ started to appear, implying that either more and/or higher-order objects (like cycles) exist upon initialization. b) Mean Shannon entropy for the same data plotted against gain,  using the Glorot uniform and Glorot normal initialization scheme.}
\label{fig:mean_entropy_init}
\end{figure}

To this end we initialized a shPLRNN $\Phi_{\theta}$ (\citet{hess_generalized_2023}, Appx. \ref{sec:supp:rnns}) using the Glorot uniform and Glorot normal \citep{glorot_understanding_2010} scheme where we systematically varied the gain scaling the variance. We then uniformly drew $N_I$ initial conditions and evolved them with this randomly initialized DSR model $\Phi_{\theta}$ until the resulting trajectories had 
converged to a limit set. The distribution of these limit set points across state space was then quantified through the Shannon entropy, which gives a measure for the complexity of the attractor structure at initialization (Fig. \ref{fig:mean_entropy_init}a). In Fig. \ref{fig:mean_entropy_init}b, the mean entropy is plotted as a function of gain (variance), revealing a clear trend (see also Fig. \ref{fig_exp} for a higher-dimensional RNN example). Increasing the parameter variance hence leads to more complex dynamics at initialization. However, we empirically observe that models initialized with high gains become almost impossible to train by SGD, as commonly observed for vanilla feed-forward NNs \cite{glorot_understanding_2010}. This conflict effectively biases all trainable (shPL)RNNs toward monostability, and merely increasing the gain by itself is therefore not a viable option for enhancing OODG.

\paragraph{Generalizing solutions are saddle points}
While the implicit bias introduced by $p_{\mathrm{ini}}$ plays a role in OODG failure, uncertainties in the optimization, as quantified through $p_{\mathrm{opt}}$, turned out to be even more crucial. To illustrate this, we consider the bistable Duffing oscillator (see Appx. \ref{fig:multistable_lorenz_saddles} for a chaotic multistable example) and denote by $\vtheta_{\mathrm{gen}}$ the parameters of a model generalizing across $M$, i.e., with close to zero training loss $\ell_{M}=\ell_{B(A_1)}+\ell_{B(A_2)}$ and reconstruction error $\mathcal{E}_{\mathrm{gen}}$ on both basins. We then retrain a model initialized with $\vtheta_{\mathrm{gen}}$ on trajectories from just one of the two basins, i.e. employing  $\ell_{B(A_1)}$ as a loss function.  
In Fig. \ref{fig_relearning}a we present the distribution of statistical errors for various generalizing models $\Phi_{\theta_{\mathrm{gen}}}$ and retrained models $\Phi_{\theta_{re}}$ across the two basins, $B(A_1)$ and $B(A_2)$. We observe an about $20$-fold increase in the reconstruction error of retrained compared to initialized models on $B(A_2)$, even though the error on $B(A_1)$ remained largely the same. Hence, the process of retraining effectively leads the models to \textit{unlearn} the dynamics on the second basin.
\begin{figure}[!htb]
  \centering \includegraphics[width=0.44\textwidth]{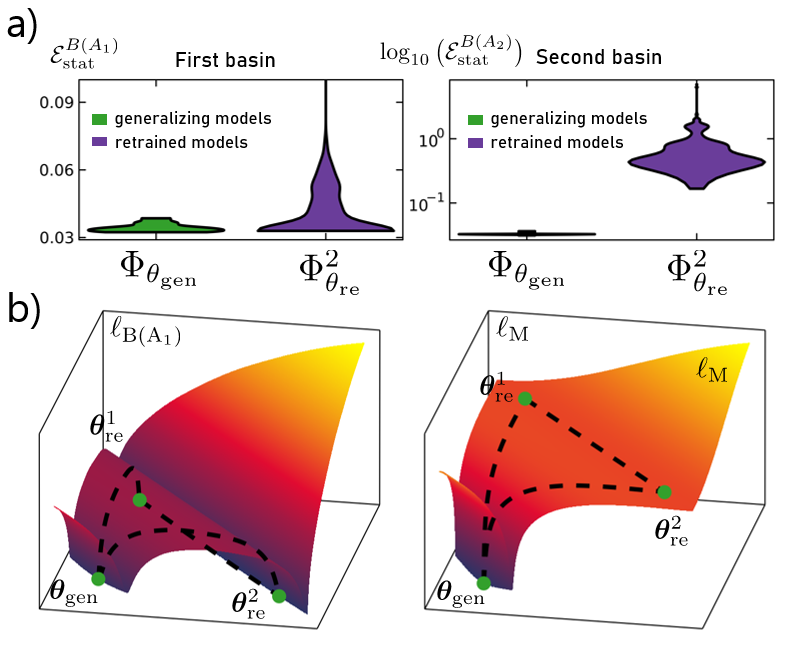}
  \caption{
  a) Statistical error distribution on basins $B(A_1)$ and $B(A_2)$ for $20$ generalizing models (green) and $20 \times 20$ models retrained (purple) using only $B(A_1)$ data. 
  b) Illustration of loss landscapes using data from just one (left) or both (right) basin(s) of attraction, with parameters corresponding to generalizing solution ($\vtheta_{\mathrm{gen}}$), 
  and to models retrained for $125k$ ($\vtheta_{re}^1$) and $250k$ ($\vtheta_{re}^2$) parameter updates, respectively.
  Note that $\ell_{\mathrm{M}}$ does not exhibit the spurious loss valley present in $\ell_{\mathrm{B(A_1)}}$.}
  \label{fig_relearning}
\end{figure}
While here we illustrated that this issue arises even in fairly simple systems like the bistable Duffing oscillator, Fig. \ref{fig:relearn_l96} shows it is equally present in higher-dimensional, more complex systems like the generalized spatially extended chaotic Lorenz-96 \citep{pelzer_finite_2020} model of atmospheric convection.

Since $\vtheta_{\mathrm{gen}}$ corresponds to a model that already agrees well with trajectories from \textit{both basins} (low $\ell_{M} \ \& \ \mathcal{E}_{\mathrm{stat}}^M$), this raises the question of why the optimizer leaves this regime during the retraining phase in the first place. To further understand this, we studied the Hessian of the loss functions evaluated on trajectories from just one ($\ell_{B(A_1)}$) or both ($\ell_{M}$) basins w.r.t. $\vtheta_{\mathrm{gen}}$ (see Tab. \ref{appx_table_eigendirections}). First, we noticed that $\vtheta_{\mathrm{gen}}$ is not a minimum but a saddle in both loss landscapes. Further, the Hessian of $\ell_{B(A_1)}$ has much fewer positive eigenvalues than that of $\ell_{M}$, implying that the saddle is more stable (with less directions to escape) when trajectories from both basins are provided. Hence, as soon as data from one basin are removed from the training set, the optimizer will run into new directions with zero or small negative eigenvalue, thus forgetting the second equilibrium point. Fig. \ref{fig_relearning}b further shows that the removal of data from the second basin leads to the emergence of spurious extrema. Current training routines may thus not be built to learn multistable systems, as they unlearn the multistable property even upon perfect initialization.
\setlength{\parskip}{0pt}

\paragraph{Generalizing minima are sharp}
In 'standard' DL, the width of minima correlates with generalization, with wider minima generalizing better than narrow ones \cite{hochreiter_flat_1997}. While certain studies have contested this correlation \cite{dinh_sharp_2017}, large-scale studies, such as \cite{jiang_fantastic_2019}, validate this association. Here, we adopt a specific 
notion of width based on the minima volumes or radii as outlined in \citet{huang_understanding_2020}, where Appx. \ref{appx_sharpmin} explains how this concept also applies to saddle regions. To further examine this idea, we trained shPLRNNs -- as above -- with identical architecture and hyperparameters once on a trajectory from just one basin and once from both basins of attraction of the Duffing system (see Fig. \ref{appx::minima_radii} for the same analysis for a chaotic multistable system). We made sure that both models have approximately the same training error when evaluated only on a single trajectory from the first basin. We then examined the width (radius) $r(\vtheta)= \|\vtheta-\vtheta_{\mathrm{min}}\|_2$ of the minimum $\vtheta_{\mathrm{min}}$ corresponding to the loss evaluated only on the one trajectory common to both (the mono- and the multistable) training setups, at a height $5 \%$ above the minimum value (other heights gave similar results, see Appx. \ref{appx_sharpmin}). On average, the minimum valleys corresponding to generalizing models, i.e. those trained on the whole state space, have a \textit{smaller} radius (Fig. \ref{fig_radius}), in contrast to the more common observation in DL that generalizing minima are usually wider. 
This, in addition to the fact that SGD is more likely to converge to wider minima \cite{chaudhari_entropy-sgd_2019, foret_sharpness-aware_2020, xie_diffusion_2021}, may further explain why generalizing minima are avoided in DSR. 

\begin{figure}[!htb]
  \centering \includegraphics[width=0.45\textwidth]{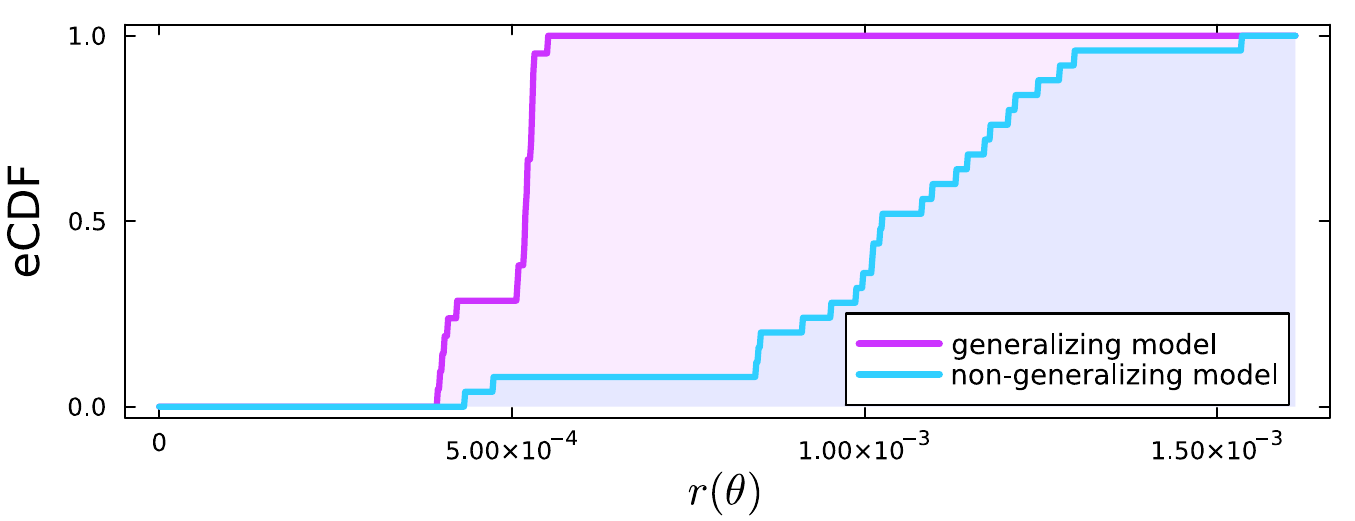}
    \vspace{-.4cm}
  \caption{eCDF of minima radii for generalizing and non-generalizing models.}
  \label{fig_radius}
\end{figure}

\section{Discussion}
Here we provide the first systematic mathematical treatment of OODG in DSR. We aimed to lay a theoretical foundation which could serve to guide the field toward future solutions of the OODG problem in DSR, by providing a new set of theoretically guided measures, providing theorems which clearly state what is, and what is not, possible, and by delineating where the hard problems lie and exactly why current SOTA algorithms struggle with them. The core problem is that most naturally observed DS will harbor many co-existing dynamical regimes, characterized by different VF topologies and long-term statistics, but usually we have observed data only from one or few of them. If we already know the correct function class, we can infer models (like SINDy) which generalize across the whole state space. But for the likely much more common empirical scenario where this is not the case, unique identification of a generalizing solution is no longer possible. In fact, if a chosen library does not even work on the training domain, this is already a strong hint that crucial terms are missing. Unfortunately, intentionally choosing a very expressive, too-large function library is not a remedy either (let alone for computational reasons), as it makes the problem underspecified. \\ 

Practically, one DS-agnostic way to potentially address OODG may be by targeting implicit biases in the initialization and, more importantly, the training processes (cf. Sect. \ref{OODF_failure}), for instance by promoting solutions that explicitly encourage multistability. Often, however, we may still need to guide the training process by a more profound physical or biological understanding of the DS in question, and evaluate trained models by explicitly (experimentally) testing novel predictions. More generally, future work may want to put the focus on training algorithms that encourage and preserve multistability and avoid overfitting the training basin.

All code used here is available at \url{https://github.com/DurstewitzLab/OODG-in-DSR}.

\section*{Acknowledgements}
This work was funded by the German Research Foundation (DFG) within Germany’s Excellence Strategy EXC 2181/1 – 390900948 (STRUCTURES) and by DFG grant Du354/15-1 to DD.

\section*{Impact Statement}
This paper presents work whose goal is to advance the field of Machine Learning. There are many potential societal consequences of our work, none which we feel must be specifically highlighted here. 

\bibliography{main}
\bibliographystyle{icml2024}

%%%%%%%%%%%%%%%%%%%%%%%%%%%%%%%%%%%%%%%%%%%%%%%%%%%%%%%%%%%%%%%%%%%%%%%%%%%%%%%
%%%%%%%%%%%%%%%%%%%%%%%%%%%%%%%%%%%%%%%%%%%%%%%%%%%%%%%%%%%%%%%%%%%%%%%%%%%%%%%
% APPENDIX
%%%%%%%%%%%%%%%%%%%%%%%%%%%%%%%%%%%%%%%%%%%%%%%%%%%%%%%%%%%%%%%%%%%%%%%%%%%%%%%
%%%%%%%%%%%%%%%%%%%%%%%%%%%%%%%%%%%%%%%%%%%%%%%%%%%%%%%%%%%%%%%%%%%%%%%%%%%%%%%
\newpage
\appendix
\onecolumn

\setcounter{figure}{0} % Restart figure numbering
\renewcommand{\thefigure}{A\arabic{figure}}% Figure counter representation
\renewcommand{\theHfigure}{A\arabic{figure}}% Hyperref figure hyperlink hook
\setcounter{table}{0}
\renewcommand{\thetable}{A\arabic{table}}

\section{Survey of Benchmark Systems}\label{appx:benchmarks}

\begin{figure*}[!htb]
\begin{center}
\includegraphics[width=0.99\textwidth]{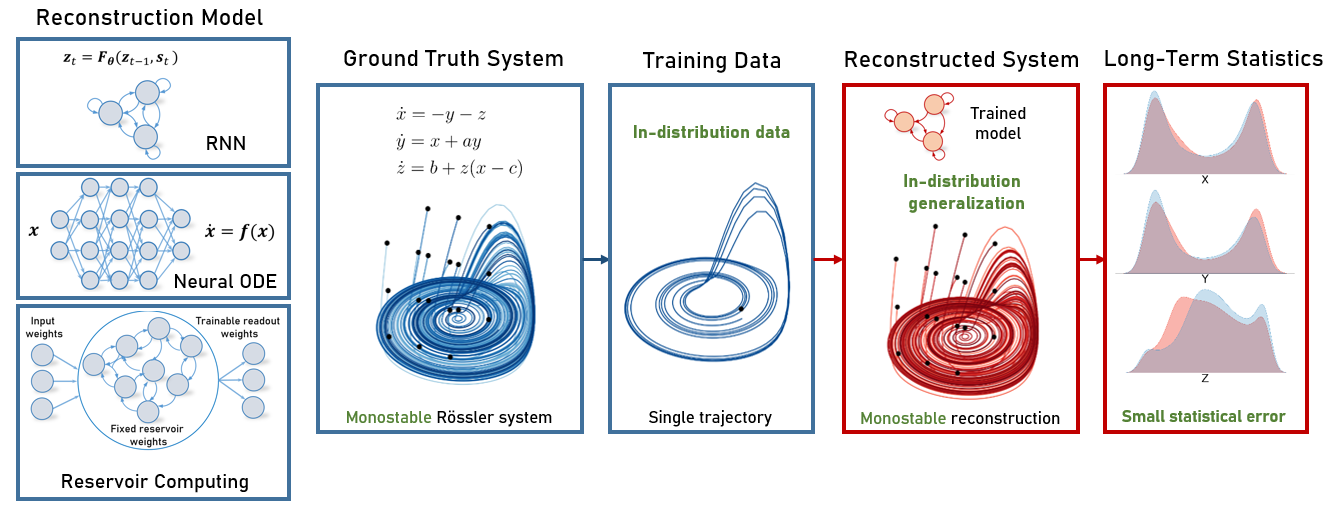}
\end{center}
\vspace{.5cm}
\caption{ In-distribution generalization in DSR.}
\label{fig:fig_1_panel}
\normalsize
\end{figure*}

We surveyed 59 papers in the field of DSR, containing a wide range of methods and applications, with respect to the benchmark systems or datasets considered (\citetbench{farmer_predicting_1987,wang_runge-kutta_1998, voss_nonlinear_2004, brunton_discovering_2016, trischler_synthesis_2016, sussillo_lfads_2016, linderman_recurrent_2016, tran_exact_2017, pathak_using_2017, raissi_multistep_2018, mohajerin_multi-step_2018}
\citetbench{vlachas_data-driven_2018, lu_attractor_2018, lusch_deep_2018, karlsson_modelling_2019, otto_linearly-recurrent_2019, raissi_physics-informed_2019, duncker_learning_2019, ayed_learning_2019, nguyen_em-like_2019, qin_data_2019, fu_dynamically_2019}
\citetbench{champion_data-driven_2019, singh_pi-lstm_2019,lee_model_2020, shalova_deep_2020, vlachas_backpropagation_2020, zhao_variational_2020, hernandez_nonlinear_2020, azencot_forecasting_2020, strauss_augmenting_2020, gilpin_deep_2020, nguyen_variational_2021}
\citebench{kraemer_unified_2021, li_data-driven_2021, lu_learning_2021, schmidt_identifying_2021, jordana_learning_2021, kim_inferring_2021, lai_structural_2021, gauthier_next_2021, goyal_learning_2021, liu_model-free_2021, schlaginhaufen_learning_2021}
\citetbench{mehta_neural_2021, zhang_learning_2022, uribarri_dynamical_2022, gilpin_chaos_2022, yin_leads_2022, rusch_long_2022, brenner_tractable_2022, lejarza_data-driven_2022, chen_time_2022, geneva_transformers_2022, mikhaeil_difficulty_2022}
\citetbench{yang_learning_2023, linot_stabilized_2023, tripura_sparse_2023, hess_generalized_2023}). This survey motivated the classification in Table \ref{tab:counts_benchmarks}, where three types of systems dominate the literature:
\begin{itemize}
    \item Simple, low-dimensional linear or nonlinear systems like the Fitz-Hugh-Nagumo equations, Lotka-Volterra system, or coupled or damped harmonic oscillators/ pendulums like the van-der-Pol oscillator. 
    \item Simple monostable 3d chaotic attractors, predominantly the Lorenz-63, Rössler or Duffing systems.
    \item Nonlinear PDEs 
    as models of fluid dynamics and convection (e.g. Burgers equation, Navier Stokes equation, Lorenz-96 or Kuramoto–Sivashinsky equations).
\end{itemize}
Experimental data or explicitly multistable systems were rarely considered (or at least not explored in their multiple stable regimes).

\begin{table}[!htb]
\caption{Classification of benchmark systems in the field of dynamical systems reconstruction.}
\centering
\begin{tabular}{l|c}
\textbf{Category} & \textbf{Counts} \\ \hline
Linear Models/Oscillators & 24 \\
Chaotic 3D Models & 29 \\
Fluid Dynamics/PDEs & 13 \\
Experimental Data & 6 \\
Multistable & 3 \\
\end{tabular}
\label{tab:counts_benchmarks}
\end{table}

\begin{figure}[!htb]
\begin{center}
\includegraphics[width=0.9\textwidth]{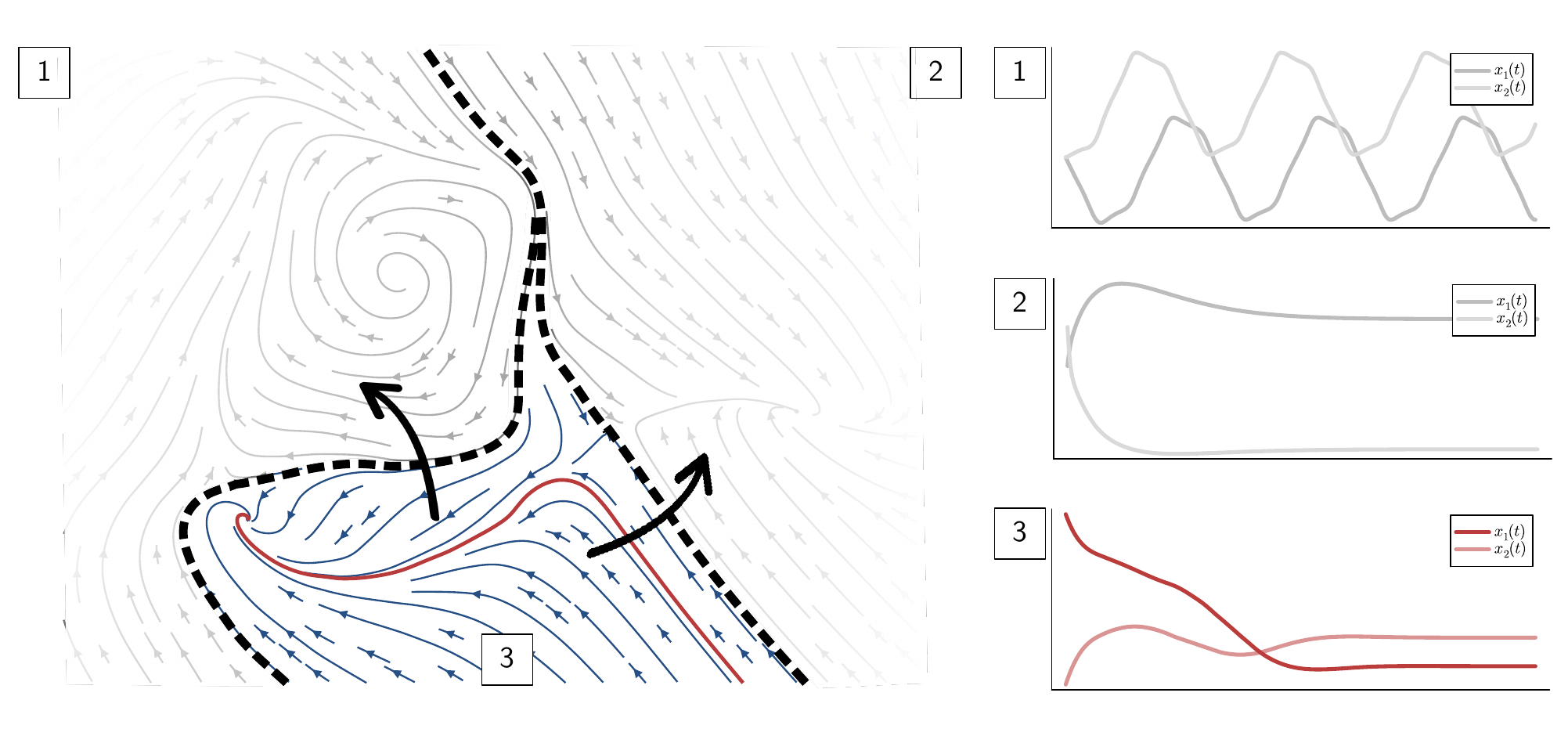}
\end{center}
\caption{Illustration of multistability. Different basins of attraction can lead to completely different dynamical regimes with different topologies.}
\label{fig_1_old_multistability}
\end{figure}

\section{Further Details on Section 2}
\subsection{Ergodic Theory and Topology}

\begin{figure}[!htb]
\begin{center}
\includegraphics[width=0.9\textwidth]{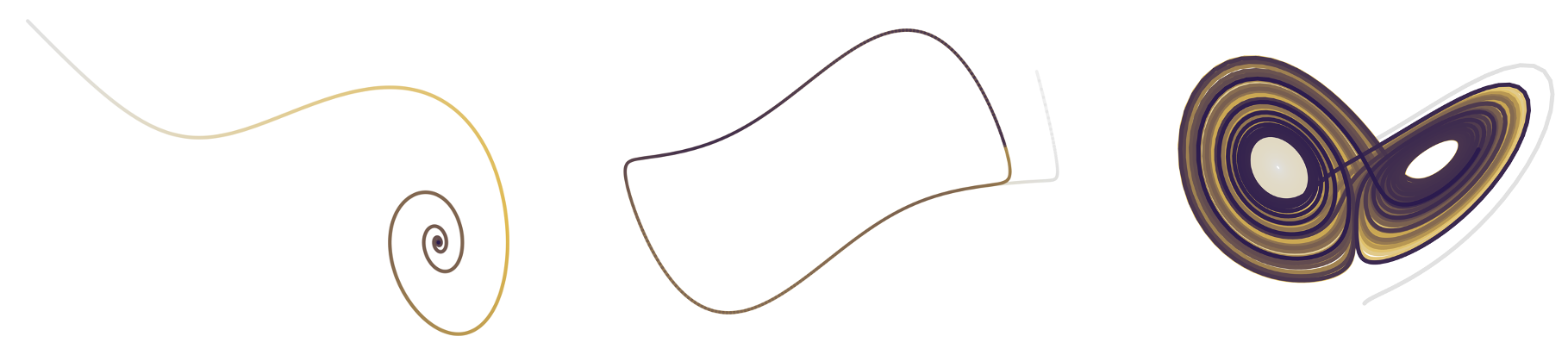}
\end{center}
\caption{Trajectories of systems with an equilibrium point (Duffing oscillator), cycle (van der Pol oscillator) and chaotic attractor (Lorenz system).}
\label{app_sec_attract}
\end{figure}

\paragraph{Physical measure}\label{pg:physical_measure}
\begin{definition}
 We call $\mu^*$ a \textit{physical measure}, related to Sinai-Ruelle-Bowen (SRB) measures \cite{climenhaga_geometric_2017}, if for some set $U$ with positive Lebesgue measure ($\mathcal{L}^n (U) > 0$) and $\bm{x}_0 \in U$, 
%------------------
\begin{align}
\lim_{T \to \infty}  \mu_{\bm{x}_{0}, T} \, = \, \mu^*.  \end{align}
%--------------------
In essence, this means that the measure is physically realisable. However, not every attractor (or even the DS itself) has to have a physical measure.
\end{definition}

\paragraph{Hausdorff Distance}
\begin{definition} \label{hausdorff_distance}
Let $X, Y$ be two non-empty subsets of a metric space $(M,d)$. The Hausdorff-distance is defined by
\begin{align}
    d_{\mathrm{H}}(X, Y)=\max \left\{\sup _{x \in X} d(x, Y), \sup _{y \in Y} d(X, y)\right\}
\end{align}
where $d(a,B)= \inf_{b \in B} d(a,b)$ with $a \in X$ and $B \subseteq X$.
\end{definition}

The choice of Hausdorff distance in this context is motivated by its robustness and sensitivity to outliers between the two sets, which in the context of the topological error makes it a suitable choice for assessing the closeness of the limit sets.

\paragraph{Topological Equivalence}

\begin{definition}\label{def_topo_equivalence}
    Let $F_1,F_2 \in \mathcal{C}^1(U)$ with flow maps $\phi_t^{F_1}, \phi^{F_2}_t$. The two vector fields (VFs) are topologically equivalent \cite{perko1991differential}, denoted by $F_1 \simeq F_2 \, $, if there exists a homeomorphism $h: U \rightarrow U$ mapping orbits of the first system onto orbits of the second system, i.e.
    \begin{align}
        \forall t \in \mathbb{R}, \ \forall x \in U: \quad \phi^{F_1}_t(x)=h^{-1} \circ \phi^{F_2}_{\tau} \circ h(x),
    \end{align}
    with $ \tau: U \times \mathbb{R} \rightarrow \mathbb{R}, \quad \frac{\partial \tau(x,t)}{\partial t} >0 \quad \forall x \in U $. This means the time direction of the orbits is preserved.
\end{definition}

Loosely speaking, two VFs are topologically equivalent if we can continuously deform one VF into the other, i.e. such that each orbit is only deformed in a continuous manner without `ripping it apart'. This implies that equilibrium points are mapped onto equilibrium points, and closed orbits to closed orbits. An open orbit will not be closed through $h$, and vice versa. 

\paragraph{Topological Transitivity}
\begin{definition}\label{def_topo_transitivity}
    Let $F \in \mathcal{C}^1(U)$ be a vector field on a topological space $U$, with its associated flow map $\phi_t^{F}$. The dynamical system induced by $F$ is said to be topologically transitive if for any two non-empty open sets $A, B \subseteq U$, there exists a time $t \in \mathbb{R}$ such that the flow map at time $t$, $\phi_t^{F}$, maps some part of $A$ into $B$; that is, $\phi_t^{F}(A) \cap B \neq \emptyset$.
\end{definition}
This implies that trajectories of the vector field $F$, starting from an arbitrary region in the space $U$, will eventually enter any other region, given that these regions are open and non-empty.

\section{Further Details on Section 3}
\subsection{OODG in Statistical Learning Theory} \label{appx:oodg_statistical}
Consider a regression or classification setting, where 
$X \subseteq \mathbb{R}^d$ and $Y \subseteq \mathbb{R}^k$ are the input and output spaces, and $\mathcal{S} = \{ (\vx_i, \vy_i)\in X \times Y \}_{i=1}^{n}$ denotes a dataset sampled from a distribution $p(\vx, \vy)$ defined on the domain $X \times Y$. 
Further assume there is a set $E$ of \textit{training} domains with cardinality $|E|$. Let the dataset of size $n_e$ from a single environment $e \in E$ be $\mathcal{S}^e = \{ 
(\vx^e_i, \vy^e_i)\in X^e \times Y^e \}_{i=1}^{n_e}$, where samples are drawn i.i.d. from the unknown, data-generating distribution $p^e(\vx, \vy)$. Consider the %full 
class $\mathcal{F}$ of functions $f: X \rightarrow Y$ and a loss function $\ell: Y \times Y \rightarrow \mathbb{R}^+ \cup \{0 \}$ measuring the goodness of fit. In OODG, the goal is to learn a generalizing, predictive function $\hat{f} \in \mathcal{F}$ from the $|E|$ training domains to obtain a minimum prediction error on an unseen test domain $e_{\mathrm{test}}$ with distribution $p^{e_{\mathrm{test}}}(\vx, \vy)$, i.e.
%---------------------
\begin{align}
\underset{\hat{f} \in \mathcal{F}}{\text{min}} \, \, \mathbb{E}_{(\vx, \vy) \sim p^{e_{\mathrm{test}}}}   \left[\ell(\hat{f} (\vx), \vy)\right],  
\end{align}
%-----------
where
%------------------
\begin{align}
 R^{e_\mathrm{test}} (\hat{f} ) \, := \, \mathbb{E}_{(\vx, \vy) \sim p^{e_{\mathrm{test}}}} \left[\ell(\hat{f} (\vx), \vy)\right] \, = \, \int  \ell(\hat{f} (\vx), \vy) \, d p^{e_\mathrm{test}}(\vx, \vy)
\end{align}
%----------------
is the expected loss for $\hat{f} $, called the test risk.
However, we cannot compute $R^{e_\mathrm{test}} (\hat{f} )$ because the distribution $p^{e_\mathrm{test}}(\vx, \vy)$ is unknown. Hence, we estimate the expectation by the sample mean across all the training domains, called \textit{empirical risk}, defined as 
%--------------
\begin{align} \label{appx:eq:test_error}
R^E_{\mathrm{emp}}(\hat{f} ) \, = \, \frac{1}{|E|}\sum_{e \in E} \frac{1}{n_e}\sum_{i=1}^{n_{e}} \ell( \hat{f} (\bm{x}^e_{i}), \bm{y}^e_{i}).    
\end{align}
%-----------
Based on this, the \textit{OODG error} is defined as the gap between the test risk and the empirical risk, 
%------
\begin{align}
\left\lvert R^{e_\mathrm{test}}(\hat{f}) - R^E_{\mathrm{emp}}(\hat{f}) \right\rvert.
\end{align}
\subsection{Statistical Error}\label{appx:dstat_details}
To compute Eq. \eqref{eq:swd}, we use a common
Monte-Carlo approximation of the expectation 
\begin{equation}\label{eq:swd_mc}
    \textrm{SW}_1(\mu_{\bm{x}, T}^{\Phi}, \ \mu_{\bm{x}, T}^{\Phi_R}) \approx \frac{1}{L} \sum_{l=1}^L W_1(g_{\bm{\xi}^{(l)}}\sharp\mu_{\bm{x}, T}^{\Phi}, \ g_{\bm{\xi}^{(l)}}\sharp\mu_{\bm{x}, T}^{\Phi_R}),
\end{equation}
where projection vectors $\bm{\xi}^{(l)} \sim \mathcal{U}(\mathbb{S}^{n-1})$ are drawn uniformly across the unit hypersphere embedded in $\mathbb{R}^n$. The Wasserstein-1 distance is computed across trajectories (empirical distributions) of the ground-truth flow $\Phi$ and the reconstructed flow $\Phi_R$. Trajectories are drawn by evolving the respective system for $T$ time units from initial conditions $\bm{x} \in \mathbb{R}^n$. Between two one-dimensional distributions, the Wasserstein-1 distance can then efficiently be computed as 
\begin{equation}\label{eq:w1_distance}
    W_1(\mu, \nu) = \int_0^1 \left\lvert F_{\mu}^{-1}(q) - F_{\nu}^{-1}(q) \right\rvert dq,
\end{equation}
where $F_\bullet^{-1}$ denotes the quantile function (inverse CDF). In practice, we 
approximate the integral in Eq. \eqref{eq:w1_distance} by evaluating the quantile functions at a resolution of $\Delta q = 10^{-3}$. 
We use $L=1000$ samples in Eq. \eqref{eq:swd_mc}. For the final error $\mathcal{E}_{stat}^U$, Eq. \eqref{eq:dstat}, we sample $K$ initial conditions from a uniformly spaced grid $\textrm{Gr}(U)=\{\vx^{(1)}, \dots, \vx^{(K)}\}$ over $U \subset M \subset \mathbb{R}^n$. The integral in Eq. \eqref{eq:dstat} is then approximated by
\begin{equation}\label{eq:eps_stat_mc}
    \mathcal{E}_{stat}^U \big( \Phi_R  \big) \approx \frac{1}{K} \sum_{x \in \textrm{Gr}(U)} \textrm{SW}_1(\mu_{\bm{x}, T}^{\Phi}, \mu_{\bm{x}, T}^{\Phi_R}).
\end{equation}

\subsection{Topological Error}\label{appx_toperror}
For the topological error, the Lyapunov spectra of orbits in limit sets $\omega(\vx, \Phi)$ and $\omega(\vx, \Phi_R)$ need to be computed. To compute the Lyapunov spectrum of continuous-time systems, i.e. ground-truth systems and Neural ODEs, we use the Julia library \texttt{TaylorIntegration.jl} \citep{perez_hernandez_2019_2562353}. For RNNs and RCs we use our own implementation of an algorithm described in \cite{geist1990comparison, vogt_lyapunov_2022}, which computes the Lyapunov spectrum by evaluating the Jacobian product along orbits of length $T$:
\begin{equation}\label{eq:lyap_spectrum}
    \lambda_i = \lim_{T \rightarrow \infty} \frac{1}{T}\log \sigma_i\left(\prod_{t=0}^{T-1} \bm{J}_{T-t}\right),
\end{equation}
where $\sigma_i$ is the $i$-th singular value. For numerical stability, the product of Jacobians is repeatedly re-orthogonalized using a QR decomposition. To ensure convergence to the limit set spectrum, transients are discarded from the computation of Eq. \eqref{eq:lyap_spectrum}. For the Duffing system (Appx. \ref{appx:duffing}), we discard the first $T_{trans} = 3000$ time steps and compute the Lyapunov spectrum across an additional $T=3000$ time steps, while re-orthogonalizing every $50$ time steps. For the multistable Lorenz-like system (Eq. \eqref{eq:multistable_lorenz}), we use $T_{trans} = 5000$ and $T=10,000$. For the tolerance of the relative error between the maxmimum Lyapunov exponents $\lambda_n$ and $\lambda_n^R$ of the ground-truth and reconstructed system, respectively, we choose $\varepsilon_{\lambda_n} = 0.25$. 
For evaluating the agreement of limit sets, $d_H(\omega(\vx, \Phi_R ),  \, \omega(\vx, \Phi)) < \varepsilon_{d_H}$, we used the same setup as for computation of the Lyapunov spectra, but only use the $T'=500$ and $T'=5000$ last time steps. 
We set $\varepsilon_{d_H} = V / L$, where $V$ is the volume of $U$, and $L$ the number of initial conditions contained in the grid $\textrm{Gr}(U)$, which is the same as used for computation of $\mathcal{E}_{stat}^U \big( \Phi_R  \big)$. For the Duffing system, this comes down to $\varepsilon_{d_H} = 40.0 / 100 = 0.4$. The integral in Eq. \eqref{eq:etop} is approximated and computed across the very same grid of initial conditions $Gr(U)$ as used for the statistical error $\mathcal{E}_{stat}^U$:
\begin{equation}
\mathcal{E}^U_{\text{top}}(\Phi_R) \approx 1 - \frac{1}{K} \sum_{\vx\in \textrm{Gr}(U)} \mathbbm{1}_{\Phi_R}(\vx)
\end{equation}
See Fig. \ref{fig:duffing_U_grid} for a visualization of the grid of initial conditions used for the Duffing system.

\begin{figure}
    \centering
    \includegraphics[width=0.6\textwidth]{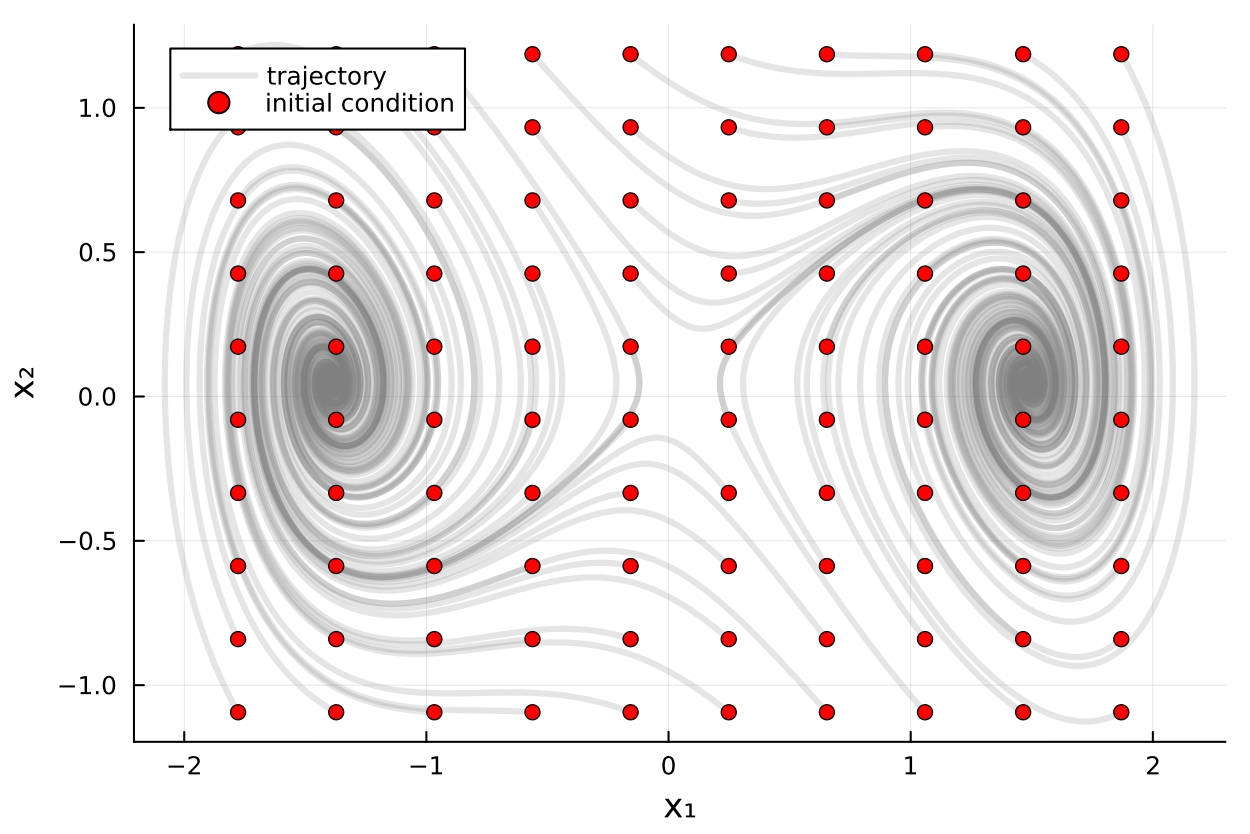}
    \caption{Grid $\textrm{Gr}(U)$ used to compute $\mathcal{E}_{stat}^U$ as well as $\mathcal{E}_{top}^U$ for the Duffing system.}
    \label{fig:duffing_U_grid}
\end{figure}

\section{Further Details on Section 4}
\subsection{Ground-truth Models}

\paragraph{Duffing system}  \label{appx:duffing}
The unforced Duffing system \citep{duffing1918erzwungene} is given by a set of coupled ODEs:
\begin{align}\label{eq:duffing_vf}
\dot{x} &= y \\ \nonumber
\dot{y} &= a y - x\left( b+ c x^2\right) 
\end{align}
where $[a,b,c]=[-\frac{1}{2},-1,\frac{1}{10}]$ places the system into a multistable regime with two coexisting equilibrium points. To generate datasets, we numerically integrate Eq. \eqref{eq:duffing_vf} for $t_{int}=40.0$ time units with a read-out interval of $\Delta t = 0.01$ using the adaptive step size integrator \texttt{Tsit5} provided within the Julia library \texttt{DifferentialEquations.jl} \citep{rackauckas2017differentialequations}. Using $K$ initial conditions, this results in an array of shape $4000 \times 2 \times K$. To facilitate training, we standardize our datasets by the overall mean and standard deviation across all trajectories.

\paragraph{Multistable Lorenz-like system}
As an example system of multistable chaotic attractors, we use the multistable Lorenz-like system introduced in \citet{lu_new_2004}:
\begin{align} \label{eq:multistable_lorenz}
    \begin{array}{l}{{\dot{x}=-{\frac{a b}{a+b}}x-y z+c}}\\ {{\dot{y}=a y+x z}}\\ {{\dot{z}=b z+x y\,,}}\end{array}
\end{align}
where we chose parameters $[a,b,c]=[-10,-4,18.1]$ such that the system exhibits two chaotic 1-scroll attractors in state space. We numerically integrate Eq. \eqref{eq:multistable_lorenz} for $t_{int}=80.0$ time units with a read-out interval of $\Delta t = 0.005$ using the \texttt{Tsit5} integrator. Using $K$ initial conditions, this results in an array of shape $16000 \times 3 \times K$. As for the Duffing datasets, we standardize using the overall mean and standard deviation.

\subsection{DSR Models and Training Routines}

\paragraph{SINDy} \label{appx:sindy}
Sparse Identification of Nonlinear Dynamics (SINDy) 
\cite{brunton_discovering_2016} aims to identify a sparse representation of the governing dynamical equations from data. Given a set of measurements of the state $\mathbf{x}(t) \in \mathbb{R}^n$, where $n$ is the number of system variables and $t=\{t_1 \dots t_m \}$ represents observation times, application of SINDy first requires approximating the flow $\frac{d\mathbf{x}}{dt}=\dot{\mathbf{x}}$ numerically, e.g. by finite difference methods. Following the notation in \citet{brunton_discovering_2016}, the derivatives are arranged into matrix form:
\begin{align}
\dot{\mathbf{X}} =
\begin{bmatrix}
\mathbf{\dot{x}}^\top(t_1) \\
\mathbf{\dot{x}}^\top(t_2) \\
\vdots \\
\mathbf{\dot{x}}^\top(t_m)
\end{bmatrix}
\begin{bmatrix}
\dot{x}_1(t_1) & \dot{x}_2(t_1) & \cdots & \dot{x}_n(t_1) \\
\dot{x}_1(t_2) & \dot{x}_2(t_2) & \cdots & \dot{x}_n(t_2) \\
\vdots         & \vdots         & \ddots & \vdots         \\
\dot{x}_1(t_m) & \dot{x}_2(t_m) & \cdots & \dot{x}_n(t_m)
\end{bmatrix}.
\end{align}

SINDy optimization then tries to determine a sparse matrix of regression coefficients $\boldsymbol{\Xi}$ such that:
\begin{equation}
    \dot{\mathbf{X}} = \Theta(\mathbf{x}) \boldsymbol{\Xi},
\end{equation}

Here, $\Theta(\mathbf{x})$ is a library of candidate functions on the state variables $\mathbf{x}$ that is defined beforehand, e.g.:
\begin{align}
\Theta(\mathbf{X}) =
\begin{bmatrix}
| & | & | & | & | & | & \\
\mathbf{1} & \mathbf{X} & \mathbf{X}^2 & \mathbf{X}^3 & \mathbf{X}^4 & \cos(\mathbf{X}) & \ldots \\
| & | & | & | & | & | & \\
\end{bmatrix},
\end{align}
The regression coefficients are found by applying a sparsity-promoting optimization technique, such as the least absolute shrinkage and selection operator (LASSO regression) or the Sequentially Thresholded Least Squares (STLSQ) algorithm, to solve for $\boldsymbol{\Xi}$.

The VF used for Figure \ref{fig_sindy_cycles} is defined as:
\begin{align} \label{eq:vf_sindy}
    \dot{x} &= x + x(x^2 + y^2 - 1)(4x^2 - 4xy + 4y^2) + (x^2 + y^2)(-2x + 2y + x^3 + xy^2), \\ \nonumber
    \dot{y} &= y + y(x^2 + y^2 - 1)(4x^2 - 4xy + 4y^2) + (x^2 + y^2)(-2x - 2y + y^3 + x^2y).
\end{align}
For the inner cycle, a trajectory was drawn from a randomly chosen initial condition $(x_0,y_0) = (0.6, 0.4)$. A long trajectory was then sampled with $T=100$ and $\Delta t=0.01$.
To infer the VF with SINDy, we used the Python implementation (\texttt{PySINDy}, \citet{de_silva_pysindy_2020}) with STLSQ optimizer and threshold $0.01$, and a \texttt{PolynomialLibrary} up to degree $6$. As the outer cycle is an unstable solution, small perturbations lead the system to diverge away from it, and so for the reconstructions in Fig. \ref{fig_sindy_cycles} (center) we made sure to initialize exactly on that cycle.
For the results in Figure \ref{fig_sindy_multistable}, we provided a polynomial library of second order for the multistable Lorenz-like system and a library of third order for the Duffing system, with other settings the same as used for Fig. \ref{fig_sindy_cycles}.

\paragraph{RNNs} \label{sec:supp:rnns}
We trained clipped shallow piecewise-linear RNNs (shPLRNNs; \citet{hess_generalized_2023}) using Backpropagation through time (BPTT) with sparse teacher forcing (STF, \citet{mikhaeil_difficulty_2022, brenner_tractable_2022}) and identity teacher forcing (id-STF; \citet{brenner_tractable_2022}). The clipped shPLRNN has a simple 1-hidden-layer architecture
\begin{equation}\label{eq:bounded_shPLRNN}
    \bm{z}_t  
    = \bm{A} \bm{z}_{t-1} + \bm{W}_1 \big[\phi(\bm{W}_2\bm{z}_{t-1} + \bm{h}_2) - \phi\left(\bm{W}_2 \bm{z}_{t-1}\right)\big] + \bm{h}_1,
\end{equation}
with latent state $\bm{z}_t \in \mathbb{R}^M$, diagonal matrix $\bm{A} \in \mathbb{R}^{M\times M}$, rectangular connectivity matrices $\bm{W}_1 \in \mathbb{R} ^{M\times H}$ and $\bm{W}_2 \in \mathbb{R} ^{H\times M}$, and thresholds $\bm{h}_2 \in \mathbb{R}^{H}$ and $\bm{h}_1 \in \mathbb{R}^{M}$. The nonlinear activation function $\phi$ is given by the $\textrm{ReLU}(\bullet) = \max(\bullet, 0)$. 

The idea behind id-STF is to replace latent states with states inferred from the observations at optimally chosen intervals $\tau$, such as to pull model-generated trajectories back on track and to avoid strong gradient divergence for chaotic dynamics \cite{mikhaeil_difficulty_2022}. Teacher forcing also has the effect of smoothening the loss landscape \cite{hess_generalized_2023}. As in \citet{brenner_tractable_2022}, we take an \say{identity-mapping} for the observation model, $\hat\vx_t = \mathcal{I}\mathbf{z}_t $, with $\mathcal{I} \in \mathbb{R}^{N \times M}$ and $\mathcal{I}_{kk}=1$ taken to be the identity for the $k$ read-out neurons, $k\leq N$, and zeros for all other elements. STF is only used in training the model, but not when deploying and testing it. The loss function to be minimized is the MSE:
\begin{equation}\label{eq:mse_loss}
    \ell_{MSE}(\hat{\bm{X}}, \bm{X}) = \frac{1}{NT_{s}}\sum_{t=1}^{T_{s}} \left\lVert \hat{\bm{x}}_t - \bm{x}_t \right\rVert_2^2,
\end{equation}
where $\hat{\bm{X}}$ are model predictions and $\bm{X}$ is the a training sequence of length $T_s$. For performing SGD updates, we employ the RAdam \citep{liu2019radam} optimizer paired with an exponential decay learning rate schedule. The shPLRNN and training routine are implemented using the \texttt{Flux.jl} DL stack \citep{Flux.jl-2018}. Detailed hyperparameter settings are collected in Table \ref{tab:hypers_shplrnn}.

\begin{table}[h]
\centering
\begin{tabular}{|c|c c|}
\hline
\textbf{Hyperparameter} & \textbf{Duffing} & \textbf{Lorenz-like} \\ \hline
$M$             & $5$          & $30$   \\
$H$             & $100$          & $500$   \\ 
$\tau$       & $15$         & $15$   \\ 
$T_{s}$        & $100$          & $50$   \\
batch size      & $32$          & $32$   \\
$\eta_{\textrm{start}}$                & $10^{-3}$          & $10^{-3}$   \\      
$\eta_{\textrm{end}}$            & $10^{-6}$          & $10^{-5}$   \\ 
\# \ trainable parameters            & $1,116$         & $30,641$   \\ 
\# \ SGD steps        & $250,000$          & $250,000$   \\ \hline
\end{tabular}
\caption{Hyperparameter settings of shPLRNNs trained on the Duffing and Lorenz-like systems.}
\label{tab:hypers_shplrnn}
\end{table}

\paragraph{Reservoir Computing (RC)}
We used a formulation of the RC architecture often employed in work on DSR \citep{patel_using_2022}: 
\begin{align}\label{eq:rc_model}
    \bm{r}_t &= \alpha \bm{r}_{t-1} + (1-\alpha) \tanh\left(\bm{W}\bm{r}_{t-1} + \bm{W}_{in}\bm{u}_{t} + \bm{b}\right) \\
    \hat{\bm{x}}_t &= \bm{W}_{out}\bm{r}_t,
\end{align}
where $\bm{r}_t \in \mathbb{R}^M$ is the reservoir state, $\alpha \in \mathbb{R}$ the leakage parameter, $\bm{W} \in \mathbb{R}^{M \times M}$ the reservoir connectivity matrix, $\bm{W}_{in} \in \mathbb{R}^{M \times N}$ the input-to-reservoir matrix weighing inputs $\bm{u}_t \in \mathbb{R}^N$, $\bm{b} \in \mathbb{R}^M$ a bias vector, and $\bm{W}_{out} \in \mathbb{R}^{N \times M}$ the matrix mapping reservoir states to the observed data. In RCs, the dynamical reservoir parameters $\theta_r = \{\alpha, \bm{W}, \bm{W}_{in}, \bm{b}\}$ are fixed after initialization. Here we initialized $\bm{W}$ to be fully connected with entries sampled from a standard normal distribution, and then scaled to have a predefined spectral radius specified by a hyperparameter $\rho$. Input-to-reservoir matrix $\bm{W}_{in}$ and bias $\bm{b}$ are also drawn from Gaussian distributions with variances $\sigma^2$ and $\beta^2$, respectively. In RCs, only the reservoir-to-output matrix $\bm{W}_{out}$ is learned. The RC is trained by first driving the reservoir using ground-truth data $\bm{X} = \left[\bm{x}_1, \dots, \ \bm{x}_T\right] \in \mathbb{R}^{N \times T}$ supplied through $\bm{u}_t = \bm{x}_t$. This results in a trajectory of reservoir states $\bm{R} = \left[\bm{r}_1, \dots, \ \bm{r}_T\right] \in \mathbb{R}^{M \times T}$. The only trainable parameters $\bm{W}_{out}$ are then determined by minimizing the least-squares error $\lVert\bm{X} - \bm{W}_{out}\bm{R}\rVert_2^2$, a straightforward linear regression problem with closed form solution
\begin{equation}
    \bm{W}_{out} = \bm{X} \bm{R}^T \left(\bm{R}\bm{R}^T\right)^{-1}.
\end{equation}
After training, the reservoir state is initialized with zeros and the RC is only provided a short sequence of ground-truth data $\{\vx_1, \dots, \vx_{T_W}\}$ to 'warm-up` the dynamics of the reservoir state $\bm{r}_t$, where $T_W$ denotes the warm-up time. Afterwards, the RC runs closed-loop (autonomously) by feeding predictions $\hat{\bm{x}}_t$ back to the reservoir through the input-to-reservoir connection. To keep the comparison between DSR models fair in Fig. \ref{fig_bench}, we only provide a single initial condition, i.e. $T_W = 1$. For visual clarity, however, we still drop the first few time steps of RC-generated trajectories (e.g. in Fig. \ref{fig_bench}), which the zero-initialized reservoir state needs to converge to the correct dynamics. Detailed hyperparameter settings are in Table \ref{tab:hypers_rc}.

\begin{table}[h]
\centering
\begin{tabular}{|c|c c|}
\hline
\textbf{Hyperparameter} & \textbf{Duffing} & \textbf{Lorenz-like} \\ \hline
$M$           & $500$          & $2000$   \\
$\rho$      & $1.0$          & $0.75$   \\ 
$\alpha$      & $0.7$          & $0.4$   \\ 
$\sigma$       & $0.2$         & $0.3$   \\ 
$\beta$    & $0.5$          & $0.7$   \\
\# \ trainable parameters        & $1,000$          & $6,000$   \\ \hline
\end{tabular}
\caption{Hyperparameter settings of RCs trained on the Duffing and Lorenz-like systems.}
\label{tab:hypers_rc}
\end{table}

\paragraph{N-ODE}
We train N-ODEs \citep{chen2018neural} using the Julia library \texttt{DiffEqFlux.jl} \citep{rackauckas2020universal}. We use a simple multi-layer perceptron (MLP) architecture where parameters are optimized using the adjoint method. The loss function is the MSE, Eq. \eqref{eq:mse_loss}. As for RNNs we perform SGD updates using RAdam paired with an exponential decay learning rate schedule.  Detailed hyperparameter settings are in Table \ref{tab:hypers_node}.
\begin{table}[h]
\centering
\begin{tabular}{|c|c c|}
\hline
\textbf{Hyperparameter} & \textbf{Duffing} & \textbf{Lorenz-like} \\ \hline
\# \ hidden layer            & $2$          & $3$   \\
hidden layer sizes             & $[40, 40]$          & $[100, 100, 100]$   \\ 
activation       & tanh         & ReLU   \\ 
$T_{s}$        & $30$          & $30$   \\
batch size      & $32$          & $32$   \\
ODE solver & Tsit5 & Tsit5   \\
$\eta_{\textrm{start}}$                & $10^{-3}$          & $10^{-3}$   \\   
$\eta_{\textrm{end}}$            & $10^{-5}$          & $10^{-5}$   \\ 
\# \ trainable parameters            & $1,842$         & $20,903$   \\ 
\# \ SGD steps        & $100,000$          & $100,000$   \\ \hline
\end{tabular}
\caption{Hyperparameter settings of N-ODEs trained on the Duffing and Lorenz-like systems.}
\label{tab:hypers_node}
\end{table}

\begin{figure*}[htb!]
  \centering  \includegraphics[width=1.0\textwidth]{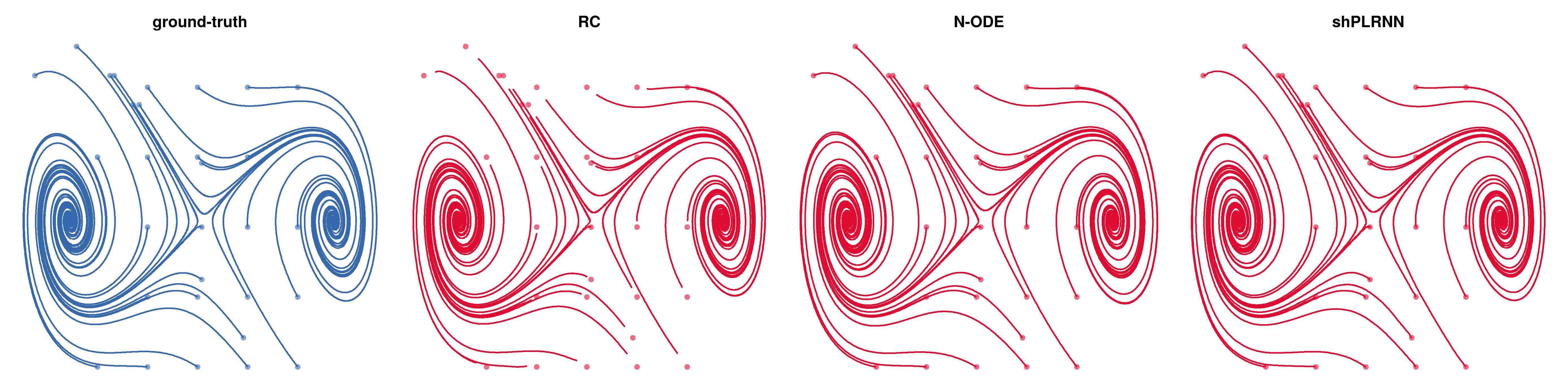}
  \caption{Reconstructions of the Duffing system as in Fig. \ref{fig_bench}, but with models trained on ground-truth data (blue trajectories) from both basins. The models are capable of learning the multistable dynamics (red trajectories) of the system when supplied with data from both basins. This is also reflected in the drastically smaller statistical error $\mathcal{E}_{stat}$ when applied to the same test data as used for Fig. \ref{fig_bench}: $\textrm{RC} \approx 2.7\cdot 10^{-3}$, $\textrm{N-ODE} \approx 2.1 \cdot 10^{-3}$ and $\textrm{shPLRNN} \approx 1.4\cdot 10^{-3}$.}
  \label{fig_bench_duffing_full}
  \normalsize
\end{figure*}

\begin{figure*}[htb!]
  \centering  \includegraphics[width=1.0\textwidth]{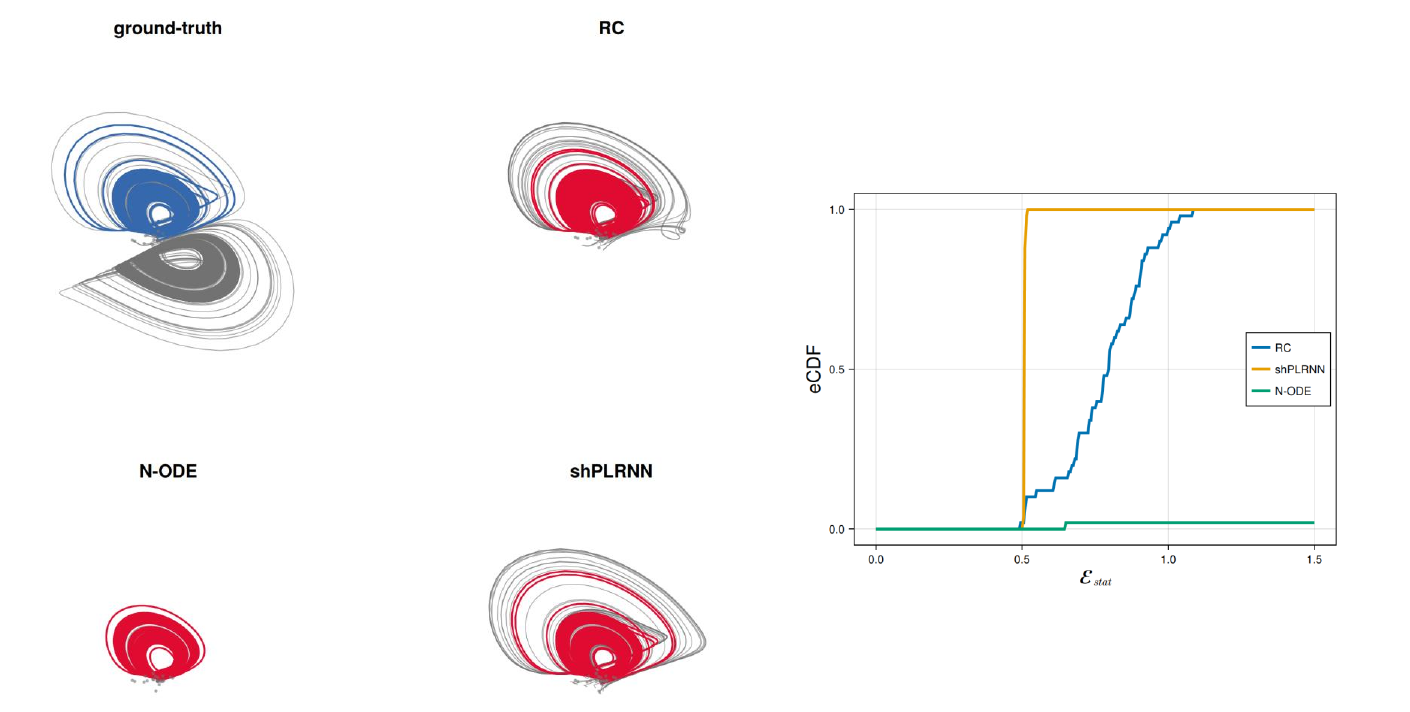 }
  \caption{Learnability evaluated on the multistable Lorenz-like system. Left: Example reconstructions of DSR models trained on 4 ground-truth trajectories (blue) from one basin. Red trajectories are freely generated using initial conditions of the training data and the respective DSR model. Grey trajectories are example ground-truth test trajectories and model-generated trajectories, respectively, from both the training basin and the OOD basin. Again, all models fail to properly generalize to the unobserved attractor/basin. Right: eCDF of $\mathcal{E}_{stat}$ with a sample size of $N=50$ independent trainings of each DSR model evaluated over a grid of $125$ initial conditions covering both basins. Note that the dynamics of the N-ODE models consistently diverged for many initial conditions from the grid. For N-ODE $\mathcal{E}_{stat}$ values, this means that most mass is concentrated at much higher error values, which were cut off in the eCDF plot.}
  \label{fig_bench_lorenz_multi}
  \normalsize
\end{figure*}

\begin{figure*}[htb!]
  \centering  \includegraphics[width=0.6\textwidth]{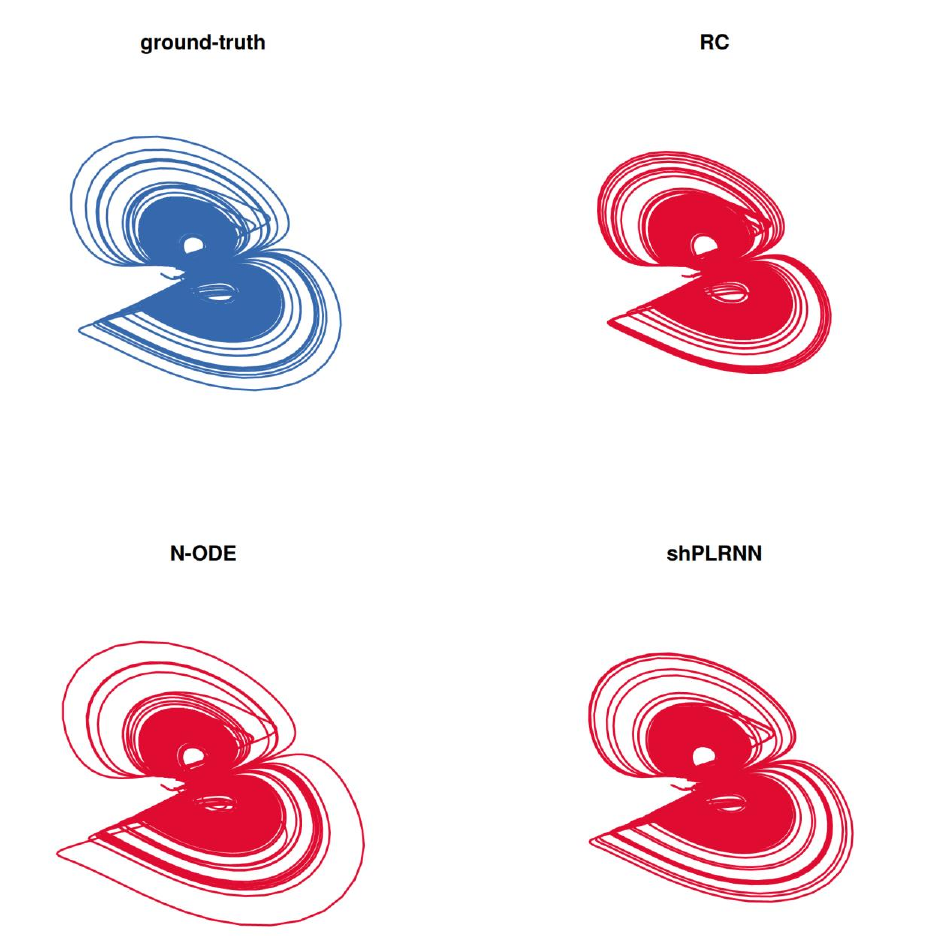}
  \caption{Reconstructions of the multistable Lorenz-like system using the same DSR models as in Fig. \ref{fig_bench_lorenz_multi}, but trained on ground-truth training data (blue) from both basins. As for the Duffing system, the models are capable of learning the multistable dynamics (red trajectories) of the Lorenz-like system when supplied with data from both basins. This is also reflected in lower statistical errors $\mathcal{E}_{stat}$ when compared to the monostable training data, evaluating to $\approx 0.131$ for RC, $\approx 0.133$ for N-ODE and $\approx 0.064$ for the shPLRNN.}
  \label{fig_bench_lorenz_full}
  \normalsize
\end{figure*}

\begin{figure*}[htb!]
  \centering  \includegraphics[width=.3\textwidth]{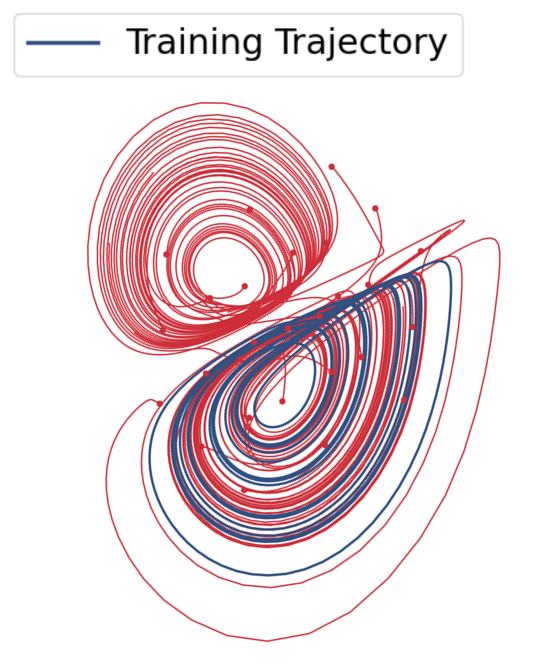}
  \caption{Reconstruction of the multistable Lorenz-like system from a trajectory from just one basin, using PySINDy \cite{de_silva_pysindy_2020}. Since in this case, the correct polynomial function library was provided, both basins are correctly identified (see also Fig. \ref{fig_sindy_cycles}b).}
  \label{fig_sindy_multistable}
  \normalsize
\end{figure*}

\begin{figure}[h!]
  \centering 
  \includegraphics[width=0.99\textwidth]{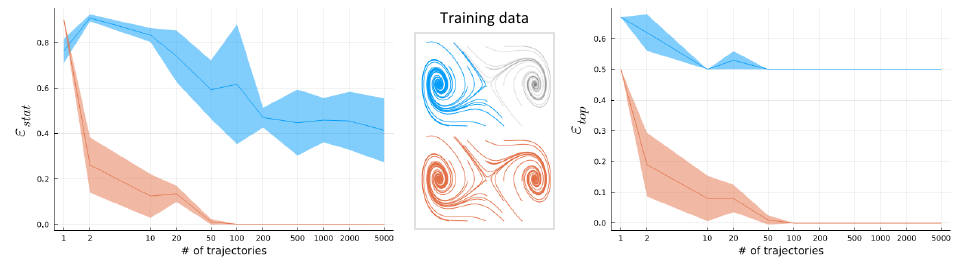}
  \caption{Generalization errors $\mathcal{E}_{\mathrm{stat}}^M$ (left) and $\mathcal{E}_{\mathrm{top}}^M$ (right) for shPLRNNs ($M=5$, $H=100$) as a function of sample size (amount of training data), drawn from one basin (blue) or both basins (orange) of the Duffing system (center). Each data point is median $\pm$ median absolute deviation (ribbon) across 10 models. Increasing training data from one basin makes no difference for the generalization to the second basin (blue), i.e. both errors plateau at high values.
  }
  \label{fig_error_vs_sample_size}
\end{figure}

\newpage
~\newpage
\subsection{Specifications on Theorem \ref{th_sc1}}
\label{appx_specth1}
There are also examples of VFs with a dense set of trajectories which solve an algebraic equation. Any VF with a rational first integral (e.g. algebraic Hamiltonian) is not learnable (since then every solution solves an algebraic equation in terms of the basis functions). A simple example would be the standard harmonic oscillator in the regime where it has a center, i.e. a dense set of closed orbits (each of them solving an algebraic equation). More generally, systems with a center manifold (as in many
Hamiltonian systems and biological systems like the FitzHugh-Nagumo equation) may have this property, with all trajectories lying on the center manifold solving an algebraic equation.

\begin{figure}[h!]
 \centering \includegraphics[width=0.7\textwidth]{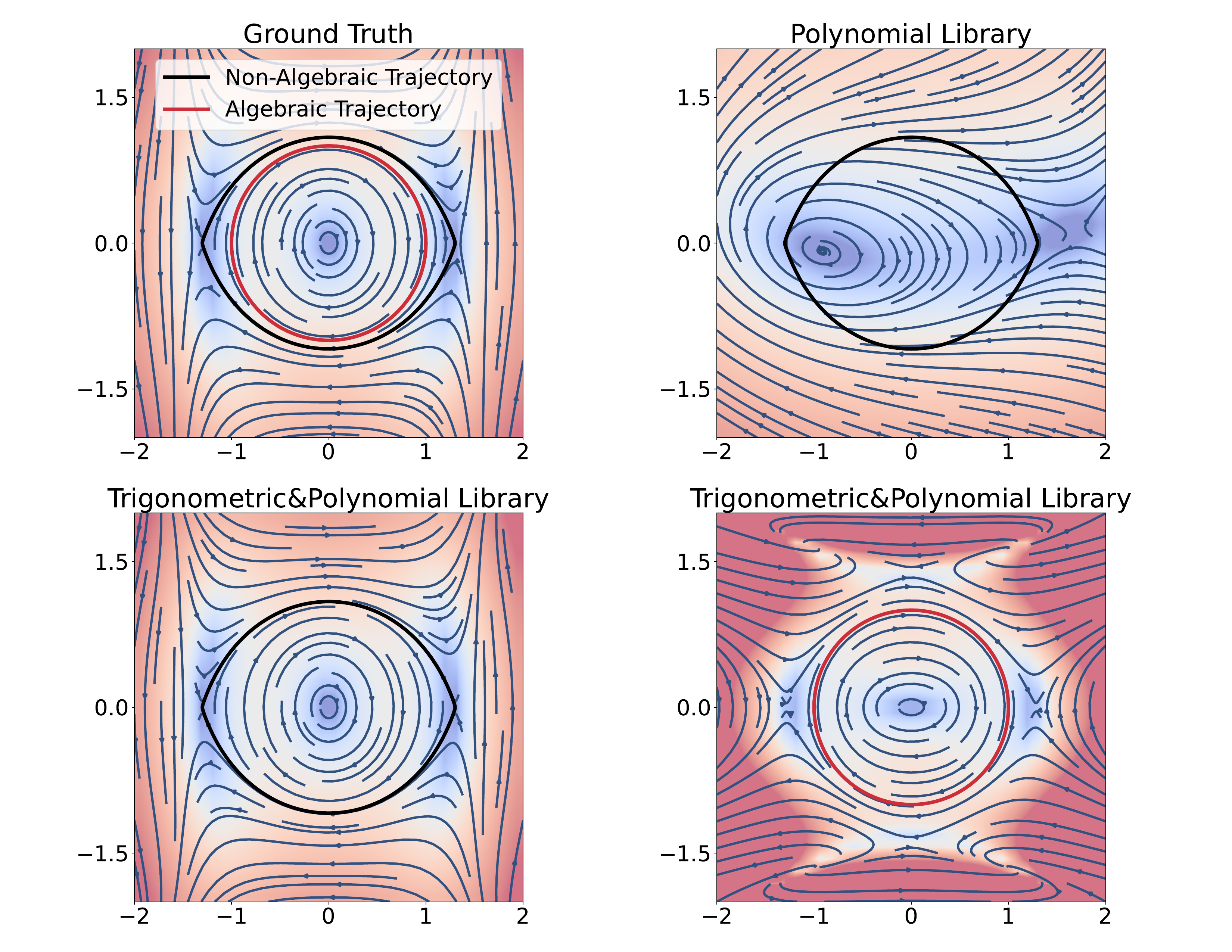}
  \caption{Example reconstructions using SINDy (details in Appx. \ref{appx:sindy}). a) The ground truth (GT) VF has two cycle solutions. One solves an algebraic equation (red), while the other does not (black). b) Providing SINDy with the black trajectory and the correct library (including both trigonometric and polynomial functions) leads to a correctly inferred VF. c) Providing as data the curve solving an algebraic trajectory leads to an incorrectly inferred VF, as stated in theorem \ref{th_sc1}, despite the correct library. d) Providing SINDy with a non-algebraic trajectory but a library that lacks the trigonometric terms also fails to reproduce the GT VF.} \label{fig_sindy_trigo}
\end{figure}
The VF used for Figure \ref{fig_sindy_trigo} is defined as:

\begin{align} \label{eq:vf_sindy}
  \dot{x} &= 2y \cos(x), \\ \nonumber
\dot{y} &= x^2 \sin(x) - 2x \cos(x) + y^2 \sin(x) - \sin(x).
\end{align}

\newpage
\subsection{Specifications on Theorem \ref{th_sc1} - Evaluating Identifiability Conditions}

\label{appx_para_id}
 
Assume we are given a dataset (trajectory) 
\begin{align} \mathcal{D}=\{\vx(t_0),\vx(t_1),...,\vx(t_N)\} ,\end{align} for which we would like to check whether it satisfies an algebraic equation in the basis functions, i.e.,
\begin{align}\label{eq_al}
     \nexists \vtheta \in \mathbb{R}^{m } \backslash \{\bm{0}\}: \ \Lambda(\vx(t))=\sum_{i=0}^{m}\theta_{i}\psi_{i}(\vx(t))=0 \quad \forall t \in I_{\vx_0}^f .
\end{align}
Multiplying Eq. \eqref{eq_al} by $\psi_j(\vx(t))$ yields 
\begin{align}-
   \forall j=0,...,m: \quad  \psi_j(\vx(t)) \sum_{i=0}^{m}\theta_{i}\psi_{i}(\vx(t))=0.
\end{align}
As this equation is always zero, it is also zero when evaluated at the data points
\begin{align}\label{eq:linear_equations_index_not}
    \forall j=0,...,m: \quad  \sum_{k=0}^N\psi_j(\vx(t_k)) \sum_{i=0}^{m}\theta_{i}\psi_{i}(\vx(t_k))=0.
\end{align}
Equation \eqref{eq:linear_equations_index_not} consists of $m+1$ linear equations which can be written in matrix form:
\begin{equation} \label{eq_appx_big}
	\underbrace{\sum_{k=0}^N\begin{bmatrix}  \psi_{0}(\vx(t_k)) \cdot \psi_{0}(\vx(t_k))   &   \psi_{0}(\vx(t_k)) \cdot \psi_{1}(\vx(t_k))   & \cdots  &   \psi_{0}(\vx(t_k)) \cdot \psi_{m}(\vx(t_k))   \\   \psi_{1}(\vx(t_k)) \cdot \psi_{0}(\vx(t_k))   &   \psi_{1}(\vx(t_k)) \cdot \psi_{1}(\vx(t_k))   &  &   \psi_{1}(\vx(t_k)) \cdot \psi_{m}(\vx(t_k))   \\ \vdots  & \vdots  & \ddots  & \vdots  \\   \psi_{m}(\vx(t_k)) \cdot \psi_{0}(\vx(t_k))   &   \psi_{m}(\vx(t_k)) \cdot \psi_{1}(\vx(t_k))   &   \dots   &   \psi_{m}(\vx(t_k)) \cdot \psi_{m}(\vx(t_k))   \end{bmatrix}}_{\underline{\Psi}}\begin{bmatrix}  \theta_0 \\\theta_1 \\\vdots \\ \theta_m \end{bmatrix} =0.
\end{equation}
Hence, if $\operatorname{ker}(\underline\Psi)$ is non-trivial (contains not only the zero-vector), then the data points solve an algebraic equation in the basis functions. For instance, let us revisit the scenario described in the main text, where our training dataset comprises points on the circle $\Gamma_{\vx_0}=\big\{ (\cos(t),\sin(t)) \mid t \in [0, 2 \pi) \big\} $. With this information, we are able to determine the null space of $\underline{\Psi}$ by solving Eq. \eqref{eq_appx_big} using a library that includes polynomials up to third order. The null space is three-dimensional and consists of the three algebraic curves shown in Fig. \ref{fig:algebraic_check_example}. (Note that, by definition, also all linear combinations of these invariant algebraic curves are in the nullspace.) In contrast, the nullspace of $\underline{\Psi}$ for a trajectory that does not solve an algebraic equation in the basis functions contains only the zero-vector.\\

Thus, checking whether the null space of $\underline{\Psi}$ contains only the zero-vector is a quick and efficient method for verifying whether any trajectory satisfies an algebraic equation in the basis functions. If the nullspace contains only the zero-vector and we have provided a proper (correct) library, SINDy (or related library methods) will generalize across the state space $M$. Conversely, if the null space is non-trivial, either a different trajectory must be selected or the hypothesis class must be limited, as outlined in Corollary \ref{th_sc1_2}, to enable proper generalization.

\begin{figure}[h!]
  \centering \includegraphics[width=0.95\textwidth]{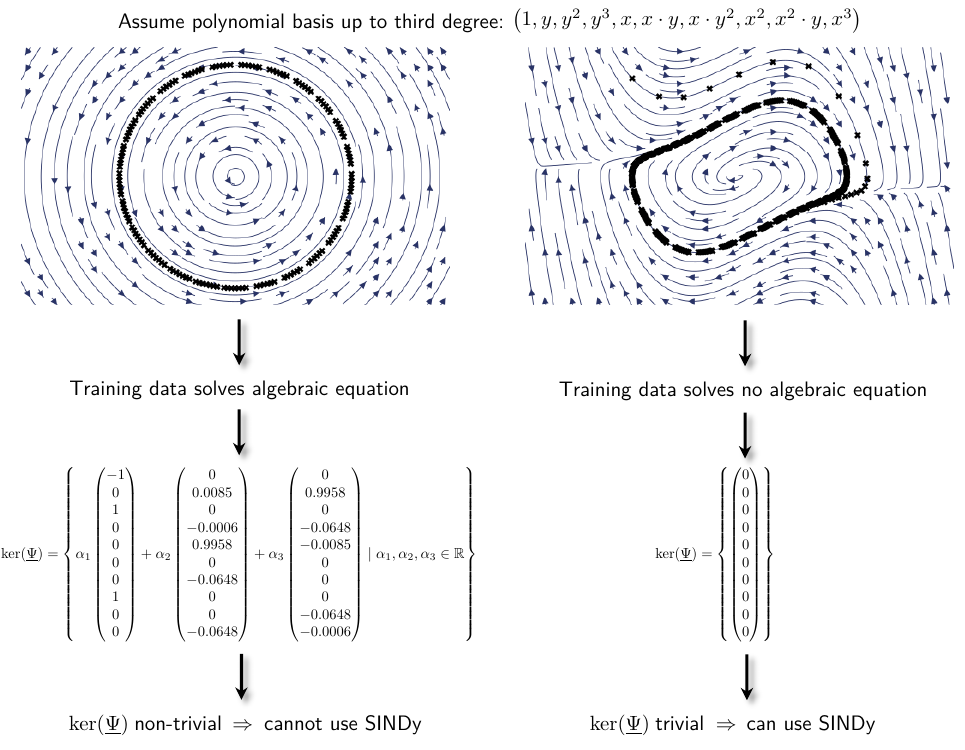}
  \caption{ Illustration of how to check whether training data solve, or do not solve, an algebraic equation in a practical setting, for the harmonic oscillator (left) and van-der-Pol oscillator (right). The limit cycle of the van-der-Pol oscillator is non-algebraic \cite{odani_limit_1995}.}
  \label{fig:algebraic_check_example}
\end{figure}

\newpage
\subsection{Why OODG Fails}
\label{appx_psgd} 
The probability of a model trained with SGD having error $\varepsilon_{\mathrm{stat}}$ is defined by
\begin{align}
    p_{\mathrm{SGD}}(\varepsilon_{\mathrm{stat}} \mid \mathcal{D}) = \int_{\Theta} \mathbbm{1}{[ \mathcal{E}_{\mathrm{\mathrm{gen}}}^M(\Phi_{\vtheta_f}) = \varepsilon_{\mathrm{\mathrm{gen}}}  ]} p_{\mathrm{opt}}\left(\vtheta_{f} \mid \vtheta_{i}, \mathcal{D}\right) p_{\mathrm{ini}}\left(\vtheta_{i}\right) d \vtheta_{i} d \vtheta_{f}.
\end{align}
This coincides with the learnability distribution
\begin{align}
    p(\varepsilon_{\mathrm{gen}}|\mathcal{D}) = \frac{1}{\mathrm{vol}(\Theta_0)} \int_{\Theta_0} \mathbbm{1}{[ \mathcal{E}^{M_{\mathrm{test}}}_{\mathrm{\mathrm{gen}}}(\Phi_{\vtheta}) = \varepsilon_{\mathrm{gen}}  ]} d \vtheta ,
\end{align}
under two conditions, the failure of either one introduces an implicit bias. 
First, it might be that the optimizer does not converge to a model with zero reconstruction error on $M$ but a slightly larger error. The more significant implicit bias arises from the combination of $p_{\mathrm{ini}}$ and $p_{\mathrm{stat}}$. Assuming the optimizer always converges to a model with zero generalization error, we have
\begin{align}
    p_{\mathrm{SGD}}(\varepsilon_{\mathrm{gen}}|\mathcal{D})= \frac{1}{\mathrm{vol}(\Theta_0)} \int_{\Theta_0} \mathbbm{1}{[\mathcal{E}^{M_{\mathrm{test}}}_{\mathrm{\mathrm{gen}}}(\Phi_{\vtheta}) = \varepsilon_{\mathrm{gen}}  ]} \cdot p_{\mathrm{bias}}(\vtheta) d \vtheta,
\end{align}
i.e., $p_{\mathrm{SGD}}$ aligns with the learnability distribution only when an implicit bias term $p_{\mathrm{bias}}$ is accounted for. In general, it is likely that a specific combination of $p_{\mathrm{ini}}$ and $p_{\mathrm{stat}}$ preferentially converges to certain parameters $\vtheta_f$, as observed for SGD. This preference is termed the implicit bias of the learning algorithm, and skews the learnability distribution toward the domain implicitly preferred by the learning algorithm. 
\subsection{Simplicity Bias}
\label{appx_simpbias}
We adopt the following definition for the parameter-function map from \cite{vallepérez2019deep}:
\begin{definition}
\label{appx_def_M}
The parameter-function map $\mathcal{M}$ associates a given parameter set with the flow, expressed as:
\begin{align}
    \mathcal{M}: \Theta \rightarrow \mathcal{H}_{\theta}, \quad \vtheta \mapsto {\Phi_{\theta}}.
\end{align}
\end{definition}

Depending on the choice of model, multiple parameter vectors can map onto the same flow. For instance, for the shPLRNN (Sect. \ref{sec:supp:rnns}), scaling $\bm{W}_1$ by a positive scalar factor $c \in \mathbb{R}_+$ and adjusting the weights in $\bm{W}_2$ by $\frac{1}{c}$ results in the same dynamical model. The initial parameter distribution $p_{\mathrm{ini}}$ can lead to flows with different characteristics not directly apparent from the initial distribution. In the context of the simplicity bias, we are interested in the distribution over the complexity of the flows $K(\Phi_{p_{\mathrm{ini}}})$ induced by a choice of distribution over initial parameters. To assess this complexity, we select the Shannon entropy over the limit sets of the resulting flows \cite{eckmann_ergodic_1985}. We evaluate this by drawing long trajectories from a grid of initial conditions and compute the entropy over the histogram of final states, using the R\'enyi algorithm from \texttt{ComplexityMeasures.jl}. This entropy has a natural interpretation for flows with different topologies, where e.g. global equilibrium points have low entropy and chaotic attractors have high entropy (see Fig. \ref{fig:mean_entropy_init}b).

\begin{figure}[H]
  \centering \includegraphics[width=0.5\textwidth]{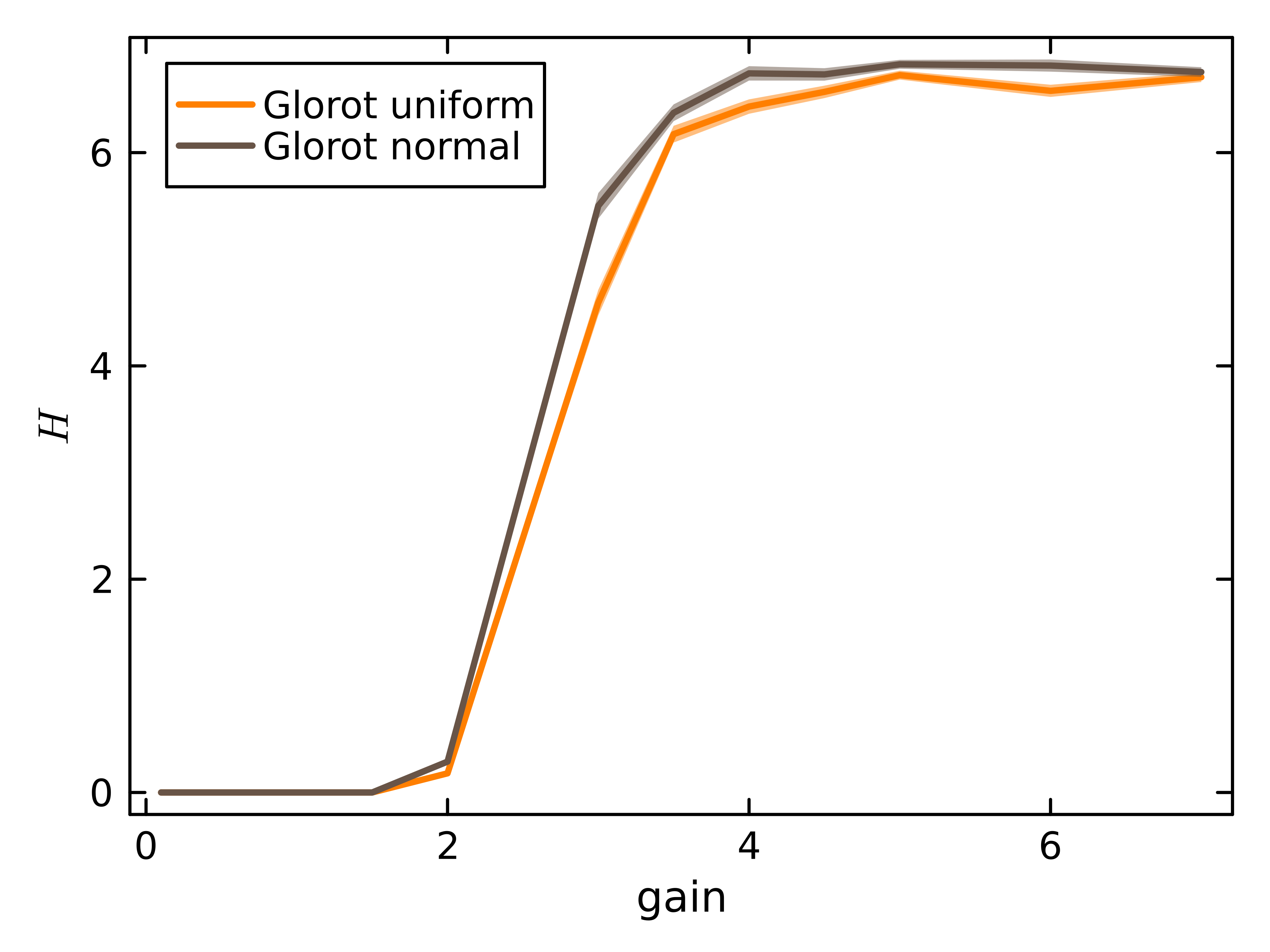}
  \caption{ Simplicity bias for shPLRNNs for $M=10$ and $H=250$. }
  \label{fig_exp}
\end{figure}

\begin{table}[ht!]
\centering
\caption{Number of positive, zero and negative eigendirections for the Hessian of the loss function evaluated on trajectories from just one ($\ell_{B(A_1)}$) or both ($\ell_{M}$) basins w.r.t. $\vtheta_{\mathrm{gen}}$. (For this analysis only 5 of the 20 generalizing models plotted in Fig. \ref{fig_relearning} where used.) \\}
\label{appx_table_eigendirections}
\begin{tabular}{@{}|l|l|l|@{}}
\toprule
                & $\ell_{M}$             & $\ell_{B(A_1)}$          \\ \midrule
$\#\lambda_+$   & $171.93 \pm 0.62$      & $103.08 \pm 2.10$       \\ \midrule
$\#\lambda_0$   & $182.38 \pm 0.75$      & $310.54 \pm 3.76$       \\ \midrule
$\#\lambda_-$   & $149.69 \pm 0.24$      & $90.38 \pm 1.66 $       \\ \midrule
$\lambda_{max}$ & $241.09 \pm 1.25$      & $364.09 \pm 3.29 $      \\ \midrule
$\lambda_{min}$ & $-0.00126 \pm 0.00004$ & $-0.00186 \pm 0.00001$  \\ \bottomrule
\end{tabular}
\end{table}

\begin{figure}[hb!]
\begin{center}
\includegraphics[width=0.99\textwidth]{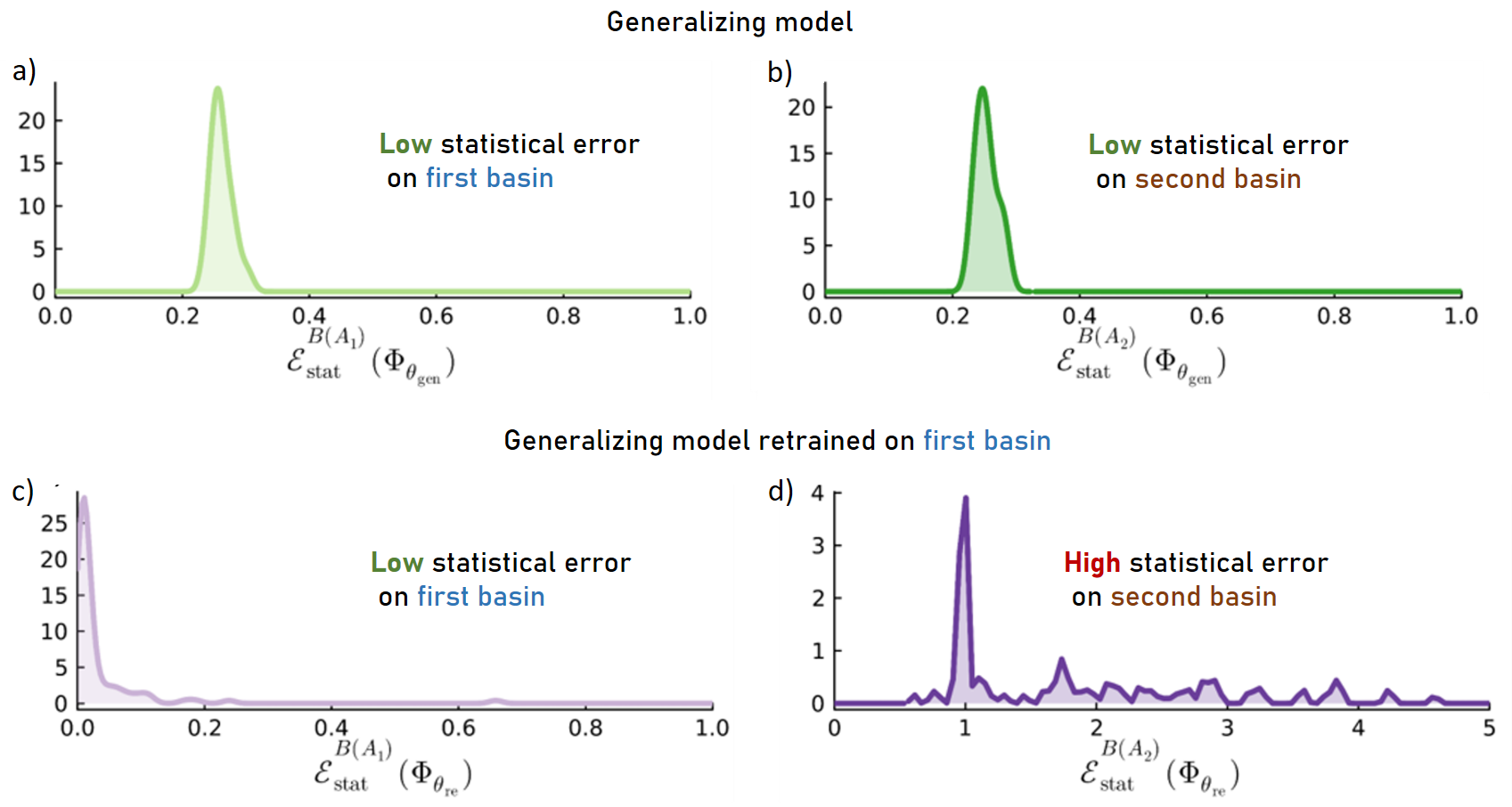}
\end{center}
\caption{Similar to Fig. \ref{fig_relearning}a, this figure illustrates the statistical error of generalizing and retrained models for the multistable Lorenz-like system described by Eq. \eqref{eq:multistable_lorenz}. The upper two density plots depict the statistical error of generalizing models on $B(A_1)$ and $B(A_2)$. Meanwhile, the lower row illustrates the same for retrained models. A surge in error is observed for $B(A_2)$ in the retrained models, suggesting an unlearning of the second attractor. This finding confirms that the results presented in the main text can be reproduced for a ground-truth system with completely different dynamics.}
\label{fig:multistable_lorenz_saddles}
\end{figure}

\begin{figure}[h!]
  \centering \includegraphics[width=1.0\textwidth]{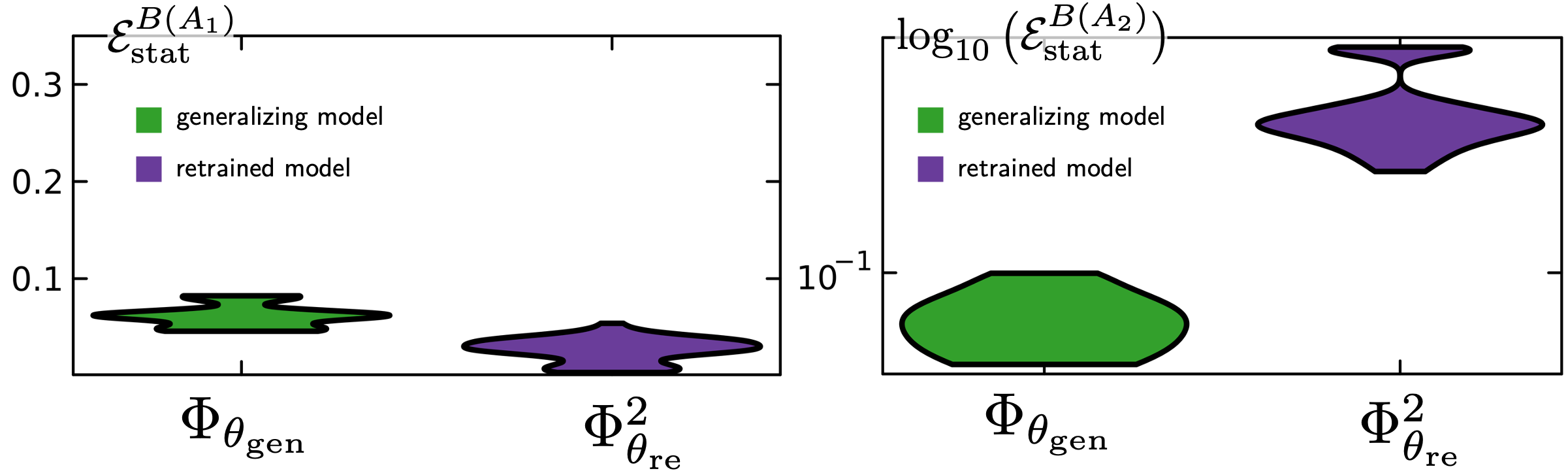}
  \caption{Statistical error distribution on basins $B(A_1)$ and $B(A_2)$ for $10$ generalizing models (green) and $10 \times 5$ models retrained (purple) using only $B(A_1)$ data, similar to Fig. \ref{fig_relearning}a in the main text, but based on a $6d$ Lorenz-96 system \cite{pelzer_finite_2020} of the form $x_j = x_{j-1} \cdot (x_j-x_{j-1})-x_j + F, \ j=1 \dots 6$, where periodic boundary conditions are applied. Using $F=0.654502$, the system has two coexisting chaotic attractors (see sect. 4.3 in \cite{pelzer_finite_2020} for more details on the system).  
  After retraining on data from $B(A_1)$, the models effectively unlearn the dynamics on $B(A_2)$.}
  \label{fig:relearn_l96}
\end{figure}

\subsection{Assessing Sharpness of Minima}
\label{appx_sharpmin}
In order to assess the volume of minima, we employ a sampling-based approach. Following \citet{huang_understanding_2020}, we randomly select a vector $\vtheta'$ from the parameter space $\Theta$ with dimensionality $d$ within the hyper-sphere $\mathcal{S}^{d-1}_r$ with radius $r$. Subsequently, we evaluate the loss of models with parameters along a straight line in parameter space given by
\begin{equation}
    \vtheta_{\mathrm{min}}+ a(\vtheta'- \vtheta_{\mathrm{min}}).
\end{equation}
Here, $a$ takes values in the interval $[0,1]$. We define the threshold value $a=a_{th}$ as the one at which the loss of the corresponding model exceeds the predefined threshold $\ell > \ell (1+p_{th})$. We tested a threshold of $p_{th}=1 \%$ and $p_{th}=5 \%$ for our experiments. However, as the results in Fig. \ref{fig_radius} and \ref{appx::minima_radii} indicate, the results obtained are not overly sensitive to this hyperparameter. The resulting threshold value $a_{th}$ yields an estimate of the minimum radius in the direction of $\vtheta'$, given by $r\cdot a_{th}$. This radius serves as a lower bound for the minimum volume $V=\frac{\pi^{n / 2}}{\Gamma(1+n / 2)} \mathbb{E}_{\vtheta}\left[r^{d}(\vtheta)\right]$, given by:
\begin{align*}
    \log V &= \frac{d}{2} \log \pi - \log \Gamma(\frac{d}{2} + 1) + \log \mathbb{E}_{\bm{\phi} \sim \mathcal{U}}[r^d(\vtheta)] \\
    &\geq \frac{d}{2} \log \pi - \log \Gamma(\frac{d}{2} + 1) + \mathbb{E}_{\bm{\vtheta} \sim \mathcal{U}}[d\log r(\bm{\vtheta})]\\
    &\approx \frac{d}{2} \log \pi - \log \Gamma(\frac{d}{2} + 1) + \frac{d}{N}\sum_{i=1}^N \log r(\bm{\vtheta}_i),
\end{align*}
where we used Jensen's inequality to pull the logarithm into the expectation, and $\Gamma(\frac{d}{2} + 1)$ is Euler's gamma function.

\begin{figure}[!htb]
  \centering \includegraphics[width=0.99\textwidth]{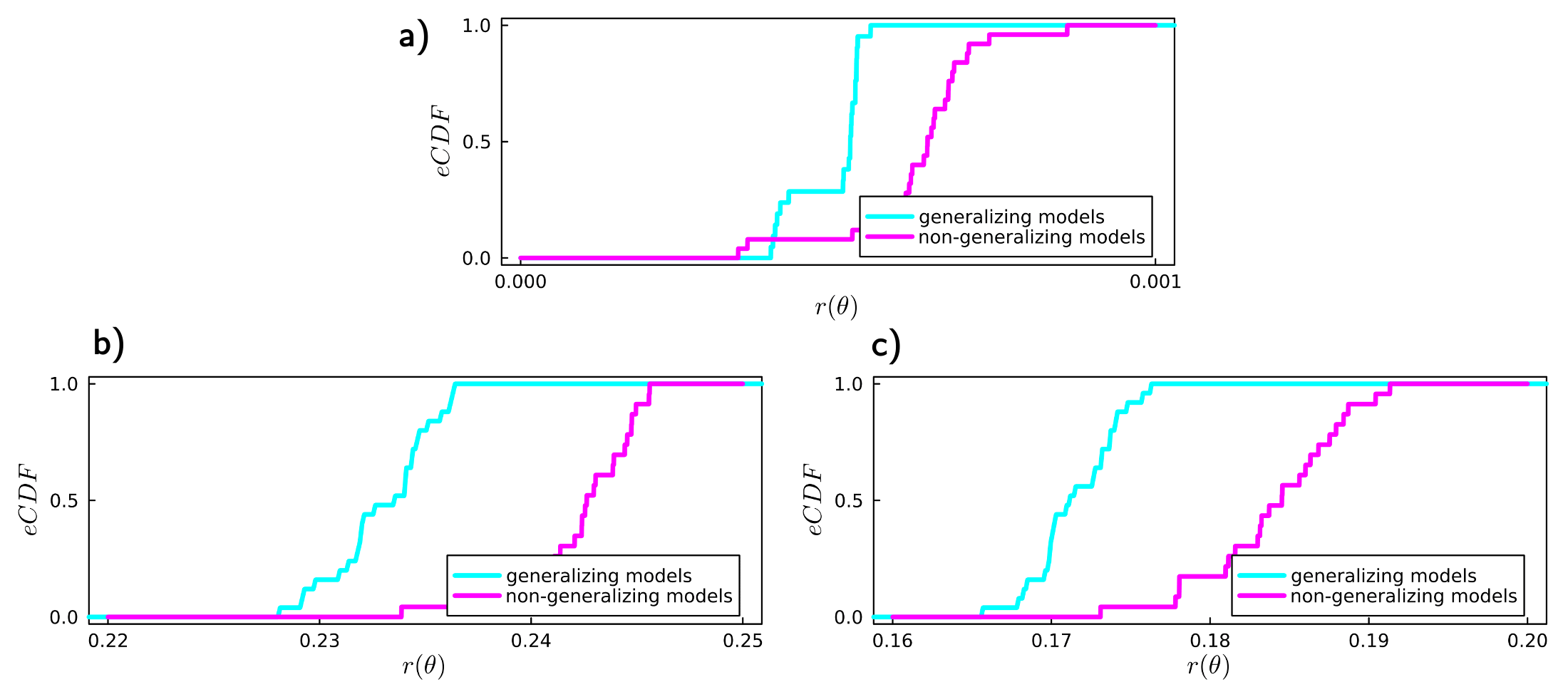}
  \caption{ \textbf{a)} Similar results as in Fig. \ref{fig_radius} for the Duffing system, but for a lower threshold $p_{th}=1 \%$, as described in Appx. \ref{appx_sharpmin}. \textbf{b)} Similar graphs as in Fig. \ref{fig_radius} for the multistable Lorenz system (Eq. \eqref{eq:multistable_lorenz}) for $p_{th}=1 \%$. \textbf{c)} Same as b) but for $p_{th}=5 \%$.}
  \label{appx::minima_radii}
  \normalsize
\end{figure}

\paragraph{Radius estimates for saddle points} \label{appx:saddles_radius}
It is widely acknowledged that many critical points to which stochastic gradient descent (SGD) converges in high-dimensional spaces are saddle points \cite{chaudhari_entropy-sgd_2019}. Mathematically, saddle points lack a well-defined radius since there exist directions in which the loss decreases. Despite this, estimation techniques for the radius, such as the one proposed here, are commonly applied \cite{huang_understanding_2020}. Our convergence analysis supports the viability of our method, as demonstrated in Fig. \ref{appx_rad}a, where the estimation of the radius converges after approximately $3000$ randomly drawn samples. A possible explanation for this convergence could be the prevalence of positive directions. As illustrated in Table \ref{appx_table_eigendirections}, the Hessian has more positive eigendirections than zero or negative ones. This could lead to a much larger volume of parameters around the minima having an ascending loss. Further support for this hypothesis is provided in Fig. \ref{appx_rad}b, where random sampling of parameter vectors around minima, as described earlier, results more frequently in ascending than flat curves.

\begin{figure}[!htb]
  \centering \includegraphics[width=0.99\textwidth]{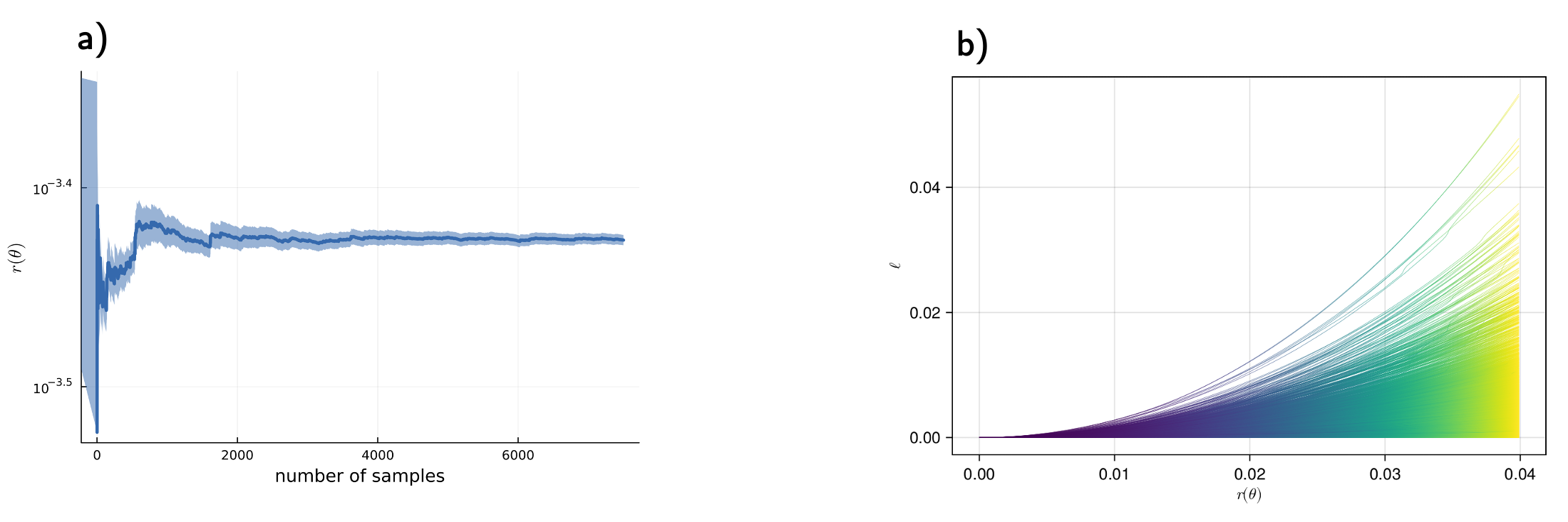}
  \caption{  \textbf{a)} Average radius as a function of the number of samples drawn from the loss landscape, indicating convergence to a constant radius at around $3000$ samples. \textbf{b)} Loss as a function of the radius for a shPLRNN $(N=2, M=100)$ trained on the Duffing system (Eq. \eqref{eq:duffing_vf}) and $5000$ sampled points. Curves are with kernel density smoothing.}
  \label{appx_rad}
  \normalsize
\end{figure}

\clearpage
\newpage

\section{Proofs}
\subsection{Proof of Theorem \ref{th_sensmulti}}
\label{app_proofthmulti}
We first clarify the mathematical interpretation of 'not reconstructed'. We consider two different scenarios, assuming the ground truth model $\Phi$ to be given (fixed):
\begin{itemize}
    \item[(i)] All trajectories starting in $B(A_k)$ converge to another reconstructed attractor $A_j, \ j \neq k$. This is often the case for trained models, as illustrated in Fig. \ref{fig_bench} and Fig. \ref{fig_bench_lorenz_multi}, and corresponds to a missing basin. In this case $\mathcal{E}^{M_{\mathrm{test}}}_{\mathrm{gen}}(\Phi_R) \propto \operatorname{vol}(B(A_k))$.
    \item[(ii)] There is an open set $B(A_{n+1}) \subset M$ corresponding to a new basin of a new attractor $A_{n+1}$ of $\Phi_R$, where we further assume $A_i \cap B(A_{n+1})= \emptyset , \quad \forall i \leq n$. This is the case when some additional dynamics is learned which is not present in the ground-truth system. In this case $\mathcal{E}^{M_{\mathrm{test}}}_{\mathrm{gen}}(\Phi_R) \propto \operatorname{vol}(B(A_{n+1}))$.
\end{itemize}

\paragraph{Proof of (i)}
We assume the decomposition as in Eq. \eqref{eq_statespacedecomp} for the ground-truth system $\Phi$:
\begin{align}
    M = \sqcup_{e=1}^N B_e \sqcup \Tilde{M} \quad \textnormal{such that} \quad \mu\big( \Tilde{M}\big) = 0
\end{align}
We will first prove the theorem for the \textbf{statistical error}. First, we want to show that the  error between the occupation measure of the ground-truth system and the reconstructed system on $B(A_k)$ is non-zero:
\begin{align}\label{eq_appx_eps}
 \textrm{SW}_1(\mu_{\bm{x}, T}^{\Phi}, \mu_{\bm{x}, T}^{\Phi_R}) \neq 0 \quad \forall \vx \in B(A_k).
\end{align}
To do so, we take some $\vx \in B(A_k)$. By assumption $\exists j\neq k: \omega(\vx,\Phi_R) \subseteq A_j$. We take some open subset  $U \supset A_j $ of $B(A_j)$, which has to exist as $B(A_j)$ is open. But for $\Phi$ it still holds that $ \omega(\vx,\Phi) \subseteq A_k$. It follows that
\begin{align}
     \mu_{\bm{x}, T}^{\Phi_R}(U) \neq \mu_{\bm{x}, T}^{\Phi}(U) ,
\end{align}
as, by definition of an attractor, there is some time $T'$ such that $\Phi_R(T',\vx)$ enters $U$, while $\Phi(t, \vx)$ never does for any $t$. Also note that $\mu_{\bm{x}, T}^{\Phi_R}(U)\neq 0$ as $\Phi_R$ enters $U$, making the occupation measure of $U$ non-zero. Consequently, we found a Borel set on which the two occupation measures disagree, hence  $ \mu_{\bm{x}, T}^{\Phi_R}\neq  \mu_{\bm{x}, T}^{\Phi}$. This construction can be repeated for any $\vx \in B(A_k)$.

Since $\textrm{SW}_1$  is a metric on the space of measures \cite{kolouri_sliced_2016}, this implies
\begin{align}
    \forall \vx \in B(A_k), \quad  \textrm{SW}_1(\mu_{\bm{x}, T}^{\Phi}, \mu_{\bm{x}, T}^{\Phi_R}) \neq 0 .
\end{align}
Since $B(A_k)$ is connected, also $\overline{B(A_k)}$ is connected. As $\overline{B(A_k)}$ is closed and $M$ compact, also $\overline{B(A_k)}$ is compact. By the mean-value theorem for integrals, there exists some $\vx' \in \overline{B(A_k)}$ such that
\begin{align}
     \mathcal{E}_{stat}^{B(A_k)} \big( \Phi_R  \big)= \textrm{SW}_1(\mu_{\bm{x'}, T}^{\Phi}, \mu_{\bm{x'}, T}^{\Phi_R}) \cdot  \operatorname{vol}(\overline{B(A_k)}).
\end{align}
 $\textrm{SW}_1(\mu_{\bm{x'}, T}^{\Phi}, \mu_{\bm{x'}, T}^{\Phi_R})$ is a constant, consequently $\mathcal{E}_{\mathrm{stat}}^{B(A_k)} \propto   \operatorname{vol}(\overline{B(A_k)})$ which proves the theorem.

We go on to prove the statement for the \textbf{topological error}. Denote by $D^n_r(\vy)$ a unit ball of radius $r$ in $\mathbb{R}^n$ centered on $\vy$.  Again, we take some $\vx \in B(A_k)$ and by assumption $\exists j\neq k: \omega(\vx,\Phi_R) \subseteq A_j$, while for $\Phi$ it holds that $ \omega(\vx,\Phi) \subseteq A_k$. As $M$ is equipped with the Euclidean distance, the Hausdorff distance between two sets measures the farthest possible Euclidean distance between a point in one set to the closest point in the other set. Denote by $S$ the set of points in $A_j$ being $\varepsilon_{d_H}$-close to $A_k$, 
\begin{align}
   S=\{\vy \in A_j| d_H(\{\vy\},A_k) \leq \varepsilon_{d_H} \} 
\end{align}
For reasonable choices of our hyperparameter $\varepsilon_{d_H}$ this set will be empty. Only if both attractors $A_j, A_k,$ lie very close to the boundary separating their basins of attraction, it could be non-zero. 
In this case we have 
\begin{align}
   d_H(\omega(\vx, \Phi_R ),  \, \omega(\vx, \Phi)) \leq \varepsilon_{d_H},  \quad \forall \vx \in \cup_{\vy \in S} D^n_{\varepsilon_{d_H}}(\vy)
\end{align}
and 
\begin{align}
    d_H(\omega(\vx, \Phi_R ),  \, \omega(\vx, \Phi)) > \varepsilon_{d_H}, \quad \forall \vx \in B(A_k)\backslash \cup_{\vy \in S}D^n_{\varepsilon_{d_H}}(\vy) .
\end{align}
Hence, the indicator function in Eq. \eqref{eq:etop} $\mathbbm{1}_{\Phi_R(\vx)}$ is $0$ on the set $B(A_k)\backslash\cup_{\vy \in S}D^n_{\varepsilon_{d_H}}(\vy)$, as the condition for the closeness of limit sets in the Hausdorff metric is violated.

Consequently,
\begin{align}
    \mathcal{E}^{M}_{\text{top}}(\Phi_R) = 1 - \frac{1}{\text{vol}(M)} \int_{ M} \mathbbm{1}_{\Phi_R}(\vx) \, d\vx =  \frac{\mathrm{vol}(B(A_k))-\mathrm{vol}(\cup_{\vy \in S}D^n_{\varepsilon_{d_H}}(\vy))}{\mathrm{vol}(M)} \propto   \frac{\mathrm{vol}(B(A_k))}{\mathrm{vol}(M)}
\end{align}
This result proves the theorem when choosing $M_{\mathrm{test}}=M$, given a choice of $\varepsilon_{d_H}$ that ensures that $\mathrm{vol}(B(A_k)) \gg \mathrm{vol}(\cup_{\vy \in S}D^n_{\varepsilon_{d_H}}(\vy))$. This should be generally fulfilled by a reasonably small choice of $\varepsilon_{d_H}$.\\

\paragraph{Proof of (ii)}
The proof is analogous to the one of (i) if we replace $A_k$ with $A_{n+1}$, but the main steps will be stated again for completeness. For the \textbf{statistical error} we want to prove a result similar to Eq. \eqref{eq_appx_eps}:
\begin{align}
 \textrm{SW}_1(\mu_{\bm{x}, T}^{\Phi}, \mu_{\bm{x}, T}^{\Phi_R})\neq 0 \quad \forall \vx \in B(A_{n+1}).
\end{align}
Assume some $\vx \in B(A_{n+1})$. We know that $\nexists j\leq n: \omega(\vx,\Phi_R) \subseteq A_j$, since by assumption $A_i \cap B(A_{n+1})= \emptyset \quad \forall i \leq n$. That is, while $\omega(\vx,\Phi_R) \subset A_{n+1}$, for $\Phi$ we have $\exists j\leq n:\omega(\vx,\Phi) \subseteq A_j$. 
In accordance with the proof of (i), we can construct a set $U \supset A_{n+1} $ and conclude with a similar line of arguments as above that
\begin{align}
     \textrm{SW}_1(\mu_{\bm{x}, T}^{\Phi}, \mu_{\bm{x}, T}^{\Phi_R}) \neq 0 \quad  \forall \vx \in B(A_{n+1}).
\end{align}
Using the mean-value theorem for integrals we can conclude
\begin{align}
    \mathcal{E}_{\mathrm{stat}}^{B(A_{n+1})}(\Phi_R) \propto \mathrm{vol}(B(A_{n+1}))
\end{align}
proving the statement if we set $M_{\mathrm{test}}=B(A_{n+1})$.\\
For the \textbf{topological error} we take an $\vx \in B(A_{n+1})$. We know that $\omega(\vx,\Phi_R) \in A_{n+1}$ and $ \exists j\leq n:\omega(\vx,\Phi) \subseteq A_j$. In the construction of the set $S$, we need to be cautious as $B(A_{n+1})$ could have many neighbouring basins. Accordingly, we have to construct a seperate set $S_i$ for each attractor. We denote by $S_i$ the set of points in $A_i$, $ \varepsilon_{d_H}$-close to $A_{n+1}$, 
\begin{align}
   S_i=\{\vy \in A_i| d_H(\{\vy\},A_{n+1}) \leq \varepsilon_{d_H} \} , \quad i \leq n
\end{align}
Again, each $S_i$ will be empty for reasonable choices of $\varepsilon_{d_H}$ and only non-empty for systems where $A_i$ lies very close to $A_{n+1}$. Now, by an argument analogous to (i), we can conclude that
\begin{align}
    \mathcal{E}^{M}_{\text{top}}(\Phi_R)   =  \frac{\mathrm{vol}(B(A_{n+1}))-\mathrm{vol}(\cup_{i=1}^n\cup_{\vy \in S_i}D^n_{\varepsilon_{d_H}}(\vy))}{\mathrm{vol}(M)} \propto   \frac{\mathrm{vol}(B(A_{n+1}))}{\mathrm{vol}(M)}
\end{align}
proving the result when choosing $M_{\mathrm{test}}=M$, given a reasonable choice of $\varepsilon_{d_H}$ (as discussed in (i)). 

\subsection{Proof of Theorem \ref{th_sc1}}
\label{app_proof_sindy}

For the proof of Theorem \ref{th_sc1}, we need the following definitions:

\begin{definition}

Given a trajectory $\Gamma_{\vx_0}$, we define the graph of this trajectory by 
\begin{equation}
	\Omega_{\vx_0}=
 I^f_{\vx_0} \times \Gamma_{\vx_0}.
\end{equation}

\end{definition}

\begin{definition}
	We define the following map, mapping the hypothesis class $\mathcal{H}$, a set of initial conditions and a set of times to the graph of the solution:
\begin{align}\label{eq_sigma}
 \sigma \, : \, & \mathcal{H} \times \mathbb{R}^{n }\times \mathbb{R} \rightarrow \mathbb{R} \times M, \quad (f,\vx_0,I^f_{\vx_0}) \mapsto \Omega_{\vx_0}    
\end{align}
 %---------------------
\end{definition}

The proof of theorem \ref{th_sc1} is based on the following lemmata: 
\begin{itemize}
    \item[(i)] Using Lemma \ref{app_lemma1}, we will first show that the set of solutions to the parameter estimation in SINDy (or any library-based algorithm) which is based on a minimization problem, can be rewritten as the pre-image $\pi_1(\sigma^{-1}(\Omega_{\vx_0}))$. $\pi_1$ denotes the projection on the first argument of the pre-image.
    \item[(ii)] Lemma \ref{appx_lemma2} then establishes how the pre-image and the associated minimization problem can be rewritten as a linear (matrix) equation.
    \item[(iii)] Lemma \ref{lemma_3} shows under which conditions this equation will have a unique solution.
    \item[(iv)] Finally, Lemma \ref{lemma_4} is used to characterize the set $\mathcal{H}_0$ for $\mathcal{H}=\mathcal{B}_L$, which will be used to prove the theorem.
\end{itemize}

 \begin{lemma}
 \label{app_lemma1}
 	Let $\Gamma_{\vx_0}=\{\vx(t)| t \in I_{\vx_0}^f\}$ be the solution (with graph $\Omega_{\vx_0}$) of some ground-truth VF $f_{GT} \in \mathcal{B}_L$, with parameters $\vtheta_{GT}$, $f_{GT}=f(\vx; \vtheta_{GT})$. For linearly parameterized function spaces (see Eq. \eqref{eq_funcsp}) like $\mathcal{B}_L$ we can write the (first projection of the) pre-image of a single trajectory as
 \begin{align}\label{eq_preim}
\pi_1(\sigma^{-1}(\Omega_{\vx_0}))&= \left\{ f \in \mathcal{B}_L |\Delta_j= \int_{t_0}^{t_1}  \left(\vx_j(t)-\vx_j(0)-\int_{0}^{t}f_{j}(\vx(s);\vtheta)ds \right)^2 dt=0,  \quad j \in \{1,...,n\} \right\}\\
 &= \min_{\vtheta} \{f(\vx; \vtheta)| \int_{t_0}^{t_1}\|\dot{\vx}(t)-f(\vx;\vtheta) \|_{L^2}^2 dt \}\label{eq_preim2}
  \end{align}
Note that the second equation exactly corresponds to the minimization problem stated by SINDy (without regularization).
   \end{lemma}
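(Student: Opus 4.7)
The plan is to unpack both sets in the lemma and show they coincide by chaining through the pointwise characterization ``the trajectory generated by $f$ from $\vx_0$ equals $\Gamma_{\vx_0}$.'' For the first equality, I would start from the definition of $\sigma$ in Eq.~\eqref{eq_sigma}: $f \in \pi_1(\sigma^{-1}(\Omega_{\vx_0}))$ iff the graph of the solution of $\dot{\vx} = f(\vx;\vtheta)$ with initial condition $\vx_0$ coincides with $\Omega_{\vx_0}$. By the Picard–Lindel\"of integral form, this is equivalent to the pointwise condition $\vx_j(t) - \vx_j(0) - \int_{0}^{t} f_j(\vx(s);\vtheta)\,ds = 0$ for every $t \in I^f_{\vx_0}$ and every $j \in \{1,\dots,n\}$. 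Since the bracketed quantity is continuous in $t$ (as $f \in \mathcal{B}_L \subset \mathcal{X}^1(M)$), its square is a non-negative continuous function of $t$, which integrates to zero over $[t_0,t_1]$ iff it vanishes identically. Hence $\Delta_j = 0$ for all $j$ is equivalent to the pointwise integral identity above, giving the first equality.

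For the second equality, I would exploit that the ground-truth trajectory satisfies $\dot{\vx}(t) = f_{GT}(\vx(t);\vtheta_{GT})$ with $f_{GT} \in \mathcal{B}_L$. The functional $J(\vtheta) := \int_{t_0}^{t_1}\|\dot{\vx}(t) - f(\vx(t);\vtheta)\|_{L^2}^2 \, dt$ is therefore non-negative and attains the value $0$ at $\vtheta_{GT}$, so its argmin set is exactly the zero-level set $\{f \in \mathcal{B}_L : f(\vx(t);\vtheta) = \dot{\vx}(t) \text{ for a.e.\ } t \in [t_0,t_1]\}$. By continuity of both sides along the $C^1$ solution $\vx(t)$, the a.e.\ equality upgrades to an equality at every $t$. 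Integrating $f_j(\vx(s);\vtheta) = \dot{\vx}_j(s)$ from $0$ to $t$ via the fundamental theorem of calculus yields the defining identity $\int_0^t f_j(\vx(s);\vtheta)\,ds = \vx_j(t) - \vx_j(0)$ of the first set; conversely, differentiating that identity in $t$ (valid because $s \mapsto f_j(\vx(s);\vtheta)$ is continuous) recovers the pointwise ODE. This closes the chain of equivalences.

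The only subtle step is the passage between the pointwise ODE and the integral characterization, which requires knowing that $\vx(t)$ is $C^1$ and that $f_j(\vx(\cdot);\vtheta)$ is continuous; both follow from $f \in \mathcal{B}_L \subset \mathcal{X}^1(M)$ together with classical ODE existence theory. A secondary point worth flagging is the slightly informal ``$\min$'' in Eq.~\eqref{eq_preim2}, which I would interpret as the argmin set: since $J \geq 0$ and $J(\vtheta_{GT}) = 0$, the argmin set coincides with the zero-level set of $J$, and that is what the argument above actually characterizes. Everything else reduces to bookkeeping between the integral equation, its derivative, and the squared-residual functional.
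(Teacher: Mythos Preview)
Your proposal is correct and follows essentially the same route as the paper: both arguments pass through the Picard--Lindel\"of integral form, use that a nonnegative continuous function integrates to zero iff it vanishes identically, and identify the argmin set of $J$ with its zero-level set by exploiting $J(\vtheta_{GT})=0$. Your version is in fact slightly more explicit about the regularity hypotheses and the intended reading of the ``$\min$'' as an argmin, but the underlying chain of equivalences is the same.
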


\begin{proof}
By the definition of Eq. \eqref{eq_sigma}, it holds that
\begin{equation}
\pi_1(\sigma^{-1}(\Omega_{\vx_0}))= \left\{ f(\vx;\vtheta) \in \mathcal{B}_L \, | \, \dot{\vx}(t)=f(\vx(t);\vtheta), \ \forall t \in I^f_{\vx_0}\quad \& \quad \vx(0)=\vx_0 \right\}.
\end{equation}
The following equivalence holds for all $j \in \{1,...,n\}$:
\begin{align}
&\quad \quad \quad \dot{x}_j(t)=f_{j}(\vx(t);\vtheta), \quad x_j(0)=\vx_{0,j}\\
 &\Leftrightarrow    \quad  x_j(t)-x_j(0)=\int_{0}^{t}f_{j}(\vx(s);\vtheta)ds \label{preim} \\ 
     &\Leftrightarrow    \quad \int_{t_0}^{t_1}  \left(x_j(t)-x_j(0)-\int_{0}^{t}f_{j}(\vx(s);\vtheta)ds \right)^2 dt=0   \label{appx_eq_ss}
\end{align}
	where we used in the last step that for a continuous function $f$ with $f(x) \geq 0, \forall x \in I \subseteq \mathbb{R}$ we have $\int f(x) dx=0 \quad \Rightarrow \quad f(x)=0$. This establishes Eq. \eqref{eq_preim}.\\ 
  For Eq. \eqref{eq_preim2}, let us assume $f \in  \pi_1(\sigma^{-1}(\Omega_{\vx_0}))$. By the definition of $\sigma$, we note that $f$ satisfies  for any $j \leq n$
  \begin{align}
      &\dot{x}_j(t)=f_{j}(\vx(t);\vtheta), \quad x_j(0)=\vx_{0,j}\\
      \Leftrightarrow \quad  &\dot{x}_j(t)-f_{j}(\vx(t);\vtheta)=0, \quad x_j(0)=\vx_{0,j}.
  \end{align}
  
By the same argument we used to get from Eq. \eqref{preim} to Eq. \eqref{appx_eq_ss}, it holds that $f$ fulfills
  \begin{align}
      \int_{t_0}^{t_1}\|\dot{\vx}(t)-f(\vx;\vtheta) \|_{L^2}^2 dt = 0 .
  \end{align}
 As the $L^2$-norm (its square) $\|\cdot\|_{L^2}^2$ is a positive function, the minimum value it can attain is $0$.  Consequently, $f \in  \min_{\vtheta} \{f(\vx; \vtheta)| \int_{t_0}^{t_1}\|\dot{\vx}(t)-f(\vx;\vtheta) \|_{L^2}^2dt\} $.
   For the other direction of the set-inclusion, assume $f \in  \min_{\vtheta} \{f(\vx; \vtheta)| \int_{t_0}^{t_1}\|\dot{\vx}(t)-f(\vx;\vtheta) \|_{L^2}^2 dt\} $. In general, $f$ could be different from $f_{GT}$ as there could exist many different $f$ solving the minimization problem. However, for $f_{GT}$ we know that the following holds:
   \begin{align}
        \int_{t_0}^{t_1}\|\dot{\vx}(t)-f_{GT}(\vx;\vtheta_{GT}) \|_{L^2}^2dt = 0.
   \end{align}
   
   It follows that $\forall f \in \min_{\vtheta} \{f(\vx; \vtheta)| \int_{t_0}^{t_1}\|\dot{\vx}(t)-f(\vx;\vtheta) \|_2^2dt\}$ we must have 
   \begin{align}
        \int_{t_0}^{t_1}\|\dot{\vx}(t)-f(\vx;\vtheta) \|_{L^2}^2dt = 0 ,
   \end{align}
   as the minimum value for $\int_{t_0}^{t_1}\|\dot{\vx}(t)-f(\vx;\vtheta) \|_{L^2}^2 dt$  is zero when choosing $\vtheta=\vtheta_{GT}$.
This establishes that any $f \in \min_{\vtheta} \{f(\vx; \vtheta)| \int_{t_0}^{t_1}\|\dot{\vx}(t)-f(\vx;\vtheta) \|_{L^2}^2dt\}$ has $\vx(t)$ as a solution. By definition of $\sigma$, we conclude that $f \in  \pi_1(\sigma^{-1}(\Omega_{\vx_0}))$.
   
\end{proof}
Based on Lemma \ref{app_lemma1}, studying the solution of the (unregularized) SINDy minimization problem, Eq. \eqref{eq_preim2}, comes down to studying the pre-image $\pi_1(\sigma^{-1}(\Omega_{\vx_0}))$. Further, note that there is a bijection between parameters $\vtheta \in \mathbb{R}^{m \times n}$ and functions $f(\vx; \vtheta)$ for a linearly parameterized function space such as $\mathcal{B}_L$. In the following, we will identify $\vtheta$ with $f(\vx; \vtheta)$, allowing us to write proofs for $f$ in terms of proofs for $\vtheta$.\\
In \citet{article}, the solutions to Eq. \eqref{eq_preim2} were studied for a one-dimensional setting. For higher dimensional settings, this takes the following form: 
\begin{lemma}\label{appx_lemma2}
$f(\vx; \vtheta)$ is an element of $ \pi_1(\sigma^{-1}(\Omega_{\vx_0}))$ 
if and only if $\vtheta$ solves the following matrix equation 
\begin{equation} \label{eq_mal}
	\underbrace{\begin{bmatrix} \left< \Psi _{ 0 }\cdot \Psi_0  \right>  & \left< \Psi _{ 0 }\cdot \Psi_1  \right>  & \cdots  & \left< \Psi _{ 0 }\cdot \Psi_N  \right>  \\ \left< \Psi _{ 1 }\cdot \Psi_0  \right>  & \left< \Psi _{ 1 }\cdot \Psi_1  \right>  &  & \left< \Psi _{ 1 }\cdot \Psi_N  \right>  \\ \vdots  & \vdots  & \ddots  & \vdots  \\ \left< \Psi _{ N }\cdot \Psi_0  \right>  & \left< \Psi _{ N }\cdot \Psi_1  \right>  & \cdots  & \left< \Psi _{ N }\cdot \Psi_N  \right>  \end{bmatrix}}_{\Bar{\Psi}}\underbrace{\begin{bmatrix} \theta_{ 1,j } \\  \theta_{ 2,j } \\ \vdots  \\ \theta_{ N,j } \end{bmatrix}}_{\vtheta_j}=\underbrace{\begin{bmatrix} \left< \Bar{\vx}_j \Psi _{ 0 } \right>  \\ \left< \Bar{\vx} _{ j }\Psi _{ 1 } \right>  \\ \vdots  \\ \left< \Bar{\vx} _{ j }\Psi _{ N } \right>  \end{bmatrix}}_{X_j} \quad \forall j=1,\dots,n
	\end{equation}
 with $\Bar{\vx}_j=\vx_j(t)-\vx(0)$ and 
\begin{align}
  \Psi_k  &=\int_0^t \psi_k(\vx(s))ds  \\
    \left< \Psi_k \cdot \Psi_i\right>&=\int_{t_0}^{t_1} \Psi_k(t) \cdot \Psi_i(t) dt= \int_{t_0}^{t_1}\int_{0}^{t}  \psi_k(\vx(s))ds \cdot \int_{0}^{t}  \psi_i(\vx(s))ds dt .
\end{align}
\end{lemma}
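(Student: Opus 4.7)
The plan is to reduce the characterization of $\pi_1(\sigma^{-1}(\Omega_{\vx_0}))$ given by Lemma \ref{app_lemma1} to a finite-dimensional least-squares problem in the parameters, and then identify the stated matrix equation as the associated normal equations. First, I would substitute the linear parameterization $f_j(\vx;\vtheta) = \sum_{i=1}^m \theta_{i,j}\psi_i(\vx)$ into the condition $\Delta_j = 0$ of Lemma \ref{app_lemma1}. Using Fubini (which is justified since the $\psi_i$ are continuous on the compact trajectory and the time interval is bounded), the inner integral becomes $\int_0^t f_j(\vx(s);\vtheta)\,ds = \sum_{i=1}^m \theta_{i,j} \Psi_i(t)$. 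Hence $\Delta_j$ takes the form of an $L^2$-norm squared on $[t_0,t_1]$ of the residual $\bar{x}_j(t) - \sum_{i=1}^m \theta_{i,j}\Psi_i(t)$.

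Expanding the square and using the bilinearity of the $L^2$-inner product $\langle\cdot,\cdot\rangle$, $\Delta_j$ becomes a quadratic form in $\vtheta_j$:
\begin{equation}
\Delta_j(\vtheta_j) = \langle \bar{x}_j, \bar{x}_j\rangle - 2\,\vtheta_j^\top X_j + \vtheta_j^\top \bar{\Psi}\,\vtheta_j,
\end{equation}
where $\bar{\Psi}$ and $X_j$ are precisely the Gram matrix and right-hand side appearing in the lemma. The Gram matrix $\bar{\Psi}$ is positive semidefinite by construction, so $\Delta_j$ is a convex (indeed non-negative) quadratic in $\vtheta_j$. The crucial observation is that the ground-truth parameter $\vtheta_{GT,j}$ achieves $\Delta_j(\vtheta_{GT,j}) = 0$, so the infimum of $\Delta_j$ is zero and is attained. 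For a convex function with attained minimum, the set of global minimizers coincides with the set of critical points, i.e.\ the zeros of $\nabla_{\vtheta_j}\Delta_j = 2(\bar{\Psi}\,\vtheta_j - X_j)$.

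Thus $\Delta_j(\vtheta_j)=0$ holds if and only if $\bar{\Psi}\,\vtheta_j = X_j$, which is exactly the matrix equation stated in Lemma \ref{appx_lemma2}. Applying this equivalence coordinate-wise for each $j=1,\dots,n$ and combining with Lemma \ref{app_lemma1} yields the claimed if-and-only-if characterization.

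The main obstacle I anticipate is the bookkeeping of justifying that \emph{every} solution of the normal equations produces $\Delta_j=0$ (and not merely a local minimum of $\Delta_j$). Since $\bar{\Psi}$ may be singular when the $\Psi_i$ are linearly dependent in $L^2([t_0,t_1])$, the solution set of the matrix equation may be an affine subspace of positive dimension; the argument works because convexity plus the existence of at least one zero (the ground truth) forces every critical point to also be a global zero. A minor technical point is showing continuity of $\Psi_i(t) = \int_0^t \psi_i(\vx(s))\,ds$ and $\bar{x}_j(t)$ on $[t_0,t_1]$ so that the $L^2$-inner products are well-defined, but this is immediate since $\psi_i \in C^0$ and $\vx(\cdot)$ is continuous along the trajectory.
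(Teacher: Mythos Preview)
Your proposal is correct and follows essentially the same approach as the paper: both recognize $\Delta_j$ as a quadratic in $\vtheta_j$ and identify the matrix equation as its stationarity (normal) equations. The paper simply computes $\partial\Delta_j/\partial\theta_{k,j}=0$ directly and rearranges, whereas you first expand $\Delta_j$ into the explicit quadratic form $\langle\bar{x}_j,\bar{x}_j\rangle - 2\vtheta_j^\top X_j + \vtheta_j^\top\bar{\Psi}\vtheta_j$ and then take the gradient; these are the same computation. Your version is in fact slightly more complete, since the paper's proof only writes down the forward direction (stationarity $\Rightarrow$ matrix equation) without explicitly justifying why every solution of the normal equations yields $\Delta_j=0$, while you close this gap via the convexity of the quadratic and the fact that the ground-truth parameter attains the value zero.
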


\begin{proof}
Consider the stationarity conditions 
\begin{equation*}
	   \frac{\partial \Delta_j}{\partial (\vx_{0})_j}=0 \quad \& \quad \frac{\partial \Delta_j}{\partial \theta_{k,j}}=0\quad \forall k =1,\dots,m \quad \forall j=1,...,n .
\end{equation*}
The second condition leads to
\begin{align}
	 \frac{\partial \Delta_j}{\partial \theta_{k,j}}&= \int_{t_0}^{t_1}  \left( 2\left( \vx_j(t)-\vx_j(0)-\int_{0}^{t}f_{j}(\vx(s);\vtheta)ds\right)   \cdot ( \frac{\partial }{\partial \theta_{k,j}} \int_{0}^{t}f_{j}(\vx(s);\vtheta)ds ) \right)dt \\
     &=\int_{t_0}^{t_{1}} 2\left( \left(\vx_{j}(t)-\vx_{j}(0)\right) \cdot \int^{t}_{0} \psi_{k}(\vx(s))ds-\left( \sum^{m}_{i=1}\theta_{i,j} \int_{0}^{t} \psi_{i}(\vx(s)) ds \right)\cdot \int^{t}_{0} \psi_{k}(\vx(s))ds \right) dt=0.
\end{align}
Using the linearity of the integral this can be rewritten as
\begin{equation}
    \sum^{m}_{i=1}\theta_{i,j} \int_{t_0}^{t_{1}} \left( \int_{0}^{t} \psi_{i}(\vx(s)) ds \cdot \int^{t}_{0} \psi_{k}(\vx(s))ds\right) dt= \int_{t_0}^{t_{1}} \left( \left(\vx_{j}(t)-\vx_{j}(0)\right) \cdot \int^{t}_{0} \psi_{k}(\vx(s))ds\right)dt
\end{equation}
leading to Eq. \eqref{eq_mal}.
\end{proof}

\begin{lemma}
\label{lemma_3}
 Let $\Omega_{\vx_0}$ be the graph of a trajectory from a VF $f \in \mathcal{B}_L$ with $\dot{\vx}(t)=f(\vx(t))$. It holds 
\begin{align}  \label{eq_equo}
\pi_1(\sigma^{-1}(\Omega_{\vx_0})) \textnormal{ is unique } \Longleftrightarrow \nexists \vtheta \in \mathbb{R}^{m }: \ \Lambda(\vx(t))=\sum_{i=1}^{m}\theta_{i}\psi_{i}(\vx(t))=0 \quad \forall t \in I_{\vx_0}^f .
\end{align}
That is, the given trajectory does not solve an algebraic equation in the basis functions. By `unique' we mean that $\pi_1(\sigma^{-1}(\Omega_{\vx_0})) $ contains only one element. 
\end{lemma}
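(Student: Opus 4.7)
The plan is to use Lemma \ref{appx_lemma2} to reduce the uniqueness of $\pi_1(\sigma^{-1}(\Omega_{\vx_0}))$ to invertibility of the matrix $\bar\Psi$, and then to recognise $\bar\Psi$ as a Gram matrix whose kernel corresponds exactly to algebraic dependencies among the basis functions evaluated along $\vx(t)$.

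First I would invoke Lemma \ref{appx_lemma2}: an element $f(\vx;\vtheta) \in \mathcal{B}_L$ lies in $\pi_1(\sigma^{-1}(\Omega_{\vx_0}))$ iff each column $\vtheta_j$ of its parameter matrix solves $\bar\Psi\,\vtheta_j = X_j$. Since the ground-truth vector field provides one such solution, existence is automatic, and uniqueness of $\pi_1(\sigma^{-1}(\Omega_{\vx_0}))$ is equivalent to the statement that $\bar\Psi$ has trivial kernel, i.e., $\bar\Psi\bm{\theta}=\bm{0}$ forces $\bm{\theta}=\bm{0}$.

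Next I would observe that $\bar\Psi_{k,i} = \langle \Psi_k, \Psi_i \rangle_{L^2(I^f_{\vx_0})}$, so $\bar\Psi$ is the Gram matrix of the family $\{\Psi_0,\dots,\Psi_m\} \subset L^2(I^f_{\vx_0})$. A standard fact about Gram matrices then gives that $\bar\Psi$ is singular iff there exists $\bm{\theta} \neq \bm{0}$ with $\sum_i \theta_i \Psi_i(t) = 0$ for almost every $t \in I^f_{\vx_0}$; and because each $\Psi_i(t)=\int_0^t \psi_i(\vx(s))\,ds$ is continuous, this upgrades to vanishing everywhere on $I^f_{\vx_0}$. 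The final step is an application of the fundamental theorem of calculus: since $\Psi_i(0)=0$, the function $t \mapsto \sum_i \theta_i \Psi_i(t)$ vanishes identically on $I^f_{\vx_0}$ iff its derivative $t \mapsto \sum_i \theta_i \psi_i(\vx(t))$ vanishes on $I^f_{\vx_0}$, which is precisely the algebraic-equation condition appearing on the right-hand side of Eq.~\eqref{eq_equo}. Chaining the three equivalences—uniqueness $\Leftrightarrow$ $\ker\bar\Psi$ trivial $\Leftrightarrow$ no nontrivial $L^2$ dependence among the $\Psi_i$ $\Leftrightarrow$ no algebraic relation among the $\psi_i\circ\vx$—yields the lemma.

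The main technical subtlety I anticipate lies in the Gram-matrix step: one must confirm that an $L^2(I^f_{\vx_0})$ Gram matrix of continuous functions is singular iff the generators are pointwise linearly dependent along $\vx$, and that this remains valid when $I^f_{\vx_0}$ is only an open or possibly unbounded interval. A secondary but minor point is to verify that the implication from non-singularity of $\bar\Psi$ to uniqueness of $\vtheta$ in each column $j$ follows despite having $n$ separate linear systems sharing the same matrix, which is immediate once invertibility is established.
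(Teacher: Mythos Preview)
Your proposal is correct and follows essentially the same route as the paper: both reduce uniqueness via Lemma~\ref{appx_lemma2} to $\ker\bar\Psi=\{0\}$, and both then argue that $\bar\Psi\vtheta=0$ iff $\sum_i\theta_i\Psi_i\equiv 0$ iff $\sum_i\theta_i\psi_i(\vx(t))\equiv 0$. The only difference is packaging: you invoke the Gram-matrix fact abstractly, whereas the paper spells it out by taking $\vtheta^\top\bar\Psi\vtheta=\int_{t_0}^{t_1}\bigl(\sum_i\theta_i\Psi_i(t)\bigr)^2\,dt=0$ and then differentiating, which is exactly the computation underlying your ``standard fact''.
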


\begin{proof}
    We directly prove the equivalence stated in Eq. \eqref{eq_equo}. Assuming $\pi_1(\sigma^{-1}(\Omega_{\vx_0}))$ is not unique, by lemma \ref{appx_lemma2} this is equivalent to matrix $\Bar{\Psi}$ being singular. This implies $\ker(\Bar{\Psi}) \neq \{0\}$, which in turn is equivalent to 
	$$\exists \vtheta \in \mathbb{R}^m \backslash \{0\}: \, \Bar{\Psi} \cdot \vtheta=0$$
 
	Hence, each row of the vector $ \Bar{\Psi} \cdot \vtheta$ needs to be zero, implying that \footnote{Note that in this proof we deviate from our standard notation in that  $\vtheta$ does not refer to the full matrix $\vtheta \in \mathbb{R}^{m \times  n}$ as in  Eq. \eqref{eq_funcsp}, but just to a vector in $\mathbb{R}^m$.}
	\begin{equation}\label{eq1}
	\forall k  \leq m \quad: \int_{t_0}^{t_1}\left[  \int_0^t \psi_k(\vx(s)) ds \cdot \int_0^t \sum_{i=1}^{m}\theta_{i}\psi_{i}(\vx(s))ds \right] dt=0
	\end{equation} 	

We need to show that the integral in Eq. \eqref{eq1} can only become zero iff $\sum_{i=1}^{m}\theta_{i}\psi_i(\vx(t))=0 \quad \forall t \in [t_0,t_1]$. By linearity of the integral over $t$, we can choose an arbitrary vector $\bm{\eta} \in \mathbb{R}^m$ and sum its components such that Eq. \eqref{eq1} is equal to the following:

\begin{equation}
	 \int_{t_0}^{t_1} \left[ \sum_{k=1}^m \eta_{k} \int_0^t \psi_k(\vx(s)) ds \cdot \int_0^t \sum_{i=1}^{m}\theta_{i}\psi_{i}(\vx(s))ds \right] dt=0
	\end{equation} 	
We are free to choose $\bm{\eta} =\vtheta$, leading to
\begin{equation}
	 \int_{t_0}^{t_1} \left[ \int_0^t \sum_{i=1}^{m}\vtheta_{i}\psi_{i}(\vx(s))ds \right]^2 dt=0
\end{equation} 
By the same argument as above, as the integral of a continuous, non-negative function can only be zero if the function itself is zero, we have
\begin{equation}
    \int_0^t \sum_{i=1}^{m}\vtheta_{i}\psi_{i}(\vx(s))ds=0 \quad \forall s \in [t_0,t_1] .
\end{equation}
Since this has to be true $\forall t \in [t_0,t_1]$, we finally conclude 
\begin{equation}
      \sum_{i=1}^{m}\vtheta_{i}\psi_{i}(\vx(t))=0 \quad \forall t \in [t_0,t_1]
\end{equation}

\end{proof}
\begin{lemma}
    \label{lemma_4}
   Consider two trajectories, $\Gamma_{\vy_0}=\{\vy(t)| t \in T, \vy(0) = \vy_0\}$ and $\Gamma_{\vx_0}=\{\vx(t)| t \in T,  \vx(0) = \vx_0\}$, with initial conditions $\vx_0$ and $\vy_0$ such that $\vx_0=\vy_0$. We further assume these trajectories arise from two evolution operators associated to the VFs $f, g$,  $\Phi^f, \Phi^g: \mathbb{R} \times M \rightarrow M$ with $\vx(t)=\Phi^f(t,\vx_0)$ and $\vy(t)=\Phi^g(t,\vy_0)$.  If, for all times $T \in \mathbb{R}$ and Borel sets $B \subseteq M$, the occupation measures associated with these trajectories satisfy

\begin{equation}\label{eq_meseq}
\mu_{\vx_{0}, T}(B) = \mu_{\vy_{0}, T}(B) \quad \forall B, \forall T \in \mathbb{R},
\end{equation}
then the underlying trajectories are identical:
\begin{equation}
\Gamma_{\vx_0} = \Gamma_{\vy_0}.
\end{equation}

\end{lemma}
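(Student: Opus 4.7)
The strategy is to exploit that the occupation measures agree for \emph{every} finite horizon $T$ (not merely in the ergodic limit), so that differentiating in $T$ pins down the pointwise values of the trajectories. First, I rewrite the hypothesis in unnormalized form: since $T\cdot\mu_{\vx_0,T}(B)=\int_0^T \mathbbm{1}_B(\vx(s))\,ds$, the assumption $\mu_{\vx_0,T}(B)=\mu_{\vy_0,T}(B)$ for every Borel $B$ and every $T$ is equivalent to
\begin{equation*}
\int_0^T \mathbbm{1}_B(\vx(s))\,ds \;=\; \int_0^T \mathbbm{1}_B(\vy(s))\,ds \qquad \forall B\ \text{Borel},\ \forall T\ge 0.
\end{equation*}
By a standard monotone class / dominated convergence argument, this extends from indicators to all bounded Borel measurable functions, and in particular to every $g\in C_b(M)$:
\begin{equation*}
\int_0^T g(\vx(s))\,ds \;=\; \int_0^T g(\vy(s))\,ds \qquad \forall T\ge 0.
\end{equation*}

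\textbf{Differentiation step.} Since $f,g\in\mathcal{X}^1(M)$, the trajectories $s\mapsto\vx(s)$ and $s\mapsto\vy(s)$ are continuous, so $g\circ\vx$ and $g\circ\vy$ are continuous on $[0,\infty)$. The fundamental theorem of calculus then allows me to differentiate both sides of the above identity with respect to $T$, yielding
\begin{equation*}
g(\vx(T)) \;=\; g(\vy(T)) \qquad \forall T\ge 0,\ \forall g\in C_b(M).
\end{equation*}
Since $C_b(M)$ separates points of $M$ (e.g.\ coordinate projections, or bump functions supported near any point), this forces $\vx(T)=\vy(T)$ for every $T\ge 0$. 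Combined with the shared initial condition $\vx_0=\vy_0$, this gives $\Gamma_{\vx_0}=\Gamma_{\vy_0}$, as claimed.

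\textbf{Anticipated obstacle.} The only subtle point is the extension from indicator functions to continuous test functions before differentiating; this is where one needs measurability, boundedness on the compact state space $M$, and a density/monotone-class argument. Everything after that is immediate from continuity of the trajectories and the fact that continuous bounded functions separate points. A minor care is needed when applying FTC at $T=0$, which is handled by one-sided differentiation; the result then propagates to all $T\ge 0$ by continuity of both sides in $T$.
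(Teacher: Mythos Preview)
Your proof is correct and follows essentially the same route as the paper: pass from equality of the unnormalized occupation measures on all Borel sets to equality of the integrals $\int_0^T \varphi(\vx(s))\,ds=\int_0^T \varphi(\vy(s))\,ds$ for every $\varphi\in C_b(M)$, differentiate in $T$ using continuity of the trajectories, and conclude by the point-separating property of $C_b(M)$. The only cosmetic differences are that you spell out the indicator-to-continuous-function extension and the FTC justification a bit more explicitly (and you reuse the symbol $g$ for both the second vector field and the test function, which you may want to rename).
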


\begin{proof}
The proof of this Lemma \ref{lemma_4} was adapted from \citet{dawkins}. 
As $M$ is compact and Hausdorff and the Lebesgue measure is regular, Eq. \eqref{eq_meseq} is equal to a different condition. Let $\varphi$ be a bounded continuous function $\varphi: M \rightarrow \mathbb{R}$, where we denote the set of all bounded continuous functions by $\mathcal{C}_0(M)$. Then, $\mu_{\vx_{0}, T} = \mu_{\vy_{0}, T}$ implies
\begin{align}
\int_0^T \varphi(\vx(s))\,ds = \int_0^T \varphi(\vy(s))\,ds,\qquad\forall T\ge 0, \ \forall \varphi \in \mathcal{C}_0(M)
\end{align}
where $\vx(t)$ and $\vy(t)$ are the trajectories corresponding to $\Phi^f$ and $\Phi^g$. Differentiating with respect to $T$ yields:
\begin{align}
\varphi(\vx(T)) =   \varphi(\vy(T)),\qquad\forall T\ge 0.
\end{align}
Bounded continuous functions separate points, which means that $\forall \vx_1\neq \vx_2 \in M $ and $\exists \varphi \in\mathcal{C}_0(M): \ \varphi(\vx_1) \neq \varphi(\vx_2)$. It follows that
\begin{align}
\vx(T) =   \vy(T),\qquad\forall T\ge 0.
\end{align}

\end{proof}

\paragraph{Proof of Theorem \ref{th_sc1}}
\label{app_th_sindy}
Based on these four lemmas, we can now prove Theorem \ref{th_sc1}.

\begin{proof}
For the sake of simplicity, we focus on a bistable system where $M_{\mathrm{train}}=B(A_1)$ and $M_{\mathrm{test}}=B(A_2)$, but the generalization to $n$ basins immediately follows. Let us denote the ground truth VF by $f_{GT} \in \mathcal{B}_L$ and write $f_{GT}(\vx; \vtheta_{GT})$ to make the parameter dependence explicit.  
    
Now, consider an arbitrary element $g$ in $\mathcal{B}_{L,0}=\{g \in \mathcal{B}_L| \mathcal{E}^{M_{\mathrm{train}}}_{\mathrm{gen}}(\Phi_g)=0\}$, where $\Phi_g$ denotes the evolution operator associated with $g$. $g \in \mathcal{B}_{L,0}$ implies $\mathcal{E}_{\mathrm{stat}}^{B(A_1)}(\Phi_g)=0$. It holds that 
\begin{align}
    \mathcal{E}_{\mathrm{stat}}^{B(A_1)}(\Phi_g)=0 \ \Longleftrightarrow \ \mu_{\vx_0,T}^{\Phi_f} = \mu_{\vx_0,T}^{\Phi_g} ,
\end{align}
based on $\textrm{SW}_1$ being a metric on the space of measures \cite{kolouri_sliced_2016}. This equality holds for all $T  \in I_{\vx_0}^f$  and for all $\vx_0 \in B(A_1)$. According to Lemma \ref{lemma_4}, this implies that all trajectories in $B(A_1)$ of $f_{GT}$ and $g$ coincide. Further, applying Lemma \ref{lemma_3} and considering the assumption that at least one trajectory does not satisfy an algebraic equation, which we will denote by $\Gamma'_{\vx_0'}$ (with graph $\Omega'_{\vx_0'}$), we can uniquely determine the parameters $\vtheta$ of $g(\vx; \vtheta) \in \mathcal{B}_L$ via $\pi_1(\sigma^{-1}(\Omega'_{\vx'_0}))$. Since $g$ and $f_{GT}$ share the same non-algebraic trajectory, $f$ and $g$ must be identical as $\pi_1(\sigma^{-1}(\Omega_{\vx_0}))$ only contains one element (Lemma \ref{lemma_3}). Consequently,
\begin{align}
    \mathcal{B}_{L,0}=\{f_{GT}\} .
\end{align}
Thus, $\mathcal{B}_{L,0}$ solely consists of $f_{GT}$. Since $f_{GT}$ has zero generalization error on $M$, it follows that any model in $\mathcal{B}_{L,0}$ exhibits zero generalization error on $M$. Therefore, $(\mathcal{B}_L, \mathcal{D})$ is strictly learnable.
\end{proof}

\begin{corollary}
\label{th_sc1_2}
When the observed trajectory $\Gamma_{\vx_0} \subset \mathcal{D}$ solves an algebraic equation, we can nevertheless restrict $\mathcal{B}_L$ to $\mathcal{B}_L' \subset \mathcal{B}_L $ in a way that $(\mathcal{B}_L',\mathcal{D})$ is strictly learnable. 
\end{corollary}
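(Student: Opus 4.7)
The strategy is to exploit Lemma \ref{lemma_3}: the trajectory $\Gamma_{\vx_0}$ solves an algebraic equation in $\mathcal{B}_L$ precisely when the matrix $\bar{\Psi}$ of Lemma \ref{appx_lemma2} has a nontrivial kernel $K$ of dimension $r \geq 1$. To restore identifiability the plan is to restrict the library to a sub-library in which this algebraic dependence is absent, and then invoke Theorem \ref{th_sc1} on the restricted problem.

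First I would form $\bar{\Psi}$ and pick an index set $I \subseteq \{1, \dots, m\}$ of size $|I| = m - r$ such that the columns of $\bar{\Psi}$ indexed by $I$ are linearly independent; such $I$ exists by the rank--nullity theorem. Define the restricted library
\begin{equation*}
\mathcal{B}_L' = \Bigl\{ f_j(\vx; \vtheta) = \sum_{i \in I} \theta_{i,j}\, \psi_i(\vx) \;\Big|\; \forall j,\; \vtheta \in \mathbb{R}^{|I| \times n} \Bigr\}.
\end{equation*}
By construction the restricted family $\{\psi_i : i \in I\}$ is linearly independent as functions of $t$ along $\Gamma_{\vx_0}$, so Lemma \ref{lemma_3} applied to $\mathcal{B}_L'$ immediately yields that $\Gamma_{\vx_0}$ solves no nontrivial algebraic equation in the reduced basis.

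Second, I would argue that $I$ can be chosen so that $f_{GT} \in \mathcal{B}_L'$. The coset $\vtheta^{GT} + K$ parameterizes all VFs in $\mathcal{B}_L$ producing the same trajectory as $f_{GT}$ along $\Gamma_{\vx_0}$. Picking $I$ as the complement of a set $J$ of size $r$ on which the coordinate projection $\pi_J : K \to \mathbb{R}^r$ is an isomorphism forces every such coset to have a unique representative supported in $I$. Under the natural sparsity hypothesis $\mathrm{supp}(\vtheta^{GT}) \subseteq I$, this representative is $f_{GT}$ itself, so $f_{GT} \in \mathcal{B}_L'$. This is exactly the role played in practice by LASSO or sequential thresholding: the sparsity prior singles out an $I$ compatible with the sparse support of the true parameters.

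With both conditions established, Theorem \ref{th_sc1} applies directly to $(\mathcal{B}_L', \mathcal{D})$: since the trajectory solves no algebraic equation in the restricted basis and $f_{GT} \in \mathcal{B}_L'$, the set $\mathcal{H}_0'$ of models with zero reconstruction error on $M_{\mathrm{train}}$ reduces to $\{f_{GT}\}$, which has zero generalization error on $M_{\mathrm{test}}$ by construction. The main obstacle is the sparsity-compatibility step in the middle paragraph: one must guarantee that $\mathcal{B}_L'$ really contains $f_{GT}$ rather than merely an alternative representative of $\vtheta^{GT} + K$ that agrees with $f_{GT}$ only along $\Gamma_{\vx_0}$ but diverges from it on $M_{\mathrm{test}}$. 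This is where a structural prior on $f_{GT}$ (such as sparsity), or equivalently the implicit bias of $\ell_1$-regularized regression, enters essentially.
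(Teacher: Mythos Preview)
Your approach is essentially the same as the paper's: both restore uniqueness by quotienting out $\ker\bar{\Psi}$, which concretely means dropping $r=\dim\ker\bar{\Psi}$ basis functions so that the reduced matrix $\bar{\Psi}'$ is invertible and the linear system $\bar{\Psi}'\vtheta_j'=X_j'$ has a unique solution. The paper phrases this abstractly via the first isomorphism theorem (``$\bar{\Psi}/\ker\bar{\Psi}\to\operatorname{im}\bar{\Psi}$ is an isomorphism''), while you implement it explicitly by selecting an index set $I$ of $m-r$ independent columns; these are the same construction.

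Where you go further than the paper is in flagging the question of whether $f_{GT}\in\mathcal{B}_L'$. The paper's proof stops at ``the solution is unique'' and does not discuss whether that unique representative of the coset $\vtheta^{GT}+K$ actually coincides with $f_{GT}$ globally (rather than merely along $\Gamma_{\vx_0}$), which is what strict learnability on $M_{\mathrm{test}}$ requires. Your observation that this step needs a sparsity-type structural assumption on $f_{GT}$ (or, equivalently, a judicious choice of which columns to drop) is correct and is a point the paper's proof leaves implicit. So your argument is not only aligned with the paper's but is in fact more careful on this issue.
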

\begin{proof}
If $\Gamma_{\vx_0}$ solves an algebraic equation, this means that $\operatorname{ker}(\Bar{\Psi})\neq \{0\}$ (see Lemma \ref{lemma_3}). Hence, Eq. \eqref{eq_mal} 
\begin{align}
    \Bar{\Psi} \cdot\vtheta_j=X_j
\end{align}
does not have a unique solution for $\vtheta_j \neq 0$. However, for linear function spaces, we can use the fundamental theorem of homomorphisms to make $\Bar{\Psi}\backslash \ker(\Bar{\Psi})\rightarrow \operatorname{im} \Bar{\Psi}$ an isomorphism. Then, the system
\begin{align}
    \Bar{\Psi}' \cdot \vtheta_j'=X_j'
\end{align}
has a unique solution, where $\Bar{\Psi}' $ corresponds to a matrix where some basis functions $\psi_{i,j}$ are removed. In the corresponding coefficient vector $\vtheta'_j \in \mathbb{R}^{m'}$ the coefficients for the basis $\psi_{i,j}$ are removed as well, hence $m'< m$. By this process of removing basis functions $\psi_{i,j}$ from $\Bar{\Psi}$, thus restricting $\mathcal{B}_L$ to $B_L'$, we can make the solution unique again.
\end{proof}

\subsection{Proof of Theorem \ref{th_1_main}}
\label{app_proof_th1}

\begin{proof}
    Without loss of generality, we assume we have a bistable system, with $f$ the ground truth VF, $M_{\mathrm{train}}=B(A_1)$ and $M_{\mathrm{test}}=B(A_2)$. We aim to construct an infinite family of functions $G=\{g_{\alpha}|\alpha \in I\}$ with zero reconstruction error on $B(A_1)$ but non-zero error on $B(A_2)$. We denote by $\Phi_{g_{\alpha}}$ the evolution operator associated with $g_{\alpha}$. Let $V'=B(A_2) \backslash A_2$. Due to the assumption that $f$ is not topologically transitive on $B(A_2)$, we can choose some non-empty, open subset $V \subseteq V'$. As $M$ is locally Hausdorff, we can define a compact subset $K \subset V$ such that $K$ itself contains an open set. As $V$ is open, we define a bump function $\Lambda \in \mathcal{C}^{\infty}(M)$ \cite{lee_introduction_2012} that is zero outside of $V$ and equals $1$ on $K$.

The $i$-th component of a VF in $G$ is then defined as 
\begin{align}
    g_{\alpha,i}(\vx)= f_i(\vx)+ \Lambda(\vx)\cdot \left(- f_i(\vx) +s_{\alpha,i}(\vx)\right), \quad i=1,...,n ,
\end{align}
where $\vx_{\alpha} \in K$. As $K$ contains an open set, there are infinitely many different $\vx_{\alpha}$. By construction, $\mathcal{E}_{\mathrm{gen}}^{B(A_1)}(\Phi_{g_{\alpha}})=0$, since $\Lambda$ is zero outside of $V$, thus zero on $B(A_1)$. On $B(A_1)$, $g_{\alpha}$ has the form $ g_{\alpha,i}(\vx)= f_i(\vx)$ ($f$ has zero generalization error as it is the ground-truth VF). In contrast, on $K$, the VFs in $G$ reduce to
\begin{align}
    g_{\alpha}(\vx)=s_{\alpha}(\vx) ,
\end{align}
where we assume $s_{\alpha}$ to be a differentiable VF with an attracting equilibrium point at $\vx_{\alpha}$. This construction is feasible on any compact set $K$ in any dimension. Given that $\Lambda$ is smooth and $s_{\alpha}$ is differentiable, $g_{\alpha}$ remains a member of $\mathcal{X}^1$. 

Thus, $G$ forms an infinite family of VFs, each possessing an attracting equilibrium point at distinct positions in $K$. We will denote by $B(A_{\vx_{\alpha}}) \subseteq V$ the basin of attraction of the equilibrium point $\vx_{\alpha}$. All functions $g_{\alpha}$ have an attractor on $B(A_2)$ different from the one of the ground truth VF $f$. Additionally, $B(A_{\vx_{\alpha}}) \cap A_2 = \emptyset$ by construction of $V$ for any $\alpha$. Thus, we constructed the same setting encountered in the proof of Theorem \ref{th_sensmulti}, case (ii). Using the proof and setting $B(A_{n+1})$ (in the notation of Theorem \ref{th_sensmulti}, case (ii)) to $B(A_{\vx_{\alpha}})$, we can conclude that 
\begin{align}
 \mathcal{E}_{\mathrm{gen}}^{B(A_{\vx_{\alpha}})}(\Phi_{g_{\alpha}}) \geq \varepsilon
\end{align}
with $\varepsilon>0$. Since $V \subset B(A_2)$, it follows that $\mathcal{E}_{\mathrm{gen}}^{B(A_2)}(\Phi_{g_{\alpha}})\neq0$ for all $\alpha \in I$. 
Consequently, $(\mathcal{X}^1,\mathcal{D})$ is not strictly learnable, since we can construct infinitely many functions in $\mathcal{X}^1$ with non-zero generalization error \footnote{For the proof that $(\mathcal{X}^1,\mathcal{D})$ is not strictly learnable, a single VF as constructed above would have sufficed.}.
\end{proof}

\clearpage

\begin{figure}[H]
  \centering \includegraphics[width=1.0\textwidth]{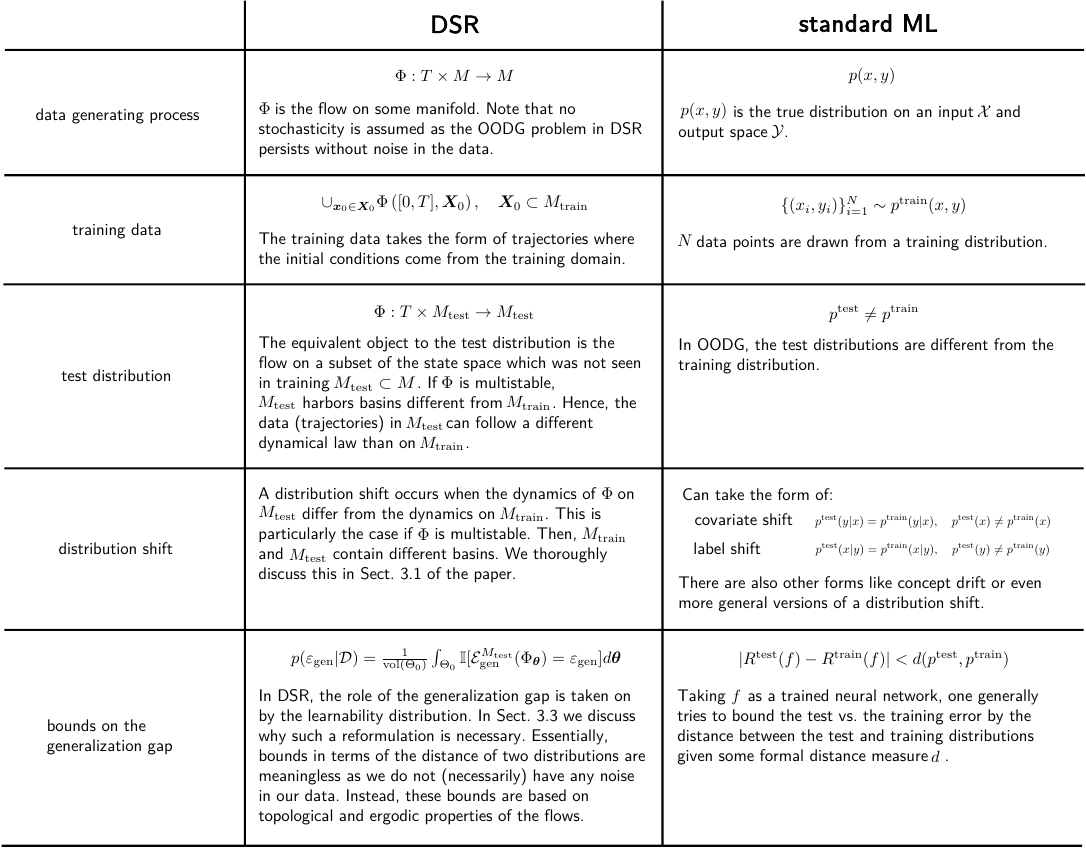}
  \caption{Comparison of OODG in DSR and standard ML.} 
  \label{table_oodg_comparison}
\end{figure}

\newpage

\bibliographybench{benchmark_systems}
\bibliographystylebench{icml2024}

\end{document}